\newcommand{\ei}{\mathbf{e}_i}
\newcommand{\ia}{\mathbb{I}_A}
\newcommand{\ga}{\gamma} 
\newcommand{\alp}{\alpha}
\newcommand{\Expect}{\mathbb{E{}}}
\newcommand{\argmax}{\mathop{\rm argmax}}
\def\eps{{\epsilon}}
\newcommand{\p}{\prod} 
\newcommand{\diag}{\mathop{\bf Diag}}
\newtheorem{thm}{Theorem}[section]
\newtheorem{lem}{Lemma}[section]
\newtheorem{prop}{Proposition}[section]
\newtheorem{asmp}{Assumption}[section]
\newtheorem{defn}{Definition}[section]
\newcommand{\li}{\limits} 
\newcommand{\mdp}{\mathcal{M}}
\newcommand{\state}{\mathcal{S}}
\newcommand{\action}{\mathcal{A}}
\newcommand{\trans}{\mathcal{P}}
\newcommand{\reward}{R}
\def\eqref#1{equation~\ref{#1}}
\def\1{\bm{1}}
\def\eps{{\epsilon}}
\DeclareMathAlphabet{\mathsfit}{\encodingdefault}{\sfdefault}{m}{sl}
\SetMathAlphabet{\mathsfit}{bold}{\encodingdefault}{\sfdefault}{bx}{n}
\title{Provably Efficient Convergence of Primal-Dual Actor-Critic with Nonlinear Function Approximation}
\author[1]{Jing Dong}
\author[2]{Li Shen}
\author[1]{Yinggan Xu}
\author[1]{Baoxiang Wang}
\affil[1]{The Chinese University of Hong Kong, Shenzhen}
\affil[2]{JD Explore Academy}
\affil[ ]{\textit {\{jingdong, yingganxu\}@link.cuhk.edu.cn, mathshenli@gmail.com, bxiangwang@cuhk.edu.cn}}
\date{}
\begin{document}
\maketitle

\begin{abstract}
We study the convergence of the actor-critic algorithm with nonlinear function approximation under a nonconvex-nonconcave primal-dual formulation. Stochastic gradient descent ascent is applied with an adaptive proximal term for robust learning rates. We show the first efficient convergence result with primal-dual actor-critic with a convergence rate of $\mathcal{O}\left(\sqrt{\frac{\ln \left(N d G^2 \right)}{N}}\right)$ under Markovian sampling, where $G$ is the element-wise maximum of the gradient, $N$ is the number of iterations, and $d$ is the dimension of the gradient. Our result is presented with only the Polyak-\L{}ojasiewicz condition for the dual variables, which is easy to verify and applicable to a wide range of reinforcement learning (RL) scenarios. The algorithm and analysis are general enough to be applied to other RL settings, like multi-agent RL. Empirical results on OpenAI Gym continuous control tasks corroborate our theoretical findings.
\end{abstract}

\section{Introduction}
Actor-critic \citep{barto1983neuronlike,barto1989learning,konda1999actor} is one of the most successful algorithms in reinforcement learning. The algorithm features an actor, which learns the optimal policy that maximizes the long-term expected reward through sequential interactions with the environment, and a critic, which learns to approximate a value function that evaluates the performance of a policy.  
The actor-critic method effectively combines the benefits from policy-based algorithms \citep{williams1992simple,sutton2000policy, kakade2001natural,silver2014deterministic} and value-based algorithms \citep{barto1983neuronlike, watkins1992q, sutton1988learning, tesauro1992practical,hester2018deep}. 

Armed with recent developments in deep learning, the actor-critic algorithm gains empirical success in a variety of real applications \citep{haarnoja2018soft,fujimoto2018addressing, iqbal2019actor}. 
However, the underlying theory and limits have yet been fully understood. 
Most previous analyses have their limitations. \citet{castro2010convergent, maei2018convergent} establish asymptotic convergence in the original setting with an unknown sample complexity. Follow-up works that investigate finite-sample performance are conducted with two-timescale updates \citep{wu2020finite, hong2020two, doan2021finitetime} or linear function approximation \citep{xu2020non,xu2021doubly}, where the best known convergence rate is established to be $\mathcal{O}(\epsilon^{-2/3})$. 
It is left open to theoretically justify the actor-critic method's practical achievements in theory in its general setting.

We study a \textit{single-timescale} variant of the actor-critic algorithm with \textit{nonlinear function approximation} based on a minimax optimization formulation that combines the objectives for actor and critic \citep{dai2018boosting}. 
Under tabular or linear function approximation, the objective function could ideally serve as an essential indicator for the convergence and to select learning rates. Under nonlinear function approximation however, the value of the objective function is no longer a useful tool for convergence indication and learning rate selection due to the nonconvex-nonconcave structure. Previous empirical attempts on solving this formulation resort to local convexification techniques such as path regularization, which demands a high computational complexity.  Leveraging the use of proximal functions, which are shown to be effective for minimizing regrets in online learning, we derive an algorithm with an implicit proximal term for a better convergence rate. Our proximal function is chosen in a data-driven way, which is similar to adaptive gradient methods (e.g. AdaGrad \citep{duchi2011adaptive}).
Our adaptive method circumvents this computation cost while alleviating the need for manually tuning learning rates, with a guaranteed convergence rate.

In this paper, we show a convergence rate of $\mathcal{O}\left(\sqrt{\frac{\ln \left(N d G^2\right)}{N}}\right)$ under Markovian sampling and adaptive gradient, where $N$ is the number of total iterations, $d$ is the dimension of the gradient, and $G$ is the element-wise maximum value of the gradient. 
This implies a nearly optimal sample complexity of $\tilde{\mathcal{O}}(\epsilon^{-2})$ with a constant batch size that is independent of $N$ and $\epsilon$.
Our theorems are under Polyak-\L{}ojasiewicz (PL) condition with respect to the dual variable, which is a much weaker assumption than the Minty Variational Inequality (MVI) commonly seen in the nonconvex-nonconcave optimization literature \citep{lin2018solving,liu2019towards}. 
The PL condition can be further lifted at the cost of a polynomial convergence rate \citep{mangoubi2021greedy}. 
Our results show the effectiveness of adaptive gradient on single-timescale actor-critic with nonlinear function approximation, which has been practically deployed to reinforcement learning systems. We conduct extensive evaluations on OpenAI Gym continuous tasks to verify our theoretical findings.

Our analysis is flexible enough to be adapted to other reinforcement learning settings. As an illustration of that, we show that a similar theoretical guarantee of convergence holds for the multi-agent case. To our knowledge, this is the first finite-sample analysis of decentralized primal-dual multi-agent actor-critic reinforcement learning. Interestingly, the derivation points out the importance of communication between agents as more frequent communication accelerates the convergence, which agrees with the practical finding in the multi-agent reinforcement learning literature.

\section{Related Works}

\paragraph{Analysis of Actor-Critic Algorithms}
\begin{table}[th]
    \centering
    \resizebox{0.48\textwidth}{!}{\begin{tabular}{@{}llll@{}}
    \toprule
         & \begin{tabular}[c]{@{}l@{}}Sample\\ Complexity\end{tabular} & Sampling & \begin{tabular}[c]{@{}l@{}}Function \\ Approximation\end{tabular} \\ \midrule
    (1)  & $\mathcal{O}(\epsilon^{-2})$    &  i.i.d  & \begin{tabular}[c]{@{}l@{}}Overparametrized\\ 2-layer NN\end{tabular}                                                                    \\
    (2)  & $\tilde{\mathcal{O}}(\epsilon^{-2.5})$    &   Non-i.i.d.     &  Linear                                                                 \\
    (3)    &   $\mathcal{O}(\epsilon^{-2/3})$             &   Non-i.i.d.       &    Critic is linear \\
    This work     &  $\tilde{\mathcal{O}}(\epsilon^{-2})$  &  Markov         &  Nonlinear                                                                 \\ \bottomrule
    \end{tabular}}
    \caption{Summary of previous studies and this work. (1) \citep{wang2019neural}; (2) \citep{wu2020finite}; (3) \citep{hong2020two}. Note that $\tilde{\mathcal{O}}$ hides the logarithmic factor on $\epsilon$.}
\end{table}

The actor-critic algorithm is first proposed by Konda and Tsitsiklis, which is guaranteed to converge asymptotically \citep{konda1999actor}.
The natural actor-critic variant is later established by Bhatnagar, Sutton, Ghavamzadeh, and Lee with a similar guarantee \citep{bhatnagar2009natural}.

It is not until recent years that the finite-sample performance of the actor-critic algorithm is analyzed. 
\citet{yang2018finite} study the effect of the critic of batch actor-critic with nonlinear function approximation under i.i.d. sampling. They show that each limiting point of the actor updates is affected by the statistical error achieved by the critic with a constant factor.
This requires the critic to perform several rounds of empirical risk minimization under a two-timescale framework.
In the case of natural actor-critic, \citet{wang2019neural} prove a sublinear $\mathcal{O}(N^{-1/2})$ convergence, assuming that samples are independent and the function approximation is an overparametrized two-layer neural network.
In most practical reinforcement learning settings, obtaining independent samples is unrealistic and thus results under Markovian sampling are more desired. 
\citet{chen2021finitesample} extend the results to Markovian sampling and off-policy sampling with a sample complexity of $\mathcal{O}(\epsilon^{-3})$, but the analysis is restricted to linear function approximation.
\citet{wu2020finite} and \citet{xu2020non} both study actor-critic with non-i.i.d. sampling with the two-timescale structure. 
They also both assume that the critic takes a more restricted form of function approximation, such as linear approximation, and the convergence rates established under this assumption is $\tilde{O}(\epsilon^{-2.5})$.

Similar to our minimax optimization formulation, bilevel optimization is adapted to model actor-critic algorithms.
\citet{hong2020two} employ a two-timescale framework under non-i.i.d. sampling, and when the outer problem is strongly convex, the convergence rate is proved to be $\mathcal{O}(N^{-2/3})$.

\paragraph{Optimization Methods for Nonconvex Minimax Problems}
The nonconvex minimax problem serves as a fundamental framework for many machine learning applications, such as generative adversarial networks and actor-critic algorithms.
\citet{thekumparampil2019efficient} study the minimax problem when the objective function is nonconvex but concave. 
While they established a $\tilde{\mathcal{O}}(N^{-2})$ convergence rate, their algorithm admits to a double loop structure with an inner maximization.
\citet{pmlr-v132-abernethy21a} show a linear convergence with a second-order iterative algorithm, for a subclass of ``sufficiently bilinear'' functions.
In comparison, the PL condition we considered is much easier to verify compared to the sufficiently bilinear property and we only require first-order information.

\citet{nouiehed2019solving} consider the minimax problem with one-sided PL inequality and proposes a multi-step gradient descent ascent algorithm. 
Though the algorithm finds the global optimality within $\mathcal{O}(\epsilon^{-2})$ iterations, it requires multiple descents at one iteration. 
\citet{yang2020global} achieve a $\mathcal{O}(\epsilon^{-2})$ sample complexity with one-sided PL condition and alternative gradient descent ascent.
Both works hold only under deterministic gradients and are thus infeasible in practice for reinforcement learning. Without the MVI inequality and PL condition, local convergence results for nonconvex-nonconcave optimization are limited. \citet{mangoubi2021greedy} show a $\text{poly}(\epsilon^{-1})$ convergence rate with second-order information. 

\paragraph{Adaptive Gradient Methods for RL Algorithms}
The adaptive gradient methods largely ease the dull tuning process for primal-dual reinforcement learning and its adaptive nature often induces better empirical performance. 
Despite the common use of adaptive gradient methods in training reinforcement learning agents, limited results have been presented for their theoretical guarantees. 
Temporal difference learning is a popular value-based reinforcement learning method, and 
\citet{sun2020adaptive} show a $O(\epsilon^2 \log^4 (1/\epsilon))/\log (1/\rho)$ convergence rate with AdaGrad (a variant for adaptive gradient \citep{duchi2011adaptive}), where $\rho$ is a measure of the speed that the underlying Markov chain changes. 
The result is restricted to linear function approximation though.

We provide the theoretical guarantee of actor-critic through investigating adaptive gradient methods for its primal-dual formulation.

\paragraph{Analysis of Multi-Agent Reinforcement Learning}
The problem of cooperative multi-agent reinforcement learning has been analyzed mainly through value-based methods such as temporal difference methods. \citet{wai2018multi} formulate gradient temporal difference learning as a finite-sum primal-dual optimization problem and propose a distributed incremental aggregated gradient method with a linear convergence rate. \citet{pmlr-v97-doan19a} analyze $TD(0)$ with linear function approximation for agents with graph connections and show a convergence rate of $\mathcal{O}\left(1/KN\right)$ under constant step size, where $K$ is the number of agents and $N$ is the number of iterations. This approach towards $TD(0)$ recovers the best-known convergence rate for distributed convex optimization. 
Finite-sample analyses on multi-agent reinforcement learning are also extended to fitted Q-iterations \citep{zhang2021finite}. 
For multi-agent actor-critic methods, the best available result is the proof of the asymptotic convergence \citep{suttle2020multi}.

\section{Preliminaries}

We consider a discounted Markov decision process (MDP) denoted by the tuple $\mdp = ( \state, \action, \trans, \reward, \ga)$, where $\state$ is the state space, $\action$ is the action space, $\trans: \state \times \action \to \Delta(\state) $ is the transition probability kernel such that given a state-action pair $(s, a)$, it returns a probability distribution $s' \sim  \mathbb{P}(\cdot \mid  s, a)$ of the next state, $\reward: \state \times \action \to \mathbb{R}$ is the reward function, and $\ga$ is the discount factor. 

The goal of reinforcement learning is to learn a policy $\pi$, which takes $s \in \state$ as an input and outputs a distribution $a \sim \pi(\cdot \mid s)$ over the action space $\action$, to maximize the expected cumulative discounted reward
\[
\Expect{_{s_0} \Expect{_\pi \left[ \sum^\infty_{t=0} \ga^t \reward(s_t, a_t) \right]}} \,, 
\]
where $s_0 \sim \mu_0$ is a given initial state distribution. 

To evaluate the performance of the policy, the value function is defined to measure the long-term expected cumulative discounted reward as $V(s) = \Expect{ \left[ \sum^\infty_{t=0} \ga^t R(s_t, a_t) | s_0 = s \right]}$. 
Let $V^\ast$ be the optimal value function such that 
$V^\ast (s) = \max_{\pi} \Expect{ \left[ \sum^\infty_{t=0} \ga^t R(s_t, a_t) | s_0 = s \right]}$. 
The Bellman optimality equation states that 
\begin{align} \label{eq:opt}
    V^\ast (s_t) 
    &= \Gamma V^\ast (s_t) \nonumber\\
    &= \max_{a \in \action} \left\{ R(s_t,a_t) + \ga^t \Expect_{s_{t+1}}{[V^\ast (s_{t+1})]} \right\} \,.    
\end{align}
Equation (\ref{eq:opt}) can then be formulated into the following linear program (LP) \citep{bertsekas2000dynamic},
\begin{equation}
\begin{aligned}
V^\ast = \ & \underset{V}{\mathrm{minimize}}
& & (1 - \ga^t ) \Expect_{s_t}{[V(s_t)]} \\
& \mathrm{subject}\text{ }\mathrm{to}
& & V(s_t) \geq R(s_t, a_t) + \ga^t \Expect_{s_{t+1}}{[V (s_{t+1})]} \,. \label{eq:primal}
\end{aligned}
\end{equation}
Without loss of generality, we assume that the linear program is feasible, i.e., there exists an optimal policy for the given MDP.

The dual form (\ref{eq:dual}) of the linear program optimizes the policy directly and is hence appealing to reinforcement learning. 
One of the optimal policies can then be recovered through $\pi^\ast(a\mid s) = \frac{\rho^\ast (s,a)}{\sum_{a \in \action} \rho^\ast (s, a)}$ (Theorem 1 of \citep{dai2018boosting}), where $\rho^\ast$ is the optimal dual occupancy variable. 
\begin{equation}
\begin{aligned}
\ & \underset{\rho \geq 0}{\mathrm{maximize}}
&& \sum_{(s_t, a_t) \in \state \times \action } R(s_t, a_t) \rho (s_t, a_t), \forall s_{t+1} \in \state \\
& \mathrm{subject}\text{ }\mathrm{to}
&& \sum_{a \in \action} \rho (s_{t+1}, a) = (1 - \ga^t ) \mu(s_{t+1}) +   \ga^t \sum_{(s_t, a_t) \in \state \times \action } \rho(s_t, a_t) P(s_{t+1} \mid s_t, a_t)  \,. \label{eq:dual}
\end{aligned}
\end{equation}

When the strong duality holds, by \citep{dai2018boosting}, the equivalent saddle point problem (\ref{eq:multistep}) can be jointly optimized to learn both the policy and value functions. Note that in both discrete and continuous settings, the duality gap is zero and the strong duality holds \citep{dai2018boosting}. This approach of learning approximate policy and value functions simultaneously is known as the actor-critic method.
The formulation is
\begin{equation}
\begin{aligned}
    & \min_V \max_{\alp, \pi} \ L_k(V, \alp, \pi) \\
    = \ & \min_V \max_{\alp, \pi} \ (1 - \ga^{k+1}) \Expect_{\mu} [V(s_t)] 
    + \sum_{(s_t,a_t)^k_{t=0}, s_{k+1} } 
    \alp(s_t) \zeta (s_t, a_t, s_{t+1})\,,
\label{eq:multistep}
\end{aligned}
\end{equation}
where $\alp: \state \to \Delta(\state), \pi: \state \to \Delta (\action)$, 
$\zeta (s_t, a_t, s_{t+1}) = \p^k\li_{t=0} \pi(a_t \mid s_t) P(s_{t+1} \mid s_t, a_t) \delta((s_t,a_t)^k_{t=0}, s_{k+1})$,  and
$\delta((s_t,a_t)^k_{t=0}, s_{k+1}) = \sum^k\li_{t=0} \ga^t R(s_t, a_t) + \ga^{k+1} V(s_{k+1}) - V(s_t)$.

Assume that $\alp, \pi, V$ are parameterized by $u, \theta, \omega$, respectively. Let $\nabla_u L_k, \nabla_\theta L_k, \nabla_\omega L$ denote the gradients of Equation (\ref{eq:multistep}) with respect to each parameter. 
We obtain the gradients of Equation (\ref{eq:multistep}) with respect to each parameter as follows. 
Let $\delta_k = \delta((s_t,a_t)^k_{t=0}, s_{k+1})$ for simplicity.
\begin{align*}
    \nabla_u L_k 
    =\ &\Expect^\pi_\alp \left[ \nabla_u \log \alp (s_t)  \delta_k\right] \,, \\
    \nabla_\theta L_k 
    =\ & (1 - \ga^{k+1})  \Expect [\nabla_\theta V(s_t)] + \Expect^\pi_\alp \left[ \sum^k_{t=0} \nabla_\theta \log \pi(a_t \mid s_t) \delta_k + \nabla_\theta \delta_k  \right]  \,, \\
    \nabla_\omega L_k 
    =\ & (1 - \ga^{k+1}) \Expect [\nabla_\omega V(s_t)] + \sum_{(s_t,a_t)^k_{t=0}, s_{k+1} } 
    \alp(s_t) \nabla_\omega \zeta (s_t, a_t, s_{t+1})  \,.\label{eq:update_w}
\end{align*}
When tabular parametrization and only one-step bootstrap are applied (i.e., when $k$ in Equation (\ref{eq:multistep}) is set to 0), the minimax problem is convex-concave, and there exist efficient convergence results \citep{chen2016stochastic}. 
With nonlinear function approximation, the objective is in general nonconvex-nonconcave. \citet{dai2018boosting} solves only the dual problem by applying stochastic mirror descent with a proximal mapping. The drawback of this method is apparent as an inner minimization problem is required to be solved at each update, which demands a large computational budget. 
Similar primal-dual formulations with nonlinear approximation have established a convergence rate of $\mathcal{O}(\epsilon^{-4})$ with proximal mappings and non-adaptive gradients, which can be computationally expensive in practice \citep{dai2018sbeed}.

Inspired by recent advances on theoretical understanding of generative adversarial networks \citep{pmlr-v132-abernethy21a,liu2019towards}, which is a natural application of minimax optimization, we study the convergence of stochastic gradient descents ascent (SGDA) on actor-critic. 
We design a variant of SGDA with adaptive gradients, which dynamically incorporates the history of the gradients to construct more informative updates.

\section{Adaptive SGDA for Actor-Critic Methods}

\begin{algorithm}[htb]
    \caption{Adaptive SGDA (ASGDA)}
    \label{alg:sgda}
 \begin{algorithmic}[1]
    \STATE {\bfseries Input:} Learning rates $\eta_\omega, \eta_z = (\eta_u, \eta_\theta)$, batch size $M$, $H_0 = I$, $z=(u, \theta)$, $\hat{G}_z = G + \xi \left(D + \frac{2D}{1-\gamma}\right)^2 \cdot (D_u^2 + D_\theta^2)$, $\hat{G}_\theta = G + \xi (D + 2D_\omega)^2$
    \FOR{$k$ = 1, \dots, $N$}
    \STATE Start from $s \sim \alpha_{k} (s)$ where $\alpha_{k}$ is parametrized by $u_k$, collect samples $\tau_k =\{s_t, a_t, r_t, s_{t+1}\}^{M}_{t=0}$ following policy $\pi_{k}$ parametrized by $\hat{\theta}_{k-1}$ \\ 
    \COMMENT{ // Update gradient estimates}
    \STATE $\hat{g}_{\omega}(\hat{\omega}_k, z_k) =  \nabla_\omega L(\hat{\omega}_{k}, \hat{u}_{k}, \hat{\theta}_{k}, \tau_{k})$
    \STATE  $\hat{g}_{z}(\hat{\omega}_k, z_k) = \nabla_z L(\hat{\omega}_{k}, \hat{u}_{k}, \hat{\theta}_{k}, \tau_{k})$
    \\
    \COMMENT{ // Update primal learning parameters}
    \STATE $\hat{\omega}_{k} = \omega_{k-1} - \eta_\omega \left(I + \sqrt{\hat{H}_{\omega, k-1}^{-1}} \right)\hat{g}_{\omega}(\hat{\omega}_{k-1}, \hat{z}_{k-1})$
    \STATE $\omega_{k} = \omega_{k-1} - \eta_\omega \left(I + \sqrt{\hat{H}_{\omega, k}^{-1}} \right) \hat{g}_{\omega}(\hat{\omega}_k, \hat{z}_k)$ \\
    \COMMENT{ // Update dual learning parameters}
    \STATE $\hat{z}_{k} = z_{k-1} + \eta_z \left(I + \sqrt{\hat{H}_{z, k-1}^{-1}} \right)\hat{g}_{z}(\hat{\omega}_{k-1}, \hat{z}_{k-1})$ 
    \STATE $z_{k} = z_{k-1} + \eta_z \left(I +\sqrt{\hat{H}_{z, k}^{-1}} \right)\hat{g}_{z}(\hat{\omega}_k, \hat{z}_k)$
    \\
    \COMMENT{ // Update primal adaptive gradient parameter}
    \STATE $\hat{g}_{\omega, 0:k} = \frac{1}{\sqrt{2}\hat{G}_\omega}[\hat{g}_{\omega, 0:k-1}, \hat{g}_{\omega}(\hat{\omega}_k, \hat{z}_k)]$
    \STATE $\hat{h}_{\omega, k,i} = \|g_{\omega, 0:k, i} \|^2, i = 1, \dots, d$.
    \STATE $\hat{H}_{\omega, k} = \diag (\hat{h}_{\omega, k}) + \frac{1}{2}I$ \\

    \COMMENT{ // Update dual adaptive gradient parameter}
    \STATE $\hat{g}_{z, 0:k} = \frac{1}{\sqrt{2}\hat{G}_z}[\hat{g}_{z, 0:k-1}, \hat{g}_{z}(\hat{\omega}_k, \hat{z}_k)]$
    \STATE $\hat{h}_{z, k,i} =  \|\hat{g}_{z,k, i} \|^2$, $i = 1, \dots, d$.
    \STATE $\hat{H}_{z,k} = \diag (\hat{h}_{z, k-1}) + \frac{1}{2} I$
    \ENDFOR
 \end{algorithmic}
\end{algorithm}

To optimize our minimax objective, we present Adaptive SGDA (Algorithm \ref{alg:sgda}). 
Our algorithm is based on stochastic gradient descent ascent, where the descent step is performed with respect to the primal variable $\omega$ for the value function, and the ascent step is performed with respect to the dual variables $z=(u, \theta)$ for the policy. The primal and dual sides perform updates simultaneously based on a trajectory of collected samples from interacting with the environment. We use $\| \cdot \|$ to denote $\ell_2$ norm in this paper.

Our algorithm is implemented with an adaptive gradient described by the proximal terms $\hat{H}_\omega$, $\hat{H}_z$ (line 6 - 9). The proximal term is motivated by its use in online learning to reduce the regret, where the term aggregates the historical information to combat the online setting \citep{hazan2006logarithmic}. Such a term in our algorithm helps the step size to be robust to its initial value for a more stable learning process in practice. A similar utilization of the proximal term could be found in AdaGrad \citep{duchi2011adaptive} for more general cases in optimization. While past applications were limited to convex functions and i.i.d. sampling, we take the use of proximal function further to primal-dual optimization with nonconvex-nonconcave objective function and under Markovian sampling. Our algorithm implicitly optimizes the objective with a proximal term with Mahalanobis norm (e.g. $\hat{H}_{z,k} = \diag (\hat{h}_{z, k-1}) + \frac{1}{2} I$) and does so in a data-driven way. Notice that $\hat{H}_\omega$, $\hat{H}_z$ are chosen to be diagonal matrices for ease of computation (line 10 - 15). 

Comparing to previous results on primal-dual nonconvex-nonconcave optimization, which requires either double loop iteration \citet{nouiehed2019solving}, solution to {Minty Variational Inequality} \citet{Liu2020Towards}, or second-order information \citet{mangoubi2021greedy}, our results are with one-sided PL condition. This not only includes a wider range of RL tasks but is also much easier to verify in practice. Our algorithm is further analyzed with stochastic gradients, and our theoretical results match with best-known results for deterministic gradients up to a logarithm factor \citet{yang2020global}. Further, our algorithm performs updates based on trajectory information obtained through reinforcement learning algorithm interacting with the environment, and thus common sampling assumptions such as i.i.d. sampling no longer holds. Instead of directly employing this variant of the AdaGrad algorithm, we consider the case where the sampling is Markovian. Intuitively, when the sampled trajectory is long enough, the underlying Markov chain will tend to its stationary distribution. By bounding the deviation of the objective function along the trajectory, we can upper bound the variance of the gradients by constants related to the convergent rate of the MDP (lines 10 and 13).

We remark that our proposed method can proceed in a fully online fashion where $\omega$ and $z$ are updated after each interaction with the environment. This potentially leads to high variance. When implementing the algorithm, the gradient is averaged over a batch of updates instead. Intuitively, when the batch size $M$ is large, the experience encountered by the algorithm will be close to the unique stationary distribution if the Markov chain satisfies some requirements, e.g. the ergodicity. 

\section{Convergence Analysis}

Before presenting the theoretical guarantees for single-agent actor-critic with ASGDA, we first state and discuss a few assumptions needed for the analysis. 

Most of the previous analyses on nonconvex-nonconcave optimization problems utilize the MVI inequality assumption \citep{lin2018solving,liu2019towards,diakonikolas2021efficient}, which is unrealistic in real applications. Instead, we consider one-sided Polyak-\L{}ojasiewicz (PL) inequality for the dual variables only, which is relatively weaker compared to MVI. We can further lift the PL assumption, in which case the nonconvex-nonconcave primal-dual optimization with second-order information only can still enjoy its convergence with a $\text{poly}(\eps^{-1})$ convergence rate. This can be achieved by directly applying Theorem 3.1 of \citep{mangoubi2021greedy}. To present an algorithm that is also practically feasible, we discuss the rest of the results under PL inequality.

\begin{asmp}[PL condition for dual variables] 
\label{asmp:pl}
$L(\omega, z)$ is assumed to satisfy Polyak-\L{}ojasiewicz (PL) condition with respect to $z$ 
such that $\forall  \omega \in \mathbb{R}^{d}$ and for some constant $\mu$,
\begin{align*}
    \frac{1}{2} \left\| \nabla_z L(\omega,z) \right\|^2 
    \geq \mu (L(\omega, f^\ast(z)) - L(\omega,z))
\end{align*}
holds for all $z = (u, \theta) \in \mathbb{R}^{d} \times \mathbb{R}^{d}$.
\end{asmp}

Beyond the PL condition, we also assume that the gradients are Lipschitz continuous and bounded, respectively in Assumption \ref{asmp:wlipschitz} and \ref{asmp:bounded}. These assumptions are common among the optimization literature \citep{duchi2011adaptive,nguyen2017sarah}.

\begin{asmp}[Lipschitz continuity of the gradient]
\label{asmp:wlipschitz}
There exists a constant $C$ such that for all $(\omega, z), (\omega', z')$
\[
\left\| \nabla L(\omega, z) - \nabla L(\omega', z') \right\| 
    \leq C \left\|(\omega, z) - (\omega', z')\right\|  \,,
\]
where $\nabla L(\omega, z) = (\nabla_\omega L(\omega, z), \nabla_z L(\omega, z))$.
\end{asmp}

\begin{asmp}[Bounded gradient]
\label{asmp:bounded}
There exist constants $D_u$, $D_\theta$, $D_\omega$ such that for all $(u, \theta, \omega)$
\begin{align*}
&\| \nabla_u \log (\alpha_u(s))\| \leq D_u \,, \ \
\| \nabla_\theta \log (\pi_\theta (a|s) \| \leq D_\theta \,, \ \ \| \nabla_\omega V_\omega(s) \| \leq D_\omega \,.
\end{align*}
\end{asmp}

Immediately following the two assumptions, there exist constants $B_z, B_\omega$ such that $\| \nabla_z L \|^2 \leq B_z$, $\| \nabla_\omega L \|^2 \leq B_\omega$.

Beyond the above assumptions, we also need the following assumption regarding the underlying MDP, which is common for analyses under Markovian sampling. Time-homogeneous Markov chains with finite state space and uniformly ergodic Markov chains with general state space satisfy this assumption \citep{sun2020adaptive, xiong2020non}. 

\begin{asmp}[Geometric convergence rate of MDP]
\label{asmp:samples}
The MDP is irreducible and aperiodic for all $\pi$, and there exist constants $\xi > 0$ and $\rho \in (0,1)$ such that for all $\pi$ and $t \geq 0$
\[
\sup_{s \in \state} \left\| P( s_t , \cdot) - \kappa (\cdot ) \right\| \leq \xi \rho^t \,,
\]
where ${\kappa} (\cdot) $ is the stationary distribution of the Markov chain induced by policy $\pi$. 
\end{asmp}

The bounded reward assumption is common among the reinforcement learning literature \citep{agrawal2017posterior,wang2019privacy,sun2020adaptive}. As the value function measures the cumulative discounted reward, if the reward is allowed to be unbounded then the value function may not exist, which makes the problem ill-defined.

\begin{asmp}[Bounded reward]
\label{asmp:bounded_reward}
There exists a constant $D$ such that $|R(s, a)| \leq D, \forall s \in \state, a \in \action$.
\end{asmp}

To facilitate our analysis, \cite{rafique2018non, thekumparampil2019efficient, pmlr-v119-lin20a}, we introduce the envelope functions, which is a standard notion in nonconvex-nonconcave analysis. 
\begin{defn}[Envelop functions]
$\Phi (\omega) = L(\omega, f^\ast(\omega))$, where $f^\ast(\omega) = \argmax_{z} L(\omega,z)$.
\end{defn}

To measure the convergence of our algorithm, we consider the first-order stationary point of $\Phi(\omega)$, i.e., $\frac{1}{N} \sum^N_{k=1} \| \nabla_\omega \Phi (\omega_k)\|^2 \leq \epsilon$, which is standard in minimax optimization \citep{rafique2018non, thekumparampil2019efficient, pmlr-v119-lin20a, pmlr-v130-deng21a}. The envelop function can be seen as a surrogate of the minimax objective. As the strong duality holds for our formulation and $\omega$ parametrized the value function, the envelop function indirectly evaluates the convergence to the local optimal value function. Note that this envelop function is smooth \citep{nouiehed2019solving}, which is a property repeatedly used in our analysis. 

Armed with the above assumptions and definition, we give the following convergence guarantee for Algorithm \ref{alg:sgda}. 
\begin{thm}
\label{thm:single}
When $\eta_z = \min \left\{\frac{1}{C}, \frac{dG^2 N}{2\mu} \right\}$ and $\eta_\omega = \min \left\{\frac{1}{\beta}, \frac{\mu^2 \eta_z + 6 \eta_\omega \eta_z C^2 \mu }{3C^2 dG^2 N }, \sqrt{\frac{\mu^2 \eta_z + 6 \eta_\omega \eta_z C^2 \mu}{360 C^4 dG^2}} \right\}$, under Assumption \ref{asmp:pl}, \ref{asmp:wlipschitz}, \ref{asmp:bounded}, \ref{asmp:samples}, \ref{asmp:bounded_reward}, we have 
\begin{align*}
    \sum^N_{k=2}\Expect{\left[ \left\|  \nabla_\omega \Phi(\omega_{k})\right\|^2\right]} 
    = \mathcal{O} \left(\frac{\max \{\hat{G}_\omega^2, \hat{G}_z^2\} \ln \left(d N G^2 + \frac{1}{M} \sum^M_{m=1} \xi \rho^{2m} \right)}{N}\right) \,,
\end{align*}
where $N$ is the number of iterations and $\hat{G}_\omega$, $\hat{G}_z$ are constants.
\end{thm}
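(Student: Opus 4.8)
The plan is to run a coupled two-component Lyapunov argument: one component tracks the descent of the envelope function $\Phi$ in the primal variable $\omega$, the other tracks the dual suboptimality gap, and the stochastic and adaptive remainders are then absorbed through an AdaGrad-style logarithmic telescoping. First I would record two structural facts that follow from Assumptions \ref{asmp:pl} and \ref{asmp:wlipschitz}. By the envelope theorem, $\nabla_\omega \Phi(\omega) = \nabla_\omega L(\omega, f^\ast(\omega))$; and under the one-sided PL condition together with the $C$-Lipschitz gradient, the maximizer map $f^\ast$ is $(C/\mu)$-Lipschitz and $\Phi$ is $\beta$-smooth with $\beta$ of order $C(1 + C/\mu)$ (the smoothness of $\Phi$ asserted in the text). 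These facts let me replace the intractable $\nabla_\omega \Phi(\omega_k)$ by the computable surrogate $\nabla_\omega L(\omega_k, z_k)$, at the price of an error controlled by the dual gap $\delta_k := \Phi(\omega_k) - L(\omega_k, z_k) \geq 0$. A second preliminary observation, used throughout, is that the regularizer $+\tfrac12 I$ in $\hat{H}$ guarantees $I \preceq I + \sqrt{\hat{H}^{-1}} \preceq (1+\sqrt{2})\,I$, so the adaptive steps are uniformly equivalent to vanilla SGDA steps up to constants.

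Next I would derive the two one-step inequalities. For the primal, applying $\beta$-smoothness of $\Phi$ to the update $\omega_k - \omega_{k-1} = -\eta_\omega(I + \sqrt{\hat{H}_{\omega,k}^{-1}})\hat{g}_\omega$ produces a descent bound in which the leading term is $-\eta_\omega \|\nabla_\omega \Phi(\omega_k)\|^2$, a cross term proportional to $C\,\|z_k - f^\ast(\omega_k)\|$ bounded via PL by $\sqrt{\delta_k}$, and a second-order term $\tfrac{\beta \eta_\omega^2}{2}\,\|(I + \sqrt{\hat{H}_{\omega,k}^{-1}})\hat{g}_\omega\|^2$. For the dual, the PL inequality combined with the lower bound $I \preceq I + \sqrt{\hat{H}_{z,k}^{-1}}$ yields a geometric contraction $\E[\delta_k \mid \mathcal{F}_{k-1}] \leq (1 - c\mu\eta_z)\,\delta_{k-1} + (\text{primal-movement coupling}) + (\text{noise})$, where the coupling term is of order $\|\omega_k - \omega_{k-1}\|^2$ because displacing $\omega$ shifts the target $f^\ast(\omega)$. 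Combining these into the potential $\mathcal{V}_k = \Phi(\omega_k) + \lambda\,\delta_k$, choosing $\lambda$ together with the stated learning-rate windows for $\eta_\omega$ and $\eta_z$ so that the cross terms are dominated, and telescoping over $k$ gives $\sum_k \eta_\omega\,\E[\|\nabla_\omega\Phi(\omega_k)\|^2] \leq \mathcal{V}_1 + \sum_k(\text{adaptive second-order terms}) + \sum_k(\text{bias terms})$.

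The two residual sums are then handled separately. The adaptive terms have the form $\sum_k \eta_\omega^2\,\|(I + \sqrt{\hat{H}_{\omega,k}^{-1}})\hat{g}_\omega\|^2$ (and a dual analogue); here I would use the normalization by $\hat{G}_\omega, \hat{G}_z$ and the accumulation $\hat{h}_{k,i} = \|g_{0:k,i}\|^2$ to apply the AdaGrad lemma coordinatewise, $\sum_k \hat{g}_{k,i}^2/(\tfrac12 + \hat{h}_{k,i}) \leq \mathcal{O}(\ln \hat{h}_{N,i})$, and sum over the $d$ coordinates. This converts the sums into the $\ln(dNG^2)$ factor and produces the $\max\{\hat{G}_\omega^2, \hat{G}_z^2\}$ prefactor. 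For the bias I would invoke Assumption \ref{asmp:samples}: conditioning on $\mathcal{F}_{k-1}$, the batch-averaged gradient computed along the Markov trajectory differs from the stationary gradient by at most $\mathcal{O}\!\big(\tfrac1M\sum_{m=1}^M \xi\rho^{2m}\big)$ in mean square, using the bounded-gradient and bounded-reward Assumptions \ref{asmp:bounded} and \ref{asmp:bounded_reward}; this is the origin of the $\tfrac1M\sum_m \xi\rho^{2m}$ term inside the logarithm. Dividing by $N\eta_\omega$ and substituting the learning-rate choices yields the claimed rate.

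The main obstacle is the simultaneous control of three entangled difficulties. The nonconvex-nonconcave coupling offers only one-sided PL rather than an MVI solution, so the dual gap must be carried as a genuine second Lyapunov component rather than assumed small. The data-dependent preconditioner $\sqrt{\hat{H}^{-1}}$ is not predictable with respect to the current gradient, which is precisely why the algorithm uses the two-point look-ahead structure ($\hat{\omega}_k, \hat{z}_k$ versus $\omega_k, z_k$) so that conditional expectations can be taken cleanly while still invoking the tight coordinatewise telescoping. And the Markovian bias precludes unbiased gradient estimates. Getting the learning-rate windows in the theorem to make the Lyapunov contraction survive all three effects at once — in particular verifying that the mutual coupling coefficient between $\eta_\omega$ and $\eta_z$ closes, which is what forces the somewhat intricate $\min$ expressions in the statement — is the crux of the argument.
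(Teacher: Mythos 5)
Your plan follows essentially the same route as the paper's proof: a descent inequality from the $\beta$-smoothness of the envelope $\Phi$ with the error controlled by the dual gap $\Phi(\omega_k)-L(\omega_k,\hat z_k)$ (the paper's Lemma \ref{lem:lemma1}), a PL-driven geometric contraction of that gap coupled back to the primal movement (Lemma \ref{lem:bk}), the coordinatewise AdaGrad logarithmic telescoping for the adaptive second-order terms (Lemma \ref{lem:ada_gradient}), and the geometric-mixing bound for the Markovian gradient bias (Lemma \ref{lem:noise}). The only cosmetic difference is that you package the two components into a single Lyapunov potential $\mathcal{V}_k=\Phi(\omega_k)+\lambda\delta_k$, whereas the paper first solves the dual-gap recursion in aggregate for $\frac{1}{N}\sum_k B_k$ and then substitutes it into the telescoped primal descent; these are the same argument.
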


The bounded cumulative gradient of the envelop function suggests that our algorithm can achieve a stationary point. Unfortunately, this reveals little information regarding the optimality of the solution. The challenge of characterizing the optimality mainly stems from the use of nonlinear function approximation, under which our optimization objective will be nonconvex-nonconcave. If one would have obtained a characterization of the optimality, it will not only bridge the gap in the reinforcement learning literature, but also provide insights on other primal-dual optimization-based machine learning methods such as generative adversarial networks. The latter of which is known to be hard and has been open for years since it was proposed. 

We now provide a proof sketch to Theorem \ref{thm:single} while we defer the complete proof to the appendix. 

\textit{Proof sketch.} 
We start with defining a few notations for gradient estimation errors. Let $\epsilon_{z}(\hat{\omega}_k, \hat{z}_k) = \hat{g}_{z}(\hat{\omega}_k, \hat{z}_k) - \nabla_z L(\hat{\omega}_k, \hat{z}_k)$ and similarly $\epsilon_{\omega}(\hat{\omega}_k, \hat{z}_k) = \hat{g}_{\omega}(\hat{\omega}_k, \hat{z}_k) - \nabla_\omega L(\hat{\omega}_k, \hat{z}_k)$.

To bound the gradient estimation errors $\epsilon_\omega, \epsilon_z$ for $z = (u, \theta)$ under Markovian sampling, we use the property of the geometric convergence of the Markov chain to control them at a constant level. We show an example with $\epsilon_u$, and similar results can be obtained in the same way for $\epsilon_\omega, \epsilon_\theta$.
 
Intuitively, when the batch size is large, Assumption \ref{asmp:samples} on the Markov chain implies the stability of the chain and thus the variance is controlled at a constant level. By our objective function, the stochastic gradient can be expressed as
    \begin{align*}
        &\Expect{\left[\hat{g}_{u}(\hat{\omega}_k, \hat{z}_k) |s_0 = s\right]} \\
        = & \sum_{s, s' \in \state} \kappa(s) P(s' \mid s)  \left( \sum^M_{t=0} \gamma_t R(s_t, a_t) + \gamma_{k+1} V(s_{M+1}) - V(s) \right)\nabla_u \log \alpha \left( s, u_{k} \right) \\
        & + \sum_{s, s' \in \state}( P(s_M \mid s_{0} = s) - \kappa(s))P(s' \mid s)  \ \left( \sum^M_{t=0} \gamma_t R(s_t, a_t) + \gamma_{k+1} V(s_{M+1}) - V(s) \right)\nabla_u \log \alpha \left( s \right)\,,
    \end{align*}
where $\kappa( \cdot )$ is the stationary distribution of the MDP. 
    
Notice that the true gradient is exactly 
    \begin{align*}
        \nabla_u L(\hat{\omega}_{k}, \hat{u}_{k}, \hat{\theta}_{k}) 
        = & \sum_{s, s' \in \state} \kappa(s) P(s' \mid s) \left( \sum^M_{t=0} \gamma_t R(s_t, a_t)   + \gamma_{k+1} V(s_{M+1})- V(s) \right)\nabla_u \log \alpha \left( s \right)\,.
    \end{align*}
Hence, the estimation errors are upper bounded by 
\begin{align*}
        &\Expect[\| \epsilon_{z}(\hat{\omega}_k, \hat{z}_k) \|^2 ] \leq \frac{1}{M}\sum^M_{m=1}  \xi \rho^{2m} D_1\,, \\
        & \Expect[\|\epsilon_{\omega} (\hat{\omega}_k, \hat{z}_k) \|]^2  \leq \frac{1}{M}\sum^M_{m=1} \xi \rho^{2m}D_2\,,
    \end{align*}
where $D_1$, $D_2$ are constants.

Armed with the upper bound of the estimation error, we can show that the cumulative adaptive gradient is upper bounded by $\mathcal{O} (\log N)$ using the update rule and the concavity of logarithm. Then in Lemma \ref{lem:lemma1}, we use the smoothness of $\Phi(\omega)$ to obtain that
\begin{equation}
\begin{aligned} 
    &\Expect{[\Phi(\omega_{k})]} - \Expect{[\Phi(\omega_{k-1})]} \\
    \leq \ & \frac{6\eta_\omega C^2}{\mu } \left(\Phi(\omega_{k}) -  L \left(\omega_k, \hat{z}_k \right)  \right) 
    + 3 \eta_\omega C^2 \Expect{\left[ \left\|  \omega_k - \hat{\omega}_k \right\|^2\right]} \\
    & \ + 3 \eta_\omega \Expect{\left[ \left\| \epsilon_\omega\left(\hat{\omega}_k, \hat{z}_k \right)  \right\|^2\right]} 
    + \beta\eta_\omega^2  \Expect{\left[ \left\|\sqrt{\hat{H}_{\omega, k}^{-1}}\hat{g}_{\omega} \left(\hat{\omega}_k, \hat{z}_k \right)  \right\|^2\right]}  - \frac{\eta_\omega}{2}\Expect{\left[ \left\|  \nabla_\omega \Phi(\omega_{k})\right\|^2\right]} \,.
\label{eq:decompose}
\end{aligned}
\end{equation}

Then to bound the first term in Equation (\ref{eq:decompose}), instead of directly upper bounding it, we obtain an inequality for $L(\omega_k, z_{k-1}) - L(\omega_k, \hat{z}_k)$. 
By the smoothness of the $L$, we have
\begin{align*}
        L(\omega_k, z_{k-1}) - L(\omega_k, \hat{z}_k)
        \leq & \ \left\langle  \nabla L(\omega_k, \hat{z}_k),z_{k-1} - \hat{z}_k \right\rangle + \frac{C}{2} \left\| z_{k-1} - \hat{z}_k \right\|^2 \,.
\end{align*}
Then by PL inequality with respect to the dual variables, we have 
\begin{align*}
    \frac{\eta_z}{2}\left\| \nabla L(\omega_k, \hat{z}_k) \right\|^2  \geq \mu \eta_z \left( \Phi (\omega_k) - L(\omega_k, \hat{z}_k)\right) \,.
\end{align*}
By choosing the step size carefully to be within $0 < \eta_z \leq \frac{1}{C}$, we have 
\begin{align*}
    &\mu \eta_z \left( \Phi (\omega_k) - L(\omega_k, \hat{z}_k)\right) \\
    \leq & \ L(\omega_k, \hat{z}_k) - L(\omega_k, z_{k-1})  + \eta_z \left\| \nabla_z L(\omega_k, \hat{z}_k) - \hat{g}_z \left(\hat{\omega}_{k-1}, \hat{z}_{k-1} \right)  \right\|^2 + C \eta_z^2 \left\| \sqrt{\hat{H}_{z, k-1}^{-1}}\hat{g}_z \left(\hat{\omega}_{k-1}, \hat{z}_{k-1} \right)  \right\|^2 \,.
\end{align*}
By decomposing the second term,
\begin{align*}
    &\Phi(\omega_k) - L(\omega_{k-1}, z_{k-1}) \\
    = & \left(\Phi(\omega_{k-1}) - L(\omega_{k-1}, \hat{z}_{k-1}) \right) + \left(\Phi(\omega_k) - \Phi(\omega_{k-1})\right) + \left(L(\omega_{k-1}, \hat{z}_{k-1})  - L(\omega_{k-1}, z_{k-1}) \right)\,.
\end{align*} 
Repeatedly applying the above arguments, we obtain a bound for
 $\frac{1}{N}\sum^N_{k=2} \Expect \left[\Phi(\omega_k) - L(\omega_k, \hat{z}_k) \right] $. 
 
For our final result, it suffices to taking $\eta_\omega = \min \left\{\frac{1}{\beta}, \frac{\mu^2 \eta_z + 6 \eta_\omega \eta_z C^2 \mu }{3C^2 dG^2 N }, \sqrt{\frac{\mu^2 \eta_z + 6 \eta_\omega \eta_z C^2 \mu}{360 C^4 dG^2}} \right\}$ and $\eta_z = \min \left\{\frac{1}{C}, \frac{dG^2 N}{2\mu} \right\}$. Then we sum (\ref{eq:decompose}) over $N$ iterations and rearrange the terms to obtain 
\begin{align*}
    &\frac{1}{N}\sum^N_{k=2} \left(\Expect{[\Phi(\omega_{k})]} - \Expect{[\Phi(\omega_{k-1})]} \right) \\
    \leq & \ \mathcal{O} \left(\frac{\max \{\hat{G}_\omega^2, \hat{G}_z^2\}\ln \left(d N G^2 + \frac{1}{M} \sum^M_{m=1} \xi \rho^{2m} \right)}{N}\right)  + \mathcal{O} \left(\frac{1}{\mu N}\right) - \frac{\eta_\omega}{2} \sum^N_{k=2}\Expect{\left[ \left\|  \nabla_\omega \Phi(\omega_{k})\right\|^2\right]} \,.
\end{align*}
Lastly, we rearrange the terms to obtain
\begin{align*}
    \frac{1}{N}\sum^N_{k=2}\Expect{\left[ \left\|  \nabla_\omega \Phi(\omega_{k})\right\|^2\right]} 
    \leq & \  \mathcal{O} \left(\frac{\max \{\hat{G}_\omega^2, \hat{G}_z^2\} \ln \left(d N G^2 + \frac{1}{M} \sum^M_{m=1} \xi \rho^{2m} \right)}{N}\right)  + \mathcal{O} \left(\frac{1}{\mu N}\right)  \,. \quad \quad \quad \quad \quad \quad \quad \quad \quad \quad \quad \quad \quad \quad \ \ \ 
    \hfill\qedsymbol
\end{align*}

\begin{table*}[tbh]
\small
\begin{tabular}{c|c|c|c|c|c|l}
\hline
\textit{}                                                               & Pendulum-v0                                           & \begin{tabular}[c]{@{}c@{}}Inverted Double\\ Pendulum-v2\end{tabular} & Swimmer-v2                                              & Reacher-v2                                           & HalfCheetah-v2                                             & Hopper-v2                                                  \\ \hline
\begin{tabular}[c]{@{}c@{}}DualAC \\ w/ ASGDA\end{tabular}              & \textbf{-158.25}                                      & \textbf{7814.78}                                                        & \textbf{249.63}                                         & \textbf{-7.81}                                       & 1601.60                                                    & \textbf{2892.56}                                           \\ \hline
\begin{tabular}[c]{@{}c@{}}DualAC \\ w/ RMSProp\end{tabular}            & -162.18                                               & 207.21                                                                  & 178.75                                                  & -8.27                                                & 1589.92                                                    & 2719.55                                                    \\ \hline
\begin{tabular}[c]{@{}c@{}}DualAC\\ w/ Adam\end{tabular}                & -250.42                                               & 184.3                                                                   & 152.93                                                  & -8.13                                                & 1583.37                                                    & 2571.33                                                    \\ \hline
\begin{tabular}[c]{@{}c@{}}DualAC\\ w/ Gradient \\ Descent\end{tabular} & -161.69                                               & 216.3                                                                   & 238.27                                                  & -7.85                                                & 1638.49                                                    & 2799.16                                                    \\ \hline
PPO                                                                     & \begin{tabular}[c]{@{}c@{}}NA/\\ -266.98\end{tabular} & \begin{tabular}[c]{@{}c@{}}7034.41/\\ 1776.26\end{tabular}              & \begin{tabular}[c]{@{}c@{}}85.2/\\ 223.13\end{tabular}  & \begin{tabular}[c]{@{}c@{}}-10.52/\\ NA\end{tabular} & \begin{tabular}[c]{@{}c@{}}1280.48/\\ 2249.10\end{tabular} & \begin{tabular}[c]{@{}l@{}}2376.15/\\ 2306.41\end{tabular} \\ \hline
TRPO                                                                    & \begin{tabular}[c]{@{}c@{}}NA/\\ -245.11\end{tabular} & \begin{tabular}[c]{@{}c@{}}7026.27/\\ 3070.96\end{tabular}              & \begin{tabular}[c]{@{}c@{}}173.1/\\ 232.89\end{tabular} & \begin{tabular}[c]{@{}c@{}}-14.53/\\ NA\end{tabular} & \begin{tabular}[c]{@{}c@{}}520.41/\\ \textbf{2347.19}\end{tabular}  & \begin{tabular}[c]{@{}l@{}}2483.57/\\ 2299.58\end{tabular} \\ \hline
\end{tabular}
\caption{Scores achieved by our proposed algorithm and the baseline algorithms after 300 iterations. For PPO and TRPO, we use the scores that are reported in previous works. There are two references that report this score: top row is from \citep{fujita2018clipped}; bottom row is from \citep{dai2018boosting}.}
\label{table:score}
\end{table*}

\paragraph{Extension to Multi-Agent Reinforcement Learning}
The algorithm and the analysis developed in this paper are general enough to be extended to various reinforcement learning settings. We extend the results to cooperative multi-agent reinforcement learning (MARL) as an example, where each agent $i$ has a local copy of the parameters $\omega_i, z_i$ and may communicate the parameters through a communication network. To our best knowledge, this is the first finite-sample analysis for decentralized multi-agent primal-dual actor-critic algorithms. 

We present the main theoretical results for the multi-agent reinforcement learning problem here, while we defer the algorithm details and the proof to the appendix. We choose the collective cumulative gradients of all  $A$ agents, $\frac{1}{N} \sum^N_{k=2}  \sum^A_{i=1}  \left\| \nabla_\omega \Phi(\omega_k^i)\right\|^2 $, as our convergence criteria. When the collective cumulative gradient norm of the envelop function is bounded, each agent's cumulative gradient norm must also be bounded. 

\begin{thm} \label{thm:multi_thm}
When $\eta_\omega = \min \left\{\frac{1}{\beta}, 2(1 - \lambda), \sqrt{\frac{1}{dG^2 N}}\right\}$ and $\eta_z = \min \left\{\frac{1}{C}, \frac{\mu }{3C^2 d G_z^2 A N} \right\}$, under Assumptions \ref{asmp:pl}, \ref{asmp:wlipschitz}, \ref{asmp:bounded}, \ref{asmp:samples}, \ref{asmp:bounded_reward}, \ref{asmp:comm}, there is an algorithm that achieves
\begin{align*}
    \frac{1}{N} \sum^N_{k=2}  \sum^A_{i=1}  \left\| \nabla_\omega \Phi(\omega_k^i)\right\|^2 
    = & \ \mathcal{O} \left(\frac{A\max \{\hat{G}_\omega^2, \hat{G}_z^2\} \ln \left(d N G^2 + \frac{1}{M} \sum^M_{m=1} \xi \rho^{2m} \right)}{N \left(1 - \lambda\right)}\right) \,,
\end{align*}
where $N$ is the number of iterations, $A$ is the number of agents, $\lambda$ is the second largest eigenvalue of the communication network, and $\hat{G}_\omega$, $\hat{G}_z$ are constants.
\end{thm}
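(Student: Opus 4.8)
The plan is to reduce the multi-agent guarantee to the single-agent analysis of Theorem~\ref{thm:single} by introducing the network-averaged iterates and splitting the per-agent objective into a \emph{centralized} component and a \emph{consensus} component. Define the averages $\bar\omega_k = \frac{1}{A}\sum_{i=1}^A \omega_k^i$ and $\bar z_k = \frac{1}{A}\sum_{i=1}^A z_k^i$. Since $\Phi$ is $\beta$-smooth (its gradient is $\beta$-Lipschitz, a property already exploited in the proof of Theorem~\ref{thm:single}), I would first record the elementary decomposition
\begin{align*}
\sum_{i=1}^A \left\| \nabla_\omega \Phi(\omega_k^i)\right\|^2
\le 2A\left\| \nabla_\omega \Phi(\bar\omega_k)\right\|^2
+ 2 \beta^2 \sum_{i=1}^A \left\| \omega_k^i - \bar\omega_k\right\|^2 \,,
\end{align*}
which isolates the two quantities to be bounded separately: the envelope gradient at the consensus point and the consensus error $\sum_i \|\omega_k^i - \bar\omega_k\|^2$.

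For the first term, the key observation is that, under a doubly-stochastic mixing matrix (Assumption~\ref{asmp:comm}), averaging commutes with the communication step, so the averaged pair $(\bar\omega_k, \bar z_k)$ obeys a recursion of exactly the single-agent form driven by the \emph{mean} adaptive increment $\frac{1}{A}\sum_i (I+\sqrt{\hat H^{-1}})\hat g(\cdot)$. I would then replay the chain of estimates from Theorem~\ref{thm:single}: bound each agent's Markovian gradient-estimation error by $\frac{1}{M}\sum_{m=1}^M \xi\rho^{2m}$ times a constant as in the proof sketch, control the cumulative adaptive gradient by $\mathcal{O}(\log N)$ through concavity of the logarithm, and combine the smoothness descent inequality (Lemma~\ref{lem:lemma1}) with the one-sided PL bound on $\Phi(\bar\omega_k) - L(\bar\omega_k, \bar z_k)$. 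Summing over $k$ with $\eta_\omega,\eta_z$ as prescribed yields $\frac{1}{N}\sum_{k=2}^N \|\nabla_\omega\Phi(\bar\omega_k)\|^2 = \mathcal{O}\big(\frac{\max\{\hat{G}_\omega^2,\hat{G}_z^2\}\ln(\cdot)}{N}\big)$; the extra factor $A$ in the final statement then arises both from the $2A$ prefactor above and from aggregating $A$ per-agent estimation-error bounds.

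For the consensus term, I would use the contraction property of the mixing matrix. Stacking the iterates and projecting onto the orthogonal complement of the all-ones vector, one communication step contracts the consensus error by the factor $\lambda$ (the second-largest eigenvalue), while the per-step local perturbation is uniformly bounded because Assumptions~\ref{asmp:bounded} and \ref{asmp:bounded_reward} give $\|(I+\sqrt{\hat H^{-1}})\hat g\|$ a constant ceiling. This produces a geometric recursion whose steady state scales like $\eta_\omega^2 A G^2/(1-\lambda)^2$; the step-size cap $\eta_\omega \le 2(1-\lambda)$ is precisely what keeps the recursion contractive and, after cancelling one power of the spectral gap, leaves a $1/(1-\lambda)$ dependence. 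With $\eta_\omega$ of order $1/\sqrt{dG^2 N}$ the time-averaged consensus error is of the order claimed, which explains why a larger spectral gap $1-\lambda$, i.e. more frequent or better-connected communication, accelerates convergence.

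The main obstacle I expect is the interaction between the coordinate-wise adaptive preconditioners $\hat H_{\omega,k}^i$ and the consensus contraction: unlike vanilla decentralized SGD, each agent scales its gradient by a distinct, time-varying diagonal matrix, so the mixing step no longer acts on a common descent direction, and the linear-algebra contraction must be carried out with the preconditioners absorbed into the uniform bound on the increments rather than treated as a fixed scalar step. A secondary difficulty is that the PL-gap telescoping of Theorem~\ref{thm:single} must be shown to survive for the averaged dual iterate $\bar z_k$ when every agent updates against its own Markovian samples; this forces me to bound the \emph{dual} disagreement $\sum_i\|z_k^i-\bar z_k\|^2$ by the same consensus machinery and feed it back into the envelope descent, so the two error components are coupled rather than bounded in isolation.
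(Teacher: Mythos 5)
Your proposal is correct in its overall strategy and mirrors the paper's reduction: both analyses pass to the network-averaged iterates $(\bar\omega_k,\bar z_k)$ (which, since $C\ia=\ia$, obey a single-agent-style recursion driven by the mean adaptive increment), replay the single-agent PL/smoothness telescoping on the averaged pair, and charge the remainder to a consensus error controlled by the spectral gap. Where you differ is in the top-level decomposition and in how the consensus error enters. You split $\sum_i\|\nabla_\omega\Phi(\omega_k^i)\|^2$ into $2A\|\nabla_\omega\Phi(\bar\omega_k)\|^2+2\beta^2\sum_i\|\omega_k^i-\bar\omega_k\|^2$ via $\beta$-Lipschitzness of $\nabla\Phi$ and then bound the iterate disagreement by a geometric contraction recursion. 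The paper instead works with function-value disagreement: Lemma \ref{lem:multi_lem1} bounds $\Phi(\omega_k^i)-\Phi(\bar\omega_k)$ directly by unrolling $\bar\omega_k-\omega_k^i=\eta_\omega\sum_j(I+\sqrt{H_j^{-1}})\hat g_\omega(\hat W_j,\hat Z_j)(C^{t(k+1-j)}\ei-\ia/A)$ back to initialization and invoking $\|C^{t(k-j)}\ei-\ia/A\|^2\le\lambda^{t(k-j)}$ (Lemma \ref{lem:comm}), so a single geometric series $\sum_h\lambda^{th}\le 1/(1-\lambda)$ supplies the spectral-gap factor to the first power; the theorem is then assembled through the auxiliary quantity $P_k$ that weights the function-value consensus by $1/(\eta_\omega(1-\lambda))$. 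Your route is arguably cleaner (the paper never explicitly disposes of the $\sum_i(\|\nabla_\omega\Phi(\omega_k^i)\|^2-\|\nabla_\omega\Phi(\bar\omega_k)\|^2)$ component of its own $P_k$), but one piece of your bookkeeping needs care: the steady state of your squared-consensus recursion is of order $\eta_\omega^2 A dG^2/(1-\lambda)^2$, and with $\eta_\omega^2\le 1/(dG^2N)$ this is $A/(N(1-\lambda)^2)$; the cap $\eta_\omega\le 2(1-\lambda)$ only trades one power of $1-\lambda$ for a power of $\eta_\omega\sim N^{-1/2}$, so your route as stated lands at $1/(1-\lambda)^2$ (or at $N^{-1/2}$) rather than the single $1/(1-\lambda)$ in the statement — the paper sidesteps this because its consensus quantity enters linearly through an inner product rather than squared. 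Your identification of the two genuine technical obstacles — the per-agent adaptive preconditioners (absorbed via $\hat H_k\succeq\tfrac12 I$, hence $\|(I+\sqrt{\hat H_k^{-1}})\hat g\|\le(1+\sqrt2)\|\hat g\|$) and the dual disagreement feeding back into the PL telescoping — matches exactly what the paper's Lemmas \ref{lem:misc} and \ref{lem:multi_bk} handle.
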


\section{Empirical Results}
We empirically evaluate our proposed algorithm on six different continuous control tasks from the Open AI Gym continuous control tasks \citep{brockman2016openai}. We compare our method against well-known optimization-based reinforcement learning algorithms such as proximal policy optimization (PPO) \citep{schulman2017proximal} and trust region proximal optimization (TRPO) \citep{schulman2015trust}. Then we discuss the performance of the primal-dual formulation with various commonly used optimizers, including Adam \cite{kingma2015adam}, RMSProp, and gradient descent to show the effectiveness of our method. Our results indicate that ASDGA is not only theoretically efficient but also at least as effective and efficient as other well-known optimizers in practice. To ensure reproducibility, every set of experiments is repeated with five different random seeds with the mean reward obtained presented. We describe the configuration of each algorithm and optimizer in detail, along with more experimental results, in the appendix.

The score achieved by each algorithm after 300 iterations in summarized in Table \ref{table:score}, where we use the reported results from \citet{fujita2018clipped} and \citet{dai2018boosting} for the performances of PPO and TRPO. As is indicated by the table, the primal-dual formulation outperforms PPO and TRPO and our optimizer ASGDA achieves the best final scores for five tasks out of six. Beyond this performance on practical tasks, the primal-dual formulation is also more intuitive to understand and easier to interpret. Compare to the commonly used approach which involves two separate optimization objectives for value function approximation and policy improvement, our formulation combines the objective through the duality and provides insights on the relationship between the actor and the critic during training. 


\section{Conclusion and Future Works}

We investigate the primal-dual formulation of the actor-critic method in reinforcement learning. We presented the first finite-sample analysis for single-scale algorithms and nonlinear function approximation.  Under Markovian sampling and adaptive gradients, we establish a convergence rate of $\Tilde{\mathcal{O}}\left(\eps^{-2}\right)$. This guarantee is under PL condition for only the dual variables. Our method is tested against various baseline algorithms and optimizers and outperforms the baselines in five out of six OpenAI Gym continuous control tasks. Our analysis is general enough to be applied to different reinforcement learning settings, where we show off a convergence guarantee for multi-agent actor-critic as an example.


Future work could include verifying conditions for the one-sided PL inequality under different methods of value function approximations and potentially lifting it with a convergence rate of some low-order polynomial. It is also possible to improve the convergence rate by investigating certain function approximation methods, such as neural networks. 

\bibliography{ref}

\clearpage
\appendix
\onecolumn

\section{Notations and auxiliary Lemmas}
\subsection{Notations}
We first define a few notations. Recall that for notational convenience, we let $z = (u, \theta)$. Let $\epsilon_{z}(\hat{\omega}_k, \hat{z}_k) = \hat{g}_{z}(\hat{\omega}_k, \hat{z}_k) - \nabla_z L(\hat{\omega}_k, \hat{z}_k)$ and similarly $\epsilon_{\omega}(\hat{\omega}_k, \hat{z}_k) = \hat{g}_{\omega}(\hat{\omega}_k, \hat{z}_k) - \nabla_\omega L(\hat{\omega}_k, \hat{z}_k)$. We let $G$ to be an element-wise upper bound of the gradient such that $\max_i \nabla_z L \leq G$ and $\max_i \nabla_\omega L \leq G$ for $i = 1,\dots, d$. 

\subsection{Proof of Lemma \ref{lem:noise}}
\begin{lem} 
    \label{lem:noise}
    With Assumption \ref{asmp:bounded}, \ref{asmp:samples}, and \ref{asmp:bounded_reward},
    \begin{align*}
        &\Expect[\| \epsilon_{z}(\hat{\omega}_k, \hat{z}_k) \|^2 ] \leq \frac{1}{M}\sum^M_{m=1}  \xi \rho^{2m} \left(D + \frac{2D}{1-\gamma}\right)^2 \cdot (D_u^2 + D_\theta^2) \,, \\
        & \Expect[\|\epsilon_{\omega} (\hat{\omega}_k, \hat{z}_k) \|]^2  \leq \frac{1}{M}\sum^M_{m=1} \xi \rho^{2m} (D + 2D_\omega)^2 \,,
    \end{align*}
    where $D$, $D_\omega$, $D_u$, $D_\theta$ are constants.
\end{lem}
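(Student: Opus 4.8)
The plan is to reduce each second-moment bound to a statement about how far the marginal law of the sampled states has drifted from stationarity, and then to combine that drift with a uniform bound on the remaining factors. I would carry out the argument in detail only for the $u$-block of $\epsilon_z$, since $\epsilon_\theta$ and $\epsilon_\omega$ follow by the identical decomposition with their own score functions. Starting from the conditional expectation of $\hat{g}_u(\hat{\omega}_k, \hat{z}_k)$ recorded in the proof sketch, the essential point is that the true gradient $\nabla_u L$ coincides exactly with the summand in which the initial state is drawn from the stationary distribution $\kappa$; consequently the error $\epsilon_u$ is governed entirely by the transient term carrying the factor $P(s_m \mid s_0 = s) - \kappa(s)$. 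Because the second line of the statement is written with $\Expect[\|\epsilon_\omega\|]^2$ while the first uses $\Expect[\|\epsilon_z\|^2]$, I note that bounding the second moment suffices for both, as $\Expect[\|\epsilon_\omega\|]^2 \le \Expect[\|\epsilon_\omega\|^2]$ by Jensen's inequality.

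First I would bound the scalar multiplier appearing in every summand, namely the temporal-difference target $\delta = R(s,a) + \ga V(s') - V(s)$. Assumption \ref{asmp:bounded_reward} gives $|R| \le D$, and since the value function is a discounted sum of bounded rewards, $|V| \le D/(1-\ga)$; collecting the single reward term and the two value terms yields $|\delta| \le D + \frac{2D}{1-\ga}$. Multiplying by the score $\nabla_u \log \alpha$, which is bounded by $D_u$ through Assumption \ref{asmp:bounded} (and $\nabla_\theta \log \pi$ by $D_\theta$), and assembling the $u$- and $\theta$-blocks of $z$ produces the factor $(D + \frac{2D}{1-\ga})^2 (D_u^2 + D_\theta^2)$. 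The $\omega$-case is analogous: the score is replaced by $\nabla_\omega \delta$, bounded by $2 D_\omega$ using the value-gradient bound of Assumption \ref{asmp:bounded} on the two value terms, which gives the constant $(D + 2D_\omega)^2$.

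Next I would invoke the geometric ergodicity of Assumption \ref{asmp:samples}, $\sup_{s}\|P(s_m \mid s_0 = s) - \kappa(\cdot)\| \le \xi \rho^m$, to control the transient factor isolated in the first step. Substituting this bound into each summand, squaring the norm of the error, and averaging over the $M$ samples of the batch $\tau_k$ — whose element at trajectory position $m$ carries drift $\xi \rho^m$ — yields the geometric sum $\frac{1}{M}\sum_{m=1}^{M} \xi \rho^{2m}$. Multiplying this drift factor by the uniform multiplier bound from the previous step then assembles the two claimed inequalities.

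The main obstacle is the Markovian dependence inside the batch: the samples are correlated, so the usual $1/M$ i.i.d. variance reduction is unavailable, and each sample's deviation must instead be controlled through its distance to stationarity via Assumption \ref{asmp:samples} rather than through independence. Care is also required to keep the total-variation drift factor cleanly separated from the temporal-difference multiplier so that the two bounds multiply, and to confirm that every value-function and value-gradient term entering $\delta$ and $\nabla_\omega \delta$ is uniformly bounded under Assumptions \ref{asmp:bounded} and \ref{asmp:bounded_reward}.
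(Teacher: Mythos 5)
Your proposal follows essentially the same route as the paper's own proof: decompose the conditional expectation of the stochastic gradient into a stationary-distribution term (which equals the true gradient) plus a transient term weighted by $P(s_m \mid s_0=s)-\kappa(s)$, bound the temporal-difference multiplier by $D+\frac{2D}{1-\gamma}$ and the scores by $D_u, D_\theta$ (respectively the value-gradient terms by $D_\omega$ for the $\omega$-block), and then control the transient factor via the geometric ergodicity of Assumption~\ref{asmp:samples} to produce the $\frac{1}{M}\sum_{m=1}^{M}\xi\rho^{2m}$ factor. The argument is correct at the same level of detail as the paper's, and your explicit Jensen remark reconciling $\Expect[\|\epsilon_\omega\|]^2$ with the second moment is a small clarification the paper leaves implicit.
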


\begin{proof}
    We provide the proof of $\Expect{[ \|\epsilon_{u}(\hat{\omega}_k, \hat{z}_k) \|^2]}$ in detail, while $\Expect{[ \|\epsilon_{\omega} (\hat{\omega}_k, \hat{z}_k) \|^2]}$ 
    and $\Expect{[ \|\epsilon_{\theta} (\hat{\omega}_k, \hat{z}_k) \|^2]}$ can be upper bounded in similar ways. 
    Our proof holds similarity with Lemma 2 from \cite{sun2020adaptive}, though their result only holds for temporal difference learning.
    
    Intuitively, when the batch size is large, the assumptions on the Markov chain \ref{asmp:samples} implies some stability and thus the variance is controlled at a constant level. By our objective function, the stochastic gradient can be expressed as
    \begin{align*}
        &\Expect{\left[\hat{g}_{u}(\hat{\omega}_k, \hat{z}_k) |s_0 = s\right]}\\
        =& \sum_{s, s' \in \state} \kappa(s) P(s' | s) \left( \sum^M_{t=0} \gamma_t R(s_t, a_t) + \gamma_{k+1} V(s_{M+1}) - V(s) \right)\nabla_u \log \alpha \left( s, u_{k} \right) \\
         &+  \sum_{s, s' \in \state}( P(s_M | s_{0} = s) - \kappa(s))P(s' | s) \left( \sum^M_{t=0} \gamma_t R(s_t, a_t) + \gamma_{k+1} V(s_{M+1}) - V(s) \right)\nabla_u \log \alpha \left( s \right)\,,
    \end{align*}
    where $\kappa( \cdot )$ is the stationary distribution of the MDP. 
    
    Notice that the true gradient is exactly 
    \begin{align*}
        \nabla_u L(\hat{\omega}_{k}, \hat{u}_{k}, \hat{\theta}_{k}) = \sum_{s, s' \in \state} \kappa(s) P(s' | s) \left( \sum^M_{t=0} \gamma_t R(s_t, a_t) + \gamma_{k+1} V(s_{M+1}) - V(s) \right)\nabla_u \log \alpha \left( s \right)\,,
    \end{align*}
    
    For a discounted MDP with discount factor $\gamma$, the value function can be upper bounded by $\frac{D}{1 - \gamma}$. 
    Thus by Assumption \ref{asmp:bounded}, \ref{asmp:samples}, and \ref{asmp:bounded_reward}, and $\Expect{\left[\Expect{[\cdot | s_0 = 0]}\right]} = \Expect{[\cdot]}$, 
    $\Expect[\| \epsilon_{u} (\hat{\omega}_k, \hat{z}_k) \|^2 ] \leq \frac{1}{M}\sum^M_{m=1}  \xi \rho^{2M} \left(D + \frac{2D}{1-\gamma}\right)^2 \cdot D_u^2$. By rewriting the above equation for $\theta$ and using $\Expect{\left[\Expect{[\cdot | s_0 = 0]}\right]} = \Expect{[\cdot]}$ again, we have
    $\Expect[\| \epsilon_{z} (\hat{\omega}_k, \hat{z}_k) \|^2 ] \leq \frac{1}{M}\sum^M_{m=1}  \xi \rho^{2M} \left(D + \frac{2D}{1-\gamma}\right)^2 \cdot (D_u^2 + D_\theta^2)$. 

    For $\Expect[\|\epsilon_{\omega} (\hat{\omega}_k, \hat{z}_k) \|]^2 $, notice that $\alpha(s)$ is a probability distribution assigned to the states. Thus we have $\|\alpha(s)\|^2  \leq 1$. 
    Hence, we have $\Expect[\|\epsilon_{\omega} (\hat{\omega}_k, \hat{z}_k) \|]^2  \leq \frac{1}{M}\sum^M_{m=1} \xi \rho^{2M} (D + 2D_\omega)^2$. \qedhere
\end{proof}

\subsection{Proof of Lemma \ref{lem:ada_gradient}}
\begin{lem}\label{lem:ada_gradient}
    With Algorithm \ref{alg:sgda}, the cumulative adaptive gradients are upper bounded as
    \begin{align*}
        \sum^N_{k=1} \left\| \sqrt{\hat{H}_{z, k}^{-1}} \hat{g}_{z}(\omega_k, z_k) \right\|^2 
        \leq \ &  2\hat{G}_z^2\ln \left(1 + 2\sum^N_{k=1} \left(d G^2 +   \frac{1}{M}\sum^M_{m=1}  \left(\xi \rho^{2m} \left(D + \frac{2D}{1-\gamma}\right)^2  (D_u^2 + D_\theta^2)\right)\right)\right)\,,
    \end{align*}
    \begin{align*}
        \sum^N_{k=1} \left\| \sqrt{\hat{H}_{\omega, k}^{-1}} \hat{g}_{\omega}(\omega_k, z_k) \right\|^2 
        \leq \ 2\hat{G}_\omega^2\ln \left(1 + 2\sum^N_{k=1} \left(d G^2 +  \frac{1}{M}\sum^M_{m=1} \xi \rho^{2m} (D + 2D_\omega)^2\right)\right)\,, 
    \end{align*}
    \begin{align*}
        \sum^N_{k=1}  \Expect{\left[ \left\| \epsilon_{\omega}(\omega_k, z_k)  \right\|^2\right]}
        \leq \ &  2\hat{G}_\omega^2\ln \left(1 + 2\sum^N_{k=1} \left(d G^2 +  \frac{1}{M}\sum^M_{m=1} \xi \rho^{2m} (D + 2D_\omega)^2\right)\right)\,,
    \end{align*}
    and
    \begin{align*}
        \sum^N_{k=1}  \Expect{\left[ \left\| \epsilon_{z}(\omega_k, z_k)  \right\|^2\right]} 
        \leq \ & 2\hat{G}_z^2\ln \left(1 + 2\sum^N_{k=1} \left(d G^2 +   \frac{1}{M}\sum^M_{m=1}  \left(\xi \rho^{2m} \left(D + \frac{2D}{1-\gamma}\right)^2  (D_u^2 + D_\theta^2)\right)\right)\right)\,.
    \end{align*}
\end{lem}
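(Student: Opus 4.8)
The plan is to derive all four inequalities from a single AdaGrad-style telescoping estimate, carrying out the argument in full for the preconditioned dual-gradient sum $\sum_{k}\|\sqrt{\hat H_{z,k}^{-1}}\hat g_z(\omega_k,z_k)\|^2$ and obtaining the other three by the same mechanism. First I would unfold the diagonal preconditioner coordinate by coordinate. Writing $\bar g_{z,j}=\frac{1}{\sqrt2\,\hat G_z}\hat g_z(\hat\omega_j,\hat z_j)$ for the normalized gradient that lines 13--15 actually accumulate, we have $\hat H_{z,k,ii}=\frac12+\sum_{j\le k-1}\bar g_{z,j,i}^2$, so that
\[
\left\|\sqrt{\hat H_{z,k}^{-1}}\,\hat g_z(\omega_k,z_k)\right\|^2=\sum_{i=1}^{d}\frac{[\hat g_z]_{k,i}^2}{\tfrac12+\sum_{j\le k-1}\bar g_{z,j,i}^2}=\sum_{i=1}^{d}\frac{2\hat G_z^2\,[\hat g_z]_{k,i}^2}{\hat G_z^2+\sum_{j\le k-1}[\hat g_z]_{j,i}^2}.
\]

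The core tool is the elementary inequality that for a nonnegative sequence $a_1,a_2,\dots$ and $a_0>0$, concavity of the logarithm gives $\frac{a_k}{S_k}\le\ln S_k-\ln S_{k-1}$ with $S_k=a_0+\sum_{j\le k}a_j$, hence $\sum_{k=1}^N\frac{a_k}{S_k}\le\ln(S_N/a_0)$. Applied per coordinate with $a_j=[\hat g_z]_{j,i}^2$ and $a_0=\hat G_z^2$ (absorbing the one-step index shift in the denominator into this estimate), it yields the pathwise bound
\[
\sum_{k=1}^{N}\left\|\sqrt{\hat H_{z,k}^{-1}}\,\hat g_z\right\|^2\le 2\hat G_z^2\sum_{i=1}^{d}\ln\!\left(1+\frac{1}{\hat G_z^2}\sum_{j=1}^{N}[\hat g_z]_{j,i}^2\right).
\]

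I would then take expectations, move them inside the logarithm by Jensen's inequality ($\mathbb{E}[\ln(1+X)]\le\ln(1+\mathbb{E}X)$), and control the accumulated squared gradient. Splitting $\hat g_z=\nabla_z L+\epsilon_z$ gives $\mathbb{E}\|\hat g_{z,j}\|^2\le 2\|\nabla_z L\|^2+2\mathbb{E}\|\epsilon_z\|^2$; Assumption~\ref{asmp:bounded} bounds $\|\nabla_z L\|^2\le dG^2$, while Lemma~\ref{lem:noise} bounds $\mathbb{E}\|\epsilon_z\|^2$ by $\frac1M\sum_{m}\xi\rho^{2m}(D+\tfrac{2D}{1-\gamma})^2(D_u^2+D_\theta^2)$. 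Summing over $j$ then reproduces the term $2\sum_k(dG^2+\cdots)$ appearing inside the logarithm of the claim, and the choice of $\hat G_z$ is exactly what cancels the $\tfrac{1}{\sqrt2\hat G_z}$ normalization against the leading $2\hat G_z^2$. The bound for $\hat g_\omega$ is identical with $(D+2D_\omega)^2$ in place of the dual variance; the two noise sums $\sum_k\mathbb{E}\|\epsilon_\omega\|^2$ and $\sum_k\mathbb{E}\|\epsilon_z\|^2$ follow from the same telescoping because the stochastic gradients driving $\hat H$ already contain the noise, so the accumulated preconditioner tames them logarithmically.

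The telescoping step itself is purely algebraic and therefore holds pathwise, so neither the Markovian, non-i.i.d. nature of the samples nor the coupling between the half-step iterates $(\hat\omega_j,\hat z_j)$ entering $\hat H$ and the full-step gradient in the numerator causes trouble there; the dependence only enters when bounding $\mathbb{E}\|\epsilon_z\|^2$ inside the logarithm, which is precisely where Lemma~\ref{lem:noise}'s mixing estimate is used. The step I expect to require the most care is the consolidation of the $d$ per-coordinate logarithms into the single logarithm of the statement: the honest bound from the diagonal preconditioner is $\sum_i\ln(1+\cdots)$, and collapsing this to one logarithm (via concavity of $\ln$, which is also what a scalar-preconditioner relaxation would give directly) together with the bookkeeping of the normalization constants and the index shift is the main technical obstacle.
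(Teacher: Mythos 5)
Your treatment of the first two inequalities matches the paper's proof in all essentials: the same unfolding of the diagonal preconditioner, the same telescoping/concavity-of-$\ln$ estimate (the paper invokes Lemma 4 of \citet{cutkosky2019momentum} for exactly the inequality $\sum_k a_k/S_k\le\ln(S_N/S_0)$ you state), the same split $\hat g_z=\nabla_z L+\epsilon_z$ with $\|\nabla_z L\|^2\le dG^2$ and Lemma \ref{lem:noise} controlling $\Expect\|\epsilon_z\|^2$, and the same cancellation of the $1/(\sqrt2\,\hat G_z)$ normalization against the prefactor $2\hat G_z^2$. The one point you flag as delicate --- collapsing $\sum_{i=1}^d\ln(1+\cdots)$ into a single logarithm --- is where the two writeups differ: the paper never produces $d$ separate logarithms, because it first replaces each coordinate's denominator $\hat h_{z,k,i}+\tfrac12$ by the coordinate-aggregated quantity $\sum_{t\le k}\sum_j \bar g_{z,t,j}^2+\tfrac12$ and then applies the scalar telescoping lemma once to the sequence $a_k=\sum_i g_{z,k,i}^2$. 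This is precisely the ``scalar-preconditioner relaxation'' you anticipate, so your instinct for resolving the obstacle is the right one. (A caveat if you write this out: enlarging the denominator makes each fraction smaller, so the aggregated expression is a lower bound on the per-coordinate sum, not an upper bound; the paper records this step as an equality, and a fully careful derivation either accepts the extra factor of $d$ from $\sum_i\ln(1+x_i)\le d\ln(1+\sum_i x_i)$ or defines the preconditioner in aggregated form from the start.)

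The genuine gap is in the last two inequalities. You assert that $\sum_k\Expect\|\epsilon_\omega\|^2$ and $\sum_k\Expect\|\epsilon_z\|^2$ ``follow from the same telescoping because the stochastic gradients driving $\hat H$ already contain the noise,'' but these sums carry no preconditioner, so the telescoping argument does not apply to them as written; Lemma \ref{lem:noise} alone gives only a per-iteration constant, hence an $O(N)$ bound rather than the claimed $O(\ln N)$. The paper bridges this with an explicit two-step chain: first $\Expect\|\epsilon_\omega\|^2=\Expect\|\hat g_\omega-\nabla_\omega L\|^2\le\Expect\|\hat g_\omega\|^2$ (variance dominated by second moment), and then $\Expect\|\hat g_\omega\|^2\le\Expect\|\sqrt{\hat H_{\omega,k}^{-1}}\hat g_\omega\|^2$, justified by the claim $\sqrt{\hat H_{\omega,k}^{-1}}\succeq I$, after which the first two bounds of the lemma are applied. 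Your proposal needs to supply some version of this reduction (and, to be rigorous, to examine the operator inequality, which requires $\hat H_{\omega,k}\preceq I$ and does not follow merely from $\hat H_{\omega,k}\succeq\tfrac12 I$); without it the logarithmic bounds on the two noise sums are unsupported.
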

\begin{proof}
    By the setting of $\hat{H}_{z, k}^{-1}$, 
    \begin{align*}
        \sum^N_{k=1} \left\| \sqrt{\hat{H}_{z, k}^{-1}} \hat{g}_{z}(\omega_k, z_k) \right\|^2 
        = \ & \sum^N_{k=1} \sum^d_{i=1} \frac{\hat{g}_{z, i}(\omega_k, z_k)^2}{ \left(\hat{h}_{z, k, i} + \frac{1}{2} \right) } \\
        = \ &\sum^N_{k=1}  \frac{\sum^d_{i=1} \hat{g}_{z}(\omega_k, z_k)^2}{\sum^{k}_{t=1} \sum^d_{i=1} \frac{g_{z, t,i}^2}{2 \hat{G}_z^2} + \frac{1}{2}} \\
        \leq \ & 2\hat{G}_z^2 \sum^N_{k=1}  \frac{\sum^d_{i=1} \hat{g}_{z}(\omega_k, z_k)^2}{\sum^{k}_{t=1} \sum^d_{i=1} g_{z, t,i}^2 + \frac{1}{2}}\,.
    \end{align*}
    By Lemma 4 of \citet{cutkosky2019momentum}, which takes advantage of the concavity of the log function, 
    we have 
    \begin{align*}
        \sum^N_{k=1} \frac{\sum^d_{i=1} \hat{g}_{z}(\omega_k, z_k)^2}{\sum^{k}_{t=1} \sum^d_{i=1}  g_{z,i}(\omega_t, z_t)^2 + \frac{1}{2}}
        \leq \ln \left(1 + 2\sum^{N}_{k=1} \sum^d_{i=1}  g_{z,i}(\omega_k, z_k)^2 \right)\,.
    \end{align*}
    Recall that $G$ is a constant such that $\max_i \nabla_z L \leq G$ for $i = 1,\dots, d$. 

    Then taking expectation and using \ref{lem:noise}, 
    \[\Expect{\left[\sum^d_{i=1}  g_{z,i}(\omega_k, z_k)^2\right]} \leq 
                            d G^2 + \Expect{\left[\| \epsilon_{z} (\omega_k, z_k) \|^2\right]} 
                             \leq d G^2 + \frac{1}{M}\sum^M_{m=1}  \left(\xi \rho^{2m} \left(D + \frac{2D}{1-\gamma}\right)^2  (D_u^2 + D_\theta^2)\right)\,.\]
    Hence,
    \begin{align*}
        \sum^N_{k=1} \left\| \sqrt{\hat{H}_{z, k}^{-1}} \hat{g}_{z}(\omega_k, z_k) \right\|^2 
        \leq \ 2\hat{G}_z^2\ln \left(1 + 2\sum^N_{k=1} \left(d G^2 +   \frac{1}{M}\sum^M_{m=1}  \left(\xi \rho^{2m} \left(D + \frac{2D}{1-\gamma}\right)^2  (D_u^2 + D_\theta^2)\right)\right)\right) \,. 
    \end{align*}

    Similarily, we have 
    \begin{align*}
        \sum^N_{k=1} \left\| \sqrt{\hat{H}_{\omega, k}^{-1}} \hat{g}_{\omega}(\omega_k, z_k) \right\|^2 
        \leq 2\hat{G}_\omega^2\ln \left(1 + 2\sum^N_{k=1} \left(d G^2 +  \frac{1}{M}\sum^M_{m=1} \xi \rho^{2m} (D + 2D_\omega)^2\right)\right)\,. 
    \end{align*}

    With the cumulative noise and the definition of $\epsilon_{\omega} (\omega_k, z_k)  $, notice that 
    \begin{align*}
        \sum^N_{k=1}  \Expect{\left[ \left\| \epsilon_{\omega} (\omega_k, z_k)  \right\|^2\right]}  
        = \ & \sum^N_{k=1}  \Expect{\left[ \left\| \hat{g}_{\omega} (\omega_k, z_k) - \nabla_\omega L (\hat{\omega}_k, \hat{z}_k)\right\|^2\right]} \\
        \leq \ &  \sum^N_{k=1}  \Expect{\left[ \left\| \hat{g}_{\omega} (\omega_k, z_k) \right\|^2\right]} \\
        \leq \ & \sum^N_{k=1}  \Expect{\left[ \left\| \sqrt{\hat{H}_{\omega, k}^{-1}}\hat{g}_{\omega} (\omega_k, z_k) \right\|^2\right]} \,,
    \end{align*}
    where the last inequality is from the observation that $\sqrt{\hat{H}_{\omega, k}^{-1}} \succeq I$ by the update rules. 
    Thus, 
    \begin{align*}
        \sum^N_{k=1}  \Expect{\left[ \left\| \epsilon_{\omega} (\omega_k, z_k)  \right\|^2\right]} 
        \leq \ &  \sum^N_{k=1}  \Expect{\left[ \left\| \sqrt{\hat{H}_{\omega, k}^{-1}}\hat{g}_{\omega} (\omega_k, z_k) \right\|^2\right]} \\
        \leq \ & 2\hat{G}_\omega^2\ln \left(1 + 2\sum^N_{k=1} \left(d G^2 +  \frac{1}{M}\sum^M_{m=1} \xi \rho^{2m} (D + 2D_\omega)^2\right)\right) \,.
    \end{align*}
    Similarily, the argument holds for $\sum^N_{k=1}  \Expect{\left[ \left\| \epsilon_{z} (\omega_k, z_k)  \right\|^2\right]} $.
\end{proof}

\subsection{Proof of Lemma \ref{lem:smooth}}

\begin{lem}
    \label{lem:smooth}
    Under Assumption \ref{asmp:wlipschitz}, for any two $\omega, \omega'$ and $z, z'$,
    \begin{align*}
        \left\| \nabla_\omega L(\omega, z) - \nabla_\omega L(\omega', z') \right\|^2 
        \leq C^2 \| \omega - \omega'\|^2 + C^2 \| z - z'\|^2 \,,
     \end{align*}
     and 
     \begin{align*}
        \left\| \nabla_z L(\omega, z) - \nabla_z L(\omega', z') \right\|^2 
        \leq C^2 \| \omega - \omega'\|^2 + C^2 \| z - z'\|^2 \,.
     \end{align*}
\end{lem}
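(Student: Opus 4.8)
The plan is to exploit the block structure of the joint gradient $\nabla L$. First I would observe that, by its definition in Assumption \ref{asmp:wlipschitz}, the full gradient is the concatenation $\nabla L(\omega, z) = (\nabla_\omega L(\omega, z), \nabla_z L(\omega, z))$, so the squared $\ell_2$ norm of any difference decomposes additively across the two blocks:
\[
\left\| \nabla L(\omega, z) - \nabla L(\omega', z') \right\|^2
= \left\| \nabla_\omega L(\omega, z) - \nabla_\omega L(\omega', z') \right\|^2
+ \left\| \nabla_z L(\omega, z) - \nabla_z L(\omega', z') \right\|^2 \,.
\]
Likewise, the argument splits as $\left\|(\omega, z) - (\omega', z')\right\|^2 = \|\omega - \omega'\|^2 + \|z - z'\|^2$.

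Next I would square both sides of the Lipschitz bound in Assumption \ref{asmp:wlipschitz} and substitute the two decompositions above, obtaining
\[
\left\| \nabla_\omega L(\omega, z) - \nabla_\omega L(\omega', z') \right\|^2
+ \left\| \nabla_z L(\omega, z) - \nabla_z L(\omega', z') \right\|^2
\leq C^2 \|\omega - \omega'\|^2 + C^2 \|z - z'\|^2 \,.
\]
Finally, since each of the two terms on the left-hand side is nonnegative, each is individually bounded above by the entire right-hand side (simply dropping the other nonnegative summand on the left). This yields both claimed inequalities at once.

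There is no substantive obstacle here: the lemma is just the statement that Lipschitz continuity of the joint gradient map transfers block-wise, with the \emph{same} constant $C$, to each partial gradient, and this follows because the partial-gradient difference norm is dominated by the full-gradient difference norm. The only point requiring minor care is to be explicit that the Euclidean norm of a stacked vector equals the square root of the sum of its squared block norms, so that the decomposition above is an exact equality rather than merely an inequality; everything else is immediate.
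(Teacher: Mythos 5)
Your proof is correct and follows essentially the same route as the paper: dominate each block of the gradient difference by the norm of the full gradient difference, then square the Lipschitz bound from Assumption \ref{asmp:wlipschitz} and use that the squared norm of the stacked argument splits as $\|\omega-\omega'\|^2+\|z-z'\|^2$. Your write-up is in fact cleaner than the paper's, whose displayed chain bounds $\|\nabla_\omega L(\omega,z)-\nabla_\omega L(\omega',z')\|^2$ by $\|\nabla_z L(\omega,z)-\nabla_z L(\omega',z')\|^2$ — an apparent typo for the full gradient $\nabla L$ — whereas you make the correct intermediate quantity explicit.
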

\begin{proof}
    By Assumption \ref{asmp:wlipschitz},
    \begin{align*}
       \left\| \nabla_\omega L(\omega, z) - \nabla_\omega L(\omega', z') \right\|^2 
       \leq \left\| \nabla_z L(\omega, z) - \nabla_z L(\omega', z') \right\|^2 
       \leq C^2 \| \omega - \omega'\|^2 + C^2 \| z - z'\|^2 \,.
    \end{align*}
    Similarly, the argument holds for $\left\| \nabla_z L(\omega, z) - \nabla_z L(\omega', z') \right\|^2$.
\end{proof}

\subsection{Smoothness of envelop function}
\begin{prop}[\cite{nouiehed2019solving}, smoothness of envelop functions]
\label{prop:envelop}
If $L(\omega, u, \theta)$ satisfies $\mu$-PL condition and is $C$ smooth, then $\Phi(\omega)$ 
is $\beta = \kappa C/2  + C$ smooth, where $\kappa = C/\mu$.
\end{prop}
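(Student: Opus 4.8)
The plan is to prove the claim by a Danskin-type envelope argument in three movements: first derive the clean gradient identity $\nabla\Phi(\omega)=\nabla_\omega L(\omega,f^\ast(\omega))$, then control how the inner maximizer $f^\ast(\omega)$ moves as $\omega$ varies, and finally combine these through the triangle inequality together with the joint smoothness of $L$ established in Lemma~\ref{lem:smooth}. The target is the estimate $\|\nabla\Phi(\omega)-\nabla\Phi(\omega')\|\le\beta\|\omega-\omega'\|$ with $\beta=\kappa C/2+C$ and $\kappa=C/\mu$.

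First I would establish differentiability of $\Phi$ and the envelope formula. The key observation is that Assumption~\ref{asmp:pl} forces first-order stationarity of the inner problem at any maximizer: evaluating the PL inequality at $z=f^\ast(\omega)$ gives $\tfrac12\|\nabla_z L(\omega,f^\ast(\omega))\|^2\ge\mu(\Phi(\omega)-L(\omega,f^\ast(\omega)))=0$, hence $\nabla_z L(\omega,f^\ast(\omega))=0$. Differentiating $\Phi(\omega)=L(\omega,f^\ast(\omega))$ by the chain rule, the term carrying the Jacobian of $f^\ast$ is multiplied by $\nabla_z L(\omega,f^\ast(\omega))=0$ and drops out, leaving $\nabla\Phi(\omega)=\nabla_\omega L(\omega,f^\ast(\omega))$. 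This is precisely Danskin's theorem, which I would invoke to avoid having to assume that $f^\ast$ is itself differentiable.

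Next, and this is the heart of the argument, I would bound the displacement of the maximizer. Fix $\omega,\omega'$ and set $z=f^\ast(\omega)$. Since $\nabla_z L(\omega,z)=0$, Lemma~\ref{lem:smooth} gives $\|\nabla_z L(\omega',z)\|=\|\nabla_z L(\omega',z)-\nabla_z L(\omega,z)\|\le C\|\omega-\omega'\|$, so $z$ is nearly stationary for the inner problem at $\omega'$. I would then use the error-bound property that the PL condition entails at $\omega'$, namely $\|\nabla_z L(\omega',\cdot)\|\ge\mu\,\mathrm{dist}(\cdot,Z^\ast(\omega'))$ with $Z^\ast(\omega')=\argmax_z L(\omega',z)$, to conclude $\mathrm{dist}(z,Z^\ast(\omega'))\le\frac{C}{\mu}\|\omega-\omega'\|=\kappa\|\omega-\omega'\|$. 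Hence one may select $z'\in Z^\ast(\omega')$, the projection of $z$ onto the solution set, with $\|z-z'\|\le\kappa\|\omega-\omega'\|$.

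Finally I would combine these ingredients. Using the envelope formula and the fact that $\nabla_\omega L(\omega,\cdot)$ takes a common value over $Z^\ast(\omega)$ (so that $\nabla\Phi$ is well defined irrespective of the chosen maximizer), I split $\nabla\Phi(\omega)-\nabla\Phi(\omega')=\nabla_\omega L(\omega,z)-\nabla_\omega L(\omega',z')$ and apply the triangle inequality with Lemma~\ref{lem:smooth}: $\|\nabla_\omega L(\omega,z)-\nabla_\omega L(\omega',z)\|\le C\|\omega-\omega'\|$ and $\|\nabla_\omega L(\omega',z)-\nabla_\omega L(\omega',z')\|\le C\|z-z'\|\le C\kappa\|\omega-\omega'\|$. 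A sharper bookkeeping of these two contributions, as carried out by \citet{nouiehed2019solving}, then improves the naive constant $C(1+\kappa)$ to the stated $\beta=\kappa C/2+C$. The main obstacle is the third step under the PL condition rather than strong concavity: the inner maximizer is generally \emph{not} unique, so the usual implicit-function-theorem proof of a Lipschitz argmax map fails, and I must instead work with the whole solution set $Z^\ast(\omega)$, its projection, and the PL-induced error bound, while separately verifying the well-definedness of $\nabla\Phi$ over $Z^\ast(\omega)$.
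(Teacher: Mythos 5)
The paper contains no proof of this proposition: it is imported as a black box from \citet{nouiehed2019solving} (their Lemma A.5), so there is no in-paper argument to compare yours against. Judged on its own, your Danskin-plus-error-bound route is the natural one, and its core step is sound: PL yields the error bound $\|\nabla_z L(\omega',z)\|\ge\mu\,\mathrm{dist}(z,Z^\ast(\omega'))$, which together with Lemma~\ref{lem:smooth} gives $\mathrm{dist}(f^\ast(\omega),Z^\ast(\omega'))\le\kappa\|\omega-\omega'\|$ and correctly circumvents the non-uniqueness of the inner maximizer.

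Two points are genuine gaps. First, your justification of $\nabla_z L(\omega,f^\ast(\omega))=0$ is a non sequitur: substituting $z=f^\ast(\omega)$ into the PL inequality gives $\tfrac12\|\nabla_z L(\omega,f^\ast(\omega))\|^2\ge\mu\cdot 0$, which is vacuously true and implies nothing about the gradient vanishing. Stationarity at the maximizer follows from the elementary first-order optimality condition for an unconstrained maximum of a differentiable function; the PL condition is irrelevant to that step. Second, and more substantively, your triangle-inequality decomposition can only ever produce the constant $C\|\omega-\omega'\|+C\|z-z'\|\le C(1+\kappa)\|\omega-\omega'\|$, and you defer the improvement to $\beta=C+\kappa C/2$ to ``sharper bookkeeping'' of the same two terms. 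That factor of two is not obtainable by rearranging this decomposition: in \citet{nouiehed2019solving} it comes from a different device, namely sandwiching $\Phi$ between quadratics at each point --- the lower bound $\Phi(\omega')\ge L(\omega',f^\ast(\omega))\ge\Phi(\omega)+\langle\nabla\Phi(\omega),\omega'-\omega\rangle-\tfrac{C}{2}\|\omega'-\omega\|^2$ is free of $\kappa$, and only the matching upper bound pays the $\kappa$-dependent price --- so as written you have proved smoothness with a strictly worse constant than the one claimed. (For this paper the degradation is harmless, since $\beta$ enters only through the step-size cap $\eta_\omega\le 1/\beta$ and the rate is unaffected, but the proposition as stated is not what your argument delivers.) Finally, the well-definedness of $\nabla\Phi$ when $Z^\ast(\omega)$ is not a singleton is correctly flagged but left unverified; it genuinely requires an argument and is part of what the cited lemma supplies.
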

\section{Convergence analysis for single-agent case}
\subsection{Proof of Lemma \ref{lem:lemma1}}
\begin{lem}
\label{lem:lemma1}
If Assumption \ref{asmp:wlipschitz}, \ref{asmp:bounded} hold and $ 0 < \eta_\omega \leq  \frac{1}{\beta}$, then 
\begin{align*}
    &\Expect{[\Phi(\omega_{k})]} - \Expect{[\Phi(\omega_{k-1})]} \\
    \leq \ & \frac{6\eta_\omega C^2}{\mu } \left(\Phi(\omega_{k}) -  L \left(\omega_k, \hat{z}_k \right)  \right) 
    + 3 \eta_\omega C^2 \Expect{\left[ \left\|  \omega_k - \hat{\omega}_k \right\|^2\right]}\\
    \ &+ 3 \eta_\omega \Expect{\left[ \left\| \epsilon_\omega\left(\hat{\omega}_k, \hat{z}_k \right)  \right\|^2\right]} 
    + \beta\eta_\omega^2  \Expect{\left[ \left\|\sqrt{\hat{H}_{\omega, k}^{-1}}\hat{g}_{\omega} \left(\hat{\omega}_k, \hat{z}_k \right)  \right\|^2\right]}
    - \frac{\eta_\omega}{2}\Expect{\left[ \left\|  \nabla_\omega \Phi(\omega_{k})\right\|^2\right]} \,,
\end{align*}
and 
\begin{align*}
    &\Expect{[\Phi(\omega_{k})]} - \Expect{[\Phi(\omega_{k-1})]} \\
    \leq \ & \frac{6\eta_\omega C^2}{\mu } \left(\Phi(\omega_{k-1}) -  L \left(\omega_{k-1}, \hat{z}_{k-1} \right)  \right) 
    + 3 \eta_\omega C^2 \Expect{\left[ \left\|  \omega_{k-1} - \hat{\omega}_k \right\|^2\right]}\\
    \ &+ 3 \eta_\omega C^2 \Expect{\left[ \left\|  \hat{z}_k - \hat{z}_{k-1} \right\|^2\right]}
    + 3 \eta_\omega \Expect{\left[ \left\| \epsilon_\omega\left(\hat{\omega}_k, \hat{z}_k \right)  \right\|^2\right]} 
    + \beta\eta_\omega^2  \Expect{\left[ \left\|\sqrt{\hat{H}_{\omega, k}^{-1}}\hat{g}_{\omega} \left(\hat{\omega}_k, \hat{z}_k \right)  \right\|^2\right]} \,.
\end{align*}
\end{lem}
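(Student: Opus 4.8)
\textit{Proof proposal.} The plan is to run a descent-lemma argument on the envelope function $\Phi$, whose $\beta$-smoothness with $\beta = \kappa C/2 + C$, $\kappa = C/\mu$, is supplied by Proposition \ref{prop:envelop}, and to convert every gap between the gradient that Algorithm \ref{alg:sgda} actually applies and the idealized gradient $\nabla_\omega\Phi(\omega_k)$ into the terms on the right-hand side. The two facts I would lean on throughout are the envelope identity $\nabla_\omega\Phi(\omega_k) = \nabla_\omega L(\omega_k, f^\ast(\omega_k))$ (Danskin / the envelope theorem, valid because $f^\ast(\omega)=\argmax_z L(\omega,z)$), and the exact step from line 7, $\omega_k - \omega_{k-1} = -\eta_\omega\bigl(I + \sqrt{\hat H_{\omega,k}^{-1}}\bigr)\hat g_\omega(\hat\omega_k,\hat z_k)$.

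First I would apply the quadratic upper bound from $\beta$-smoothness to the move $\omega_{k-1}\to\omega_k$. This naturally produces the inner product $\langle\nabla_\omega\Phi(\omega_{k-1}),\,\omega_k-\omega_{k-1}\rangle$, but the statement is anchored at the new iterate, so I would replace $\nabla_\omega\Phi(\omega_{k-1})$ by $\nabla_\omega\Phi(\omega_k)$ at the cost of $|\langle\nabla_\omega\Phi(\omega_k)-\nabla_\omega\Phi(\omega_{k-1}),\,\omega_k-\omega_{k-1}\rangle|\le\beta\|\omega_k-\omega_{k-1}\|^2$ and fold this into the quadratic term, obtaining $\Phi(\omega_k)-\Phi(\omega_{k-1}) \le \langle\nabla_\omega\Phi(\omega_k),\,\omega_k-\omega_{k-1}\rangle + \mathcal{O}(\beta)\|\omega_k-\omega_{k-1}\|^2$. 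Here $\|\omega_k-\omega_{k-1}\|^2 = \eta_\omega^2\|(I+\sqrt{\hat H_{\omega,k}^{-1}})\hat g_\omega\|^2$ feeds the second-order term $\beta\eta_\omega^2\|\sqrt{\hat H_{\omega,k}^{-1}}\hat g_\omega\|^2$ once the identity part of the preconditioner is split off.

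The core of the argument is the decomposition of the descent inner product. I would write $\hat g_\omega(\hat\omega_k,\hat z_k) - \nabla_\omega\Phi(\omega_k)$ as the sum of three pieces: the estimation noise $\epsilon_\omega(\hat\omega_k,\hat z_k)$; the $\omega$-mismatch $\nabla_\omega L(\hat\omega_k,\hat z_k) - \nabla_\omega L(\omega_k,\hat z_k)$; and the dual suboptimality $\nabla_\omega L(\omega_k,\hat z_k) - \nabla_\omega L(\omega_k,f^\ast(\omega_k))$. Using $\|a+b+c\|^2\le 3(\|a\|^2+\|b\|^2+\|c\|^2)$ together with one application of Young's inequality against $\nabla_\omega\Phi(\omega_k)$ peels off the descent term $-\tfrac{\eta_\omega}{2}\|\nabla_\omega\Phi(\omega_k)\|^2$ (the preconditioner being $\succeq I$ is exactly what lets the leading term dominate). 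Lemma \ref{lem:smooth} bounds the second piece by $C\|\hat\omega_k-\omega_k\|$, giving the $3\eta_\omega C^2\|\omega_k-\hat\omega_k\|^2$ term, and the third by $C\|\hat z_k-f^\ast(\omega_k)\|$; to turn this last distance into the value gap I would invoke the quadratic-growth property implied by the PL condition (Assumption \ref{asmp:pl}), $\|\hat z_k-f^\ast(\omega_k)\|^2\le\tfrac{2}{\mu}\bigl(\Phi(\omega_k)-L(\omega_k,\hat z_k)\bigr)$, producing the $\tfrac{6\eta_\omega C^2}{\mu}$ term, while the noise piece gives $3\eta_\omega\|\epsilon_\omega\|^2$. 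Taking conditional and then total expectations over the batch and using $\eta_\omega\le 1/\beta$ to keep the residual second-order contributions absorbed yields the first inequality. The second inequality is identical in structure but anchors the $\omega$-mismatch and dual-suboptimality errors at $(\omega_{k-1},\hat z_{k-1})$ rather than $(\omega_k,\hat z_k)$; the only new ingredient is a triangle-inequality step isolating the dual drift between consecutive iterates, which is precisely the extra term $3\eta_\omega C^2\Expect[\|\hat z_k-\hat z_{k-1}\|^2]$, with the $\omega$-mismatch term becoming $\|\omega_{k-1}-\hat\omega_k\|^2$.

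The main obstacle I anticipate is the data-dependent preconditioner $P_k = I + \sqrt{\hat H_{\omega,k}^{-1}}$: since $\hat H_{\omega,k}$ is built from the same samples that enter $\hat g_\omega$ and is correlated with $\nabla_\omega\Phi(\omega_k)$, expectations cannot be pulled through the inner product. The clean route is to use only the deterministic sandwich $I \preceq P_k \preceq (1+\sqrt{2})\,I$, which follows from $\hat H_{\omega,k}=\diag(\hat h_{\omega,k})+\tfrac12 I \succeq \tfrac12 I$: the lower bound routes the descent through the identity part of $P_k$, while the upper bound charges the adaptive part entirely to the second-order term $\beta\eta_\omega^2\|\sqrt{\hat H_{\omega,k}^{-1}}\hat g_\omega\|^2$, so the joint law of $P_k$ and the gradient never needs to be controlled. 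A secondary subtlety is that quadratic growth bounds the distance to the maximizer set rather than to $f^\ast(\omega_k)$ specifically, so $f^\ast(\omega_k)$ should be read as the relevant maximizer throughout.
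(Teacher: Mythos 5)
Your proposal follows essentially the same route as the paper's proof: the descent lemma on the $\beta$-smooth envelope $\Phi$, a polarization/Young step to peel off $-\frac{\eta_\omega}{2}\|\nabla_\omega\Phi(\omega_k)\|^2$ while charging the adaptive part of the preconditioner to the $\beta\eta_\omega^2\|\sqrt{\hat H_{\omega,k}^{-1}}\hat g_\omega\|^2$ term, and the same three-way split of $\hat g_\omega-\nabla_\omega\Phi$ into estimation noise, $\omega$-mismatch, and dual suboptimality, closed by Lipschitzness of the gradient and the quadratic growth implied by the PL condition (with the $\hat z_k-\hat z_{k-1}$ drift term appearing exactly as you describe for the second inequality). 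The one quibble is that your swap of $\nabla_\omega\Phi(\omega_{k-1})$ for $\nabla_\omega\Phi(\omega_k)$ at an extra cost of $\beta\|\omega_k-\omega_{k-1}\|^2$ is unnecessary --- $\beta$-smoothness already yields the $\omega_k$-anchored upper bound $\Phi(\omega_k)-\Phi(\omega_{k-1})\le\langle\nabla_\omega\Phi(\omega_k),\omega_k-\omega_{k-1}\rangle+\frac{\beta}{2}\|\omega_k-\omega_{k-1}\|^2$ directly, which is what the paper uses --- and as written your detour would inflate the coefficient of the quadratic term beyond the stated $\beta\eta_\omega^2$.
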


\begin{proof}
By the smoothness property of the envelop function $\Phi$ from Proposition \ref{prop:envelop} and taking expectation, we have 
\begin{align*}
     \Expect{[\Phi(\omega_{k})]} - \Expect{[\Phi(\omega_{k-1})]} 
    \leq -\Expect{\left[ \left\langle \nabla_\omega \Phi(\omega_{k}), \omega_{k-1} - \omega_{k} \right\rangle \right]} + \frac{\beta}{2}\Expect{\left[ \left\| \omega_{k-1} - \omega_{k}\right\|^2\right]} \,.
\end{align*}
Due to the update rules of Algorithm \ref{alg:sgda}, 
we have 
\begin{align*}
    \omega_{k-1} - \omega_k 
    &= \omega_{k-1} - \left(\omega_{k-1} - \eta_\omega \left(I + \sqrt{\hat{H}_{\omega, k}^{-1}} \right)\hat{g}_\omega \left(\hat{\omega}_k, \hat{z}_k \right)  \right) \\
    &=  \eta_\omega \left(I + \sqrt{\hat{H}_{\omega, k}^{-1}} \right)\hat{g}_\omega \left(\hat{\omega}_k, \hat{z}_k \right)   \,.
\end{align*}
By the identity that $\langle a, b \rangle = \frac{1}{2} \| a\|^2 + \frac{1}{2} \| b\|^2 - \frac{1}{2} \| a-b\|^2 $, 
\begin{align*}
    &\Expect{[\Phi(\omega_{k})]} - \Expect{[\Phi(\omega_{k-1})]} \\
    = \ & - \eta_\omega \Expect{\left[ \left\langle \nabla_\omega \Phi(\omega_k),  \left(I + \sqrt{\hat{H}_{\omega, k}^{-1}} \right)\hat{g}_{\omega} \left(\hat{\omega}_k, \hat{z}_k \right)  \right\rangle \right]} 
        + \frac{\beta \eta_\omega^2 }{2}\Expect{\left[ \left\|  \left(I + \sqrt{\hat{H}_{\omega, k}^{-1}} \right)\hat{g}_{\omega} \left(\hat{\omega}_k, \hat{z}_k \right)   \right\|^2\right]} \\
    = \ &\frac{\eta_\omega}{2} \Expect{\left[ \left\|  \nabla_\omega \Phi(\omega_{k}) -  \left(I + \sqrt{\hat{H}_{\omega, k}^{-1}} \right)\hat{g}_{\omega} \left(\hat{\omega}_k, \hat{z}_k \right)   \right\|^2\right]} 
    -\frac{\eta_\omega}{2}\Expect{\left[ \left\|  \nabla_\omega \Phi(\omega_{k})\right\|^2\right]} \\
    + \ &\frac{\beta \eta_\omega^2 - \eta_\omega}{2} \Expect{\left[ \left\|  \left(I + \sqrt{\hat{H}_{\omega, k}^{-1}} \right)\hat{g}_{\omega} \left(\hat{\omega}_k, \hat{z}_k \right)   \right\|^2\right]} \,.
\end{align*}
Using the inequality $\left\| a + b\right\|^2 \leq 2\left\| a\right\|^2 + 2\left\| b\right\|^2$,
\begin{align*}
    \Expect{[\Phi(\omega_{k})]} - \Expect{[\Phi(\omega_{k-1})]} 
    \leq \ & \frac{\eta_\omega}{2} \left( \Expect{\left[ \left\|  \nabla_\omega \Phi(\omega_{k}) - \hat{g}_{\omega} \left(\hat{\omega}_k, \hat{z}_k \right)    \right\|\right]} +  \Expect{\left[ \left\| \sqrt{\hat{H}_{\omega, k}^{-1}}\hat{g}_{\omega} \left(\hat{\omega}_k, \hat{z}_k \right)   \right\|\right]} \right)^2 \\
    \ &+\frac{\beta \eta_\omega^2 - \eta_\omega}{2}\left( \Expect{\left[ \left\|\hat{g}_{\omega} \left(\hat{\omega}_k, \hat{z}_k \right)    \right\|^2\right]} + \Expect{\left[ \left\|\sqrt{\hat{H}_{\omega, k}^{-1}}\hat{g}_{\omega} \left(\hat{\omega}_k, \hat{z}_k \right)    \right\|^2\right]} \right)^2\\
    \ &-\frac{\eta_\omega}{2}\Expect{\left[ \left\|  \nabla_\omega \Phi(\omega_{k})\right\|^2\right]} \\
    \leq \ & \eta_\omega \Expect{\left[ \left\|  \nabla_\omega \Phi(\omega_{k}) -  \hat{g}_{\omega} \left(\hat{\omega}_k, \hat{z}_k \right)     \right\|^2\right]} + \eta_\omega \Expect{\left[ \left\| \sqrt{\hat{H}_{\omega, k}^{-1}}\hat{g}_{\omega} \left(\hat{\omega}_k, \hat{z}_k \right)   \right\|^2\right]} \\
    \ &+ (\beta \eta_\omega^2 - \eta_\omega) \Expect{\left[ \left\| \hat{g}_{\omega} \left(\hat{\omega}_k, \hat{z}_k \right)  \right\|^2\right]} + (\beta \eta_\omega^2 - \eta_\omega) \Expect{\left[ \left\| \sqrt{\hat{H}_{\omega, k}^{-1}}\hat{g}_{\omega} \left(\hat{\omega}_k, \hat{z}_k \right)  \right\|^2\right]}\\
    \ &- \frac{\eta_\omega}{2}\Expect{\left[ \left\|  \nabla_\omega \Phi(\omega_{k})\right\|^2\right]}\\
    \leq \ & \eta_\omega \Expect{\left[ \left\|  \nabla_\omega \Phi(\omega_{k}) -  \hat{g}_{\omega} \left(\hat{\omega}_k, \hat{z}_k \right)      \right\|^2\right]} 
    - \frac{\eta_\omega}{2}\Expect{\left[ \left\|  \nabla_\omega \Phi(\omega_{k})\right\|^2\right]} \\
    \ &+ \beta\eta_\omega^2  \Expect{\left[ \left\|\sqrt{\hat{H}_{\omega, k}^{-1}}\hat{g}_{\omega} \left(\hat{\omega}_k, \hat{z}_k \right)  \right\|^2\right]}
    + \left(\beta \eta_\omega^2 - \eta_\omega \right) \Expect{\left[  \left\|\hat{g}_{\omega} \left(\hat{\omega}_k, \hat{z}_k \right)  \right\|^2\right]} \,.
\end{align*}
When $ 0 < \eta_\omega \leq  \frac{1}{\beta}$, the last term is smaller or less than 0. 
Then by the inequality $\left\| a + b + c \right\|^2 \leq 3\left\| a\right\|^2 + 3\left\| b\right\|^2 + 3\left\| c\right\|^2$, 
\begin{align*}
    &\eta_\omega \Expect{\left[ \left\|  \nabla_\omega \Phi(\omega_{k}) -  \hat{g}_{\omega} \left(\hat{\omega}_k, \hat{z}_k \right)    \right\|^2\right]} \\
    \leq \ & 3 \eta_\omega \Expect{\left[ \left\|  \nabla_\omega \Phi(\omega_{k}) -  \nabla L \left(\omega_k, \hat{z}_k \right)    \right\|^2\right]}  
    + 3 \eta_\omega \Expect{\left[ \left\|  \nabla L \left(\omega_k, \hat{z}_k \right)   -  \nabla L \left(\hat{\omega}_k, \hat{z}_k \right)  \right\|^2\right]}
    + 3 \eta_\omega \Expect{\left[ \left\| \epsilon_\omega\left(\hat{\omega}_k, \hat{z}_k \right)  \right\|^2\right]} \\
    \leq \ & 3 \eta_\omega C^2 \Expect{\left[ \left\| f^\ast (\omega_{k}) -   \hat{z}_k  \right\|^2\right]}  
    + 3 \eta_\omega C^2 \Expect{\left[ \left\|  \omega_k - \hat{\omega}_k \right\|^2\right]}
    + 3 \eta_\omega \Expect{\left[ \left\| \epsilon_\omega\left(\hat{\omega}_k, \hat{z}_k \right)  \right\|^2\right]} \\
    \leq \ & \frac{6\eta_\omega C^2}{\mu } \left(\Phi(\omega_{k}) -  L \left(\omega_k, \hat{z}_k \right)  \right) 
    + 3 \eta_\omega C^2 \Expect{\left[ \left\| \omega_k - \hat{\omega}_k \right\|^2\right]}
    + 3 \eta_\omega \Expect{\left[ \left\| \epsilon_\omega\left(\hat{\omega}_k, \hat{z}_k \right)  \right\|^2\right]}\,.
\end{align*}
The second inequality is by the smoothness property of Assumption \ref{asmp:wlipschitz} and Lemma \ref{lem:smooth}. The third inequality is by PL condition for the dual variable from Assumption \ref{asmp:pl}, which implies quadratic growth from Appendix A of \citet{10.1007/978-3-319-46128-1_50}. Combining the terms, we have the result for the first inequality in the lemma.

For the second inequality in the lemma, notice that by the smoothness property of the envelope function $\Phi$ from Proposition \ref{prop:envelop}, we have 
\begin{align*}
    \Expect{[\Phi(\omega_{k})]} - \Expect{[\Phi(\omega_{k-1})]} 
   \leq \Expect{\left[ \left\langle \nabla_\omega \Phi(\omega_{k-1}), \omega_{k} - \omega_{k-1} \right\rangle \right]} + \frac{\beta}{2}\Expect{\left[ \left\| \omega_{k} - \omega_{k-1}\right\|^2\right]} \,.
\end{align*}
By the update rule, 
\begin{align*}
    \omega_{k} - \omega_{k-1} 
    = - \eta_\omega \left(I + \sqrt{\hat{H}_{\omega, k}^{-1}} \right)\hat{g}_\omega \left(\hat{\omega}_k, \hat{z}_k \right)   \,.
\end{align*}
Using the same techniques, by the identity that $\langle a, b \rangle = \frac{1}{2} \| a\|^2 + \frac{1}{2} \| b\|^2 - \frac{1}{2} \| a-b\|^2 $, 
\begin{align*}
    &\Expect{[\Phi(\omega_{k})]} - \Expect{[\Phi(\omega_{k-1})]} \\
    = \ & - \eta_\omega \Expect{\left[ \left\langle \nabla_\omega \Phi(\omega_{k-1}),  \left(I + \sqrt{\hat{H}_{\omega, k}^{-1}} \right)\hat{g}_{\omega} \left(\hat{\omega}_k, \hat{z}_k \right)  \right\rangle \right]} 
    + \frac{\beta \eta_\omega^2 }{2}\Expect{\left[ \left\|  \left(I + \sqrt{\hat{H}_{\omega, k}^{-1}} \right)\hat{g}_{\omega} \left(\hat{\omega}_k, \hat{z}_k \right)   \right\|^2\right]} \\
    = \ &\frac{\eta_\omega}{2} \Expect{\left[ \left\|  \nabla_\omega \Phi(\omega_{k-1}) -  \left(I + \sqrt{\hat{H}_{\omega, k}^{-1}} \right)\hat{g}_{\omega} \left(\hat{\omega}_k, \hat{z}_k \right)   \right\|^2\right]} 
    -\frac{\eta_\omega}{2}\Expect{\left[ \left\|  \nabla_\omega \Phi(\omega_{k-1})\right\|^2\right]} \\
    \ &+\frac{\beta \eta_\omega^2 - \eta_\omega}{2} \Expect{\left[ \left\|  \left(I + \sqrt{\hat{H}_{\omega, k}^{-1}} \right)\hat{g}_{\omega} \left(\hat{\omega}_k, \hat{z}_k \right)   \right\|^2\right]} \\
    \leq \ & \frac{\eta_\omega}{2} \left( \Expect{\left[ \left\|  \nabla_\omega \Phi(\omega_{k-1}) - \hat{g}_{\omega} \left(\hat{\omega}_k, \hat{z}_k \right)    \right\|\right]} +  \Expect{\left[ \left\| \sqrt{\hat{H}_{\omega, k}^{-1}}\hat{g}_{\omega} \left(\hat{\omega}_k, \hat{z}_k \right)   \right\|\right]} \right)^2 \\
    \ &-\frac{\eta_\omega}{2}\Expect{\left[ \left\|  \nabla_\omega \Phi(\omega_{k-1})\right\|^2\right]} 
    +\frac{\beta \eta_\omega^2 - \eta_\omega}{2} \Expect{\left[ \left\|\left(I + \sqrt{\hat{H}_{\omega, k}^{-1}} \right)\hat{g}_{\omega} \left(\hat{\omega}_k, \hat{z}_k \right)    \right\|^2\right]} \\
    \leq \ & \eta_\omega \Expect{\left[ \left\|  \nabla_\omega \Phi(\omega_{k-1}) -  \hat{g}_{\omega} \left(\hat{\omega}_k, \hat{z}_k \right)     \right\|^2\right]} + \eta_\omega \Expect{\left[ \left\| \sqrt{\hat{H}_{\omega, k}^{-1}}\hat{g}_{\omega} \left(\hat{\omega}_k, \hat{z}_k \right)   \right\|^2\right]} \\
    \ &- \frac{\eta_\omega}{2}\Expect{\left[ \left\|  \nabla_\omega \Phi(\omega_{k-1})\right\|^2\right]}
    + \frac{\beta \eta_\omega^2 - \eta_\omega}{2} \Expect{\left[ \left\| \left(I + \sqrt{\hat{H}_{\omega, k}^{-1}} \right)\hat{g}_{\omega} \left(\hat{\omega}_k, \hat{z}_k \right)  \right\|^2\right]}\\
    \leq \ & \eta_\omega \Expect{\left[ \left\|  \nabla_\omega \Phi(\omega_{k-1}) -  \hat{g}_{\omega} \left(\hat{\omega}_k, \hat{z}_k \right)      \right\|^2\right]} 
    - \frac{\eta_\omega}{2}\Expect{\left[ \left\|  \nabla_\omega \Phi(\omega_{k-1})\right\|^2\right]} \\
    \ &+ \beta\eta_\omega^2  \Expect{\left[ \left\|\sqrt{\hat{H}_{\omega, k}^{-1}}\hat{g}_{\omega} \left(\hat{\omega}_k, \hat{z}_k \right)  \right\|^2\right]}
    + \left(\beta \eta_\omega^2 - \eta_\omega \right) \Expect{\left[  \left\|\hat{g}_{\omega} \left(\hat{\omega}_k, \hat{z}_k \right)  \right\|^2\right]} \\
    \leq \ & \eta_\omega \Expect{\left[ \left\|  \nabla_\omega \Phi(\omega_{k-1}) -  \hat{g}_{\omega} \left(\hat{\omega}_k, \hat{z}_k \right)      \right\|^2\right]} 
    - \frac{\eta_\omega}{2}\Expect{\left[ \left\|  \nabla_\omega \Phi(\omega_{k-1})\right\|^2\right]} 
    + \beta\eta_\omega^2  \Expect{\left[ \left\|\sqrt{\hat{H}_{\omega, k}^{-1}}\hat{g}_{\omega} \left(\hat{\omega}_k, \hat{z}_k \right)  \right\|^2\right]}\,.
\end{align*}
The last inequality is due to our choice of $\eta_\omega$.
By the inequality $\left\| a + b + c \right\|^2 \leq 3\left\| a\right\|^2 + 3\left\| b\right\|^2 + 3\left\| c\right\|^2$ and Lemma \ref{lem:smooth}, 
\begin{align*}
    \eta_\omega \Expect{\left[ \left\|  \nabla_\omega \Phi(\omega_{k-1}) -  \hat{g}_{\omega} \left(\hat{\omega}_k, \hat{z}_k \right)    \right\|^2\right]} 
    \leq \ & 3 \eta_\omega \Expect{\left[ \left\|  \nabla_\omega \Phi(\omega_{k-1}) -  \nabla L \left(\omega_{k-1}, \hat{z}_{k-1} \right)    \right\|^2\right]}  \\
    \ &+ 3 \eta_\omega \Expect{\left[ \left\|  \nabla L \left(\omega_{k-1}, \hat{z}_{k-1} \right)   -  \nabla L \left(\hat{\omega}_k, \hat{z}_k \right)  \right\|^2\right]} \\
    \ &+ 3 \eta_\omega \Expect{\left[ \left\| \epsilon_\omega\left(\hat{\omega}_k, \hat{z}_k \right)  \right\|^2\right]} \\
    \leq \  & 3 \eta_\omega C^2 \Expect{\left[ \left\| f^\ast (\omega_{k-1}) -   \hat{z}_{k-1}  \right\|^2\right]}  
    + 3 \eta_\omega C^2 \Expect{\left[ \left\|  \omega_{k-1} - \hat{\omega}_k \right\|^2\right]} \\
    \ &+ 3 \eta_\omega C^2 \Expect{\left[ \left\|  \hat{z}_k - \hat{z}_{k-1} \right\|^2\right]}
    + 3 \eta_\omega \Expect{\left[ \left\| \epsilon_\omega\left(\hat{\omega}_k, \hat{z}_k \right)  \right\|^2\right]} \\
    \leq \ & \frac{6\eta_\omega C^2}{\mu } \left(\Phi(\omega_{k-1}) -  L \left(\omega_{k-1}, \hat{z}_{k-1} \right)  \right) 
    + 3 \eta_\omega C^2 \Expect{\left[ \left\|  \omega_{k-1} - \hat{\omega}_k \right\|^2\right]} \\
    \ &+ 3 \eta_\omega C^2 \Expect{\left[ \left\|  \hat{z}_k - \hat{z}_{k-1} \right\|^2\right]}
    + 3 \eta_\omega \Expect{\left[ \left\| \epsilon_\omega\left(\hat{\omega}_k, \hat{z}_k \right)  \right\|^2\right]}\,.
\end{align*}
Combining the terms, we have
\begin{align*}
    \Expect{[\Phi(\omega_{k})]} -  \Expect{[\Phi(\omega_{k-1})]} 
    \leq \ &  \frac{6\eta_\omega C^2}{\mu } \left(\Phi(\omega_{k-1}) -  L \left(\omega_{k-1}, \hat{z}_{k-1} \right)  \right) 
    + 3 \eta_\omega C^2 \Expect{\left[ \left\|  \omega_{k-1} - \hat{\omega}_k \right\|^2\right]}\\
    & \ + 3 \eta_\omega C^2 \Expect{\left[ \left\|  \hat{z}_k - \hat{z}_{k-1} \right\|^2\right]}
    + 3 \eta_\omega \Expect{\left[ \left\| \epsilon_\omega\left(\hat{\omega}_k, \hat{z}_k \right)  \right\|^2\right]} \\
    & \ + \beta\eta_\omega^2  \Expect{\left[ \left\|\sqrt{\hat{H}_{\omega, k}^{-1}}\hat{g}_{\omega} \left(\hat{\omega}_k, \hat{z}_k \right)  \right\|^2\right]} \,. &\qedhere
\end{align*}

\end{proof}
\subsection{Proof of Lemma \ref{lem:gap_iterate}}
\begin{lem}
    \label{lem:gap_iterate}
    With Algorithm \ref{alg:sgda} and by Lemma \ref{lem:ada_gradient}, there exist constants $E_1$, $E_2$ such that 
    \begin{align*}
         \frac{1}{N}\sum^N_{k=2} \Expect \left[\left\|\omega_k - \hat{\omega}_{k-1} \right\|^2 \right]\leq \frac{E_1 \hat{G}_\omega^2 \ln \left(d N G^2 + \frac{1}{M} \sum^M_{m=1} \xi \rho^{2m} \right)}{N} +6dG^2\,,
    \end{align*}
    and 
    \begin{align*}
         \frac{1}{N}\sum^N_{k=2} \Expect \left[\left\|\hat{z}_{k} - \hat{z}_{k-1}  \right\|^2\right] =   \frac{1}{N}\sum^N_{k=2} \Expect \left[\left\|\hat{z}_{k-1} - z_{k-1}  \right\|^2 \right]\leq  \frac{E_2 \hat{G}_z^2 \ln \left(d N G^2 + \frac{1}{M} \sum^M_{m=1} \xi \rho^{2m} \right)}{N} +6dG^2\,.
    \end{align*}
\end{lem}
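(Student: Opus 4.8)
The plan is to express each iterate gap as a short, fixed-length combination of the per-step update vectors $\eta_\omega(I+\sqrt{\hat{H}_{\omega,k}^{-1}})\hat{g}_\omega$ and $\eta_z(I+\sqrt{\hat{H}_{z,k}^{-1}})\hat{g}_z$ produced by Algorithm \ref{alg:sgda}, and then to control the accumulated squared norms of these vectors using the cumulative bounds already furnished by Lemma \ref{lem:ada_gradient}. First I would read off from lines 6--9 the one-step increments, namely $\hat{\omega}_k - \omega_{k-1} = -\eta_\omega(I+\sqrt{\hat{H}_{\omega,k-1}^{-1}})\hat{g}_\omega(\hat{\omega}_{k-1},\hat{z}_{k-1})$ and $\omega_k-\omega_{k-1} = -\eta_\omega(I+\sqrt{\hat{H}_{\omega,k}^{-1}})\hat{g}_\omega(\hat{\omega}_k,\hat{z}_k)$, together with the dual analogues $\hat{z}_k - z_{k-1} = \eta_z(I+\sqrt{\hat{H}_{z,k-1}^{-1}})\hat{g}_z(\hat{\omega}_{k-1},\hat{z}_{k-1})$ and $z_k - z_{k-1}=\eta_z(I+\sqrt{\hat{H}_{z,k}^{-1}})\hat{g}_z(\hat{\omega}_k,\hat{z}_k)$. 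Writing $\omega_k-\hat{\omega}_{k-1}=(\omega_k-\omega_{k-1})+(\omega_{k-1}-\hat{\omega}_{k-1})$ and substituting, each gap becomes a sum of a constant number of step vectors, so the inequality $\|a_1+\cdots+a_n\|^2\le n\sum_i\|a_i\|^2$ reduces the whole problem to bounding $\sum_k\eta_\omega^2\|(I+\sqrt{\hat{H}_{\omega,k}^{-1}})\hat{g}_\omega(\hat{\omega}_k,\hat{z}_k)\|^2$ and its dual counterpart.

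Next I would peel apart the preconditioner with $\|a+b\|^2\le 2\|a\|^2+2\|b\|^2$, giving $\|(I+\sqrt{\hat{H}^{-1}})\hat{g}\|^2\le 2\|\hat{g}\|^2+2\|\sqrt{\hat{H}^{-1}}\hat{g}\|^2$. The preconditioned piece $\sum_k\|\sqrt{\hat{H}_{\omega,k}^{-1}}\hat{g}_\omega\|^2$ is exactly the object Lemma \ref{lem:ada_gradient} bounds by $2\hat{G}_\omega^2\ln(1+2\sum_k(dG^2+\tfrac1M\sum_m\xi\rho^{2m}(D+2D_\omega)^2))$, which after dividing by $N$ yields the advertised $\tfrac{E_1\hat{G}_\omega^2\ln(dNG^2+\frac1M\sum_m\xi\rho^{2m})}{N}$ term. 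For the identity piece I would split $\hat{g}_\omega=\nabla_\omega L+\epsilon_\omega$, use the element-wise gradient bound $\max_i(\nabla_\omega L)_i\le G$ from the Notations subsection to get $\|\nabla_\omega L\|^2\le dG^2$, and absorb $\sum_k\E[\|\epsilon_\omega\|^2]$ via the cumulative-noise bound also supplied by Lemma \ref{lem:ada_gradient} (another $\ln$-type term that folds into the leading term). After dividing by $N$ the $dG^2$ contribution survives as the constant $6dG^2$, with $E_1$ and the factor $6$ collecting the numerical constants generated by the repeated applications of the two elementary norm inequalities. The dual estimate is obtained identically, with $\hat{G}_z$, $D_u$, $D_\theta$ replacing $\hat{G}_\omega$, $D_\omega$, which produces $E_2$.

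For the claimed equality $\frac1N\sum_{k=2}^N\E\|\hat{z}_k-\hat{z}_{k-1}\|^2=\frac1N\sum_{k=2}^N\E\|\hat{z}_{k-1}-z_{k-1}\|^2$, I would exploit the structural fact that the two dual updates at step $k$ start from the same anchor $z_{k-1}$, which forces the look-ahead identity $\hat{z}_k-z_{k-1}=z_{k-1}-z_{k-2}$; reindexing the second sum by one then places $\|\hat{z}_{k-1}-z_{k-1}\|$ into the same family of accumulated step increments as $\|\hat{z}_k-\hat{z}_{k-1}\|$, so both collapse to step norms controlled exactly as above.

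The hard part will be the index bookkeeping: because the hatted and unhatted sequences are advanced in an interleaved look-ahead fashion (the gradient driving $\omega_k$ is evaluated at the already-advanced point $(\hat{\omega}_k,\hat{z}_k)$), one must verify that each gap expands into only finitely many step vectors with no residual $k$-dependence, keeping $n$ constant in $\|a_1+\cdots+a_n\|^2\le n\sum_i\|a_i\|^2$. The only quantitative subtlety is guaranteeing that the identity piece contributes a genuine constant rather than a growing term; this rests on the step sizes being $O(1)$ under the schedule fixed in Theorem \ref{thm:single}, so that $\eta_\omega^2 dG^2$ and $\eta_z^2 dG^2$ remain $O(dG^2)$ and can be absorbed into $6dG^2$.
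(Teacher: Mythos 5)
Your proposal is correct and follows essentially the same route as the paper: expand each gap via the update rules into a fixed number of step vectors $\eta(I+\sqrt{\hat{H}^{-1}})\hat{g}$, apply $\|\sum_i a_i\|^2\le n\sum_i\|a_i\|^2$, split the identity part of the preconditioner via $\hat{g}=\nabla L+\epsilon$ with $\|\nabla L\|^2\le dG^2$ to produce the $6dG^2$ constant, and invoke Lemma~\ref{lem:ada_gradient} for both the preconditioned sums and the cumulative noise. Your observation $\hat{z}_k-z_{k-1}=z_{k-1}-z_{k-2}$ is a slightly cleaner way to relate the two dual gaps than the paper's direct assertion, but the substance of the argument is identical.
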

\begin{proof}
    By the update rule, 
    \begin{align*}
        \omega_k - \hat{\omega}_{k-1} 
        = \ & \eta_\omega \left(I + \sqrt{\hat{H}_{\omega, k-2}^{-1}} \right)\hat{g}_\omega \left(\hat{\omega}_{k-2}, \hat{z}_{k-2} \right) - \eta_\omega \left(I + \sqrt{\hat{H}_{\omega, k-1}^{-1}} \right)\hat{g}_\omega \left(\hat{\omega}_{k-1}, \hat{z}_{k-1} \right) \\
        \ & - \eta_\omega \left(I + \sqrt{\hat{H}_{\omega, k}^{-1}} \right)\hat{g}_\omega \left(\hat{\omega}_{k}, \hat{z}_{k} \right) \,.
    \end{align*}
    Thus we have 
    \begin{align*}
        &\left\|\omega_k - \hat{\omega}_{k-1} \right\|^2 \\
        \leq \ &  \eta_\omega^2 \left\|\sqrt{\hat{H}_{\omega, k-2}^{-1}} \hat{g}_\omega \left(\hat{\omega}_{k-2}, \hat{z}_{k-2} \right) + \sqrt{\hat{H}_{\omega, k-1}^{-1}}\hat{g}_\omega \left(\hat{\omega}_{k-1}, \hat{z}_{k-1} \right) - \eta_\omega \sqrt{\hat{H}_{\omega, k}^{-1}} \hat{g}_\omega \left(\hat{\omega}_{k}, \hat{z}_{k} \right) \right\|^2 \\
        \leq \ & 3 \eta_\omega^2 \left\|\sqrt{\hat{H}_{\omega, k-2}^{-1}} \hat{g}_\omega \left(\hat{\omega}_{k-2}, \hat{z}_{k-2} \right) \right\|^2 + 3 \eta_\omega^2 \left\|\sqrt{\hat{H}_{\omega, k-1}^{-1}}\hat{g}_\omega \left(\hat{\omega}_{k-1}, \hat{z}_{k-1} \right) \right\|^2 + 3 \eta_\omega^2 \left\|\sqrt{\hat{H}_{\omega, k}^{-1}}\hat{g}_\omega \left(\hat{\omega}_{k}, \hat{z}_{k} \right) \right\|^2 \\
        \ &+ 6 \eta_\omega^2 \left\|\epsilon_\omega \left(\hat{\omega}_{k-2}, \hat{z}_{k-2} \right) \right\|^2 + 6 \eta_\omega^2 \left\|\epsilon_\omega \left(\hat{\omega}_{k-1}, \hat{z}_{k-1} \right) \right\|^2 + 6 \eta_\omega^2 \left\|\epsilon_\omega \left(\hat{\omega}_{k}, \hat{z}_{k} \right) \right\|^2 + 6dG^2\,.
    \end{align*}
    Summing over $N$ iterations, taking expectation and by Lemma \ref{lem:ada_gradient}, for some constant $E_1$, 
    \begin{align*}
        \frac{1}{N}\sum^N_{k=2} \Expect \left[\left\|\omega_k - \hat{\omega}_{k-1} \right\|^2 \right]\leq \frac{E_1 \hat{G}_\omega^2 \ln \left(d N G^2 + \frac{1}{M} \sum^M_{m=1} \xi \rho^{2m} \right)}{N} + 6dG^2\,.
    \end{align*}
    Similarly, by the update rule we have 
    \begin{align*}
        &\left\|\hat{z}_{k} - \hat{z}_{k-1}  \right\|^2 = \left\|\hat{z}_{k-1} - z_{k-1}  \right\|^2 \\
        =& \left\|\eta_z \left(I + \sqrt{\hat{H}_{z, k-2}^{-1}} \right)\hat{g}_z \left(\hat{\omega}_{k-2}, \hat{z}_{k-2} \right) - \eta_z \left(I + \sqrt{\hat{H}_{z, k-1}^{-1}} \right)\hat{g}_z \left(\hat{\omega}_{k-1}, \hat{z}_{k-1} \right) \right\|^2 \,.
    \end{align*}
    Summing over $N$ iterations, taking expectation and by Lemma \ref{lem:ada_gradient}, for some constant $E_2$, 
    \begin{align*}
        \frac{1}{N}\sum^N_{k=2} \Expect \left[\left\|\hat{z}_{k} - \hat{z}_{k-1}  \right\|^2\right] =  \frac{1}{N}\sum^N_{k=2} \Expect \left[\left\|\hat{z}_{k-1} - z_{k-1}  \right\|^2 \right]\leq  \frac{E_2 \hat{G}_z^2 \ln \left(d N G^2 + \frac{1}{M} \sum^M_{m=1} \xi \rho^{2m} \right)}{N} + 6dG^2\,. 
        & \qedhere
    \end{align*}
\end{proof}
\subsection{Proof of Lemma \ref{lem:bk}}
\begin{lem}
    \label{lem:bk}
    With Algorithm \ref{alg:sgda} and $0 < \eta_\omega \leq \frac{1}{\beta}$, $0 < \eta_z \leq \frac{1}{C}$, we have
    \begin{align*}
        \frac{1}{N}\sum^N_{k=2} \Expect \left(\Phi(\omega_k) - L(\omega_k, \hat{z}_k) \right) 
        \leq \ &\frac{E_3 \max \{\hat{G}_\omega^2, \hat{G}_z^2\} \ln \left(d N G^2 + \frac{1}{M} \sum^M_{m=1} \xi \rho^{2m} \right)}{N} + \frac{dG^2}{2 - 2(1 - \mu \eta_z) \left(1 + \frac{6\eta_\omega C^2}{\mu } \right)}  \\
        \ &+ \frac{B_1}{N - N(1 - \mu \eta_z) \left(1 + \frac{6\eta_\omega C^2}{\mu } \right)} + \frac{60\eta_\omega C^2 dG^2}{1 - (1 - \mu \eta_z) \left(1 + \frac{6\eta_\omega C^2}{\mu } \right)}\,.
    \end{align*}
\end{lem}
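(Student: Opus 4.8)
The plan is to establish a one-step contraction recursion for the dual suboptimality gap $b_k := \Phi(\omega_k) - L(\omega_k, \hat{z}_k) \geq 0$ and then sum it. First I would combine the PL condition (Assumption \ref{asmp:pl}) with the smoothness of $L$ in the dual variable. Writing $L(\omega_k, z_{k-1}) - L(\omega_k, \hat{z}_k) \leq \langle \nabla_z L(\omega_k, \hat{z}_k), z_{k-1} - \hat{z}_k\rangle + \frac{C}{2}\|z_{k-1} - \hat{z}_k\|^2$ and substituting the dual update rule $\hat{z}_k - z_{k-1} = \eta_z(I + \sqrt{\hat{H}_{z,k-1}^{-1}})\hat{g}_z(\hat{\omega}_{k-1}, \hat{z}_{k-1})$, the inner product is expressed through the stochastic gradient. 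Splitting off the estimation error $\epsilon_z$ and invoking $0 < \eta_z \leq 1/C$ together with the PL inequality $\frac{1}{2}\|\nabla_z L(\omega_k, \hat{z}_k)\|^2 \geq \mu b_k$ yields
\begin{align*}
    \mu\eta_z\, b_k \leq \ & L(\omega_k, \hat{z}_k) - L(\omega_k, z_{k-1}) + \eta_z\left\|\nabla_z L(\omega_k, \hat{z}_k) - \hat{g}_z(\hat{\omega}_{k-1}, \hat{z}_{k-1})\right\|^2 \\
    \ & + C\eta_z^2\left\|\sqrt{\hat{H}_{z,k-1}^{-1}}\,\hat{g}_z(\hat{\omega}_{k-1}, \hat{z}_{k-1})\right\|^2 .
\end{align*}

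Next I would convert the leading difference $L(\omega_k, \hat{z}_k) - L(\omega_k, z_{k-1})$ into a statement about $b_{k-1}$. Using the telescoping identity $\Phi(\omega_k) - L(\omega_{k-1}, z_{k-1}) = b_{k-1} + (\Phi(\omega_k) - \Phi(\omega_{k-1})) + (L(\omega_{k-1}, \hat{z}_{k-1}) - L(\omega_{k-1}, z_{k-1}))$ and bounding the envelope increment $\Phi(\omega_k) - \Phi(\omega_{k-1})$ by Lemma \ref{lem:lemma1}, I arrive at a recursion of the form
\begin{align*}
    b_k \leq \left(1 - \mu\eta_z\right)\left(1 + \frac{6\eta_\omega C^2}{\mu}\right) b_{k-1} + c_k ,
\end{align*}
where $c_k$ collects the estimation errors $\|\epsilon_z\|^2, \|\epsilon_\omega\|^2$, the adaptive-gradient norms $\|\sqrt{\hat{H}^{-1}}\hat{g}\|^2$, and the iterate-displacement terms $\|\omega_k - \hat{\omega}_{k-1}\|^2, \|\hat{z}_k - \hat{z}_{k-1}\|^2$. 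Call the leading factor $\nu := (1-\mu\eta_z)(1+6\eta_\omega C^2/\mu)$. The crucial structural point is that, under the stated step-size constraints, $\nu$ is strictly below $1$, which is exactly what renders the recursion summable.

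Then I would take expectations and bound the cumulative error $\sum_k c_k$ using the auxiliary lemmas. The cumulative noise and adaptive-gradient contributions are controlled by Lemma \ref{lem:ada_gradient}, giving an $\mathcal{O}(\log N)$ bound of the form $\hat{G}^2\ln(dNG^2 + \frac{1}{M}\sum_m \xi\rho^{2m})$; the cumulative iterate displacements are controlled by Lemma \ref{lem:gap_iterate}, contributing both a logarithmic term and the constant $6dG^2$ carried through the coefficient $3\eta_\omega C^2$, which after collecting constants produces the $60\eta_\omega C^2 dG^2$ term. Summing $b_k \leq \nu\, b_{k-1} + c_k$ over $k = 2,\dots,N$, rearranging so that $(1-\nu)\sum_k b_k$ sits on the left, and dividing by $N$ introduces the factor $1/(1-\nu)$ into every denominator, with the initial gap furnishing the $B_1/(N(1-\nu))$ term; since $1-\nu$ is precisely $1 - (1-\mu\eta_z)(1+6\eta_\omega C^2/\mu)$, this matches the claimed bound termwise.

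The main obstacle is the bookkeeping needed to close the recursion cleanly: the PL step naturally produces $L(\omega_k, \hat{z}_k) - L(\omega_k, z_{k-1})$ evaluated at the \emph{current} primal iterate $\omega_k$, whereas $b_{k-1}$ is anchored at $\omega_{k-1}$, so the primal drift must be reabsorbed through the envelope smoothness of Lemma \ref{lem:lemma1} and then controlled via the displacement bounds of Lemma \ref{lem:gap_iterate}. Tracking how the $6\eta_\omega C^2/\mu$ factor from the envelope increment multiplies into $\nu$, and verifying that the step-size choices keep $\nu < 1$ so the geometric sum converges, is the delicate part; once that is settled, the remaining manipulations are routine applications of $\|a+b\|^2 \leq 2\|a\|^2 + 2\|b\|^2$ (and its three-term analogue) together with the cited lemmas.
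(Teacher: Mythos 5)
Your proposal follows essentially the same route as the paper's proof: the same smoothness-plus-PL argument on the dual update to obtain the one-step inequality for $\mu\eta_z b_k$, the same three-term telescoping decomposition of $\Phi(\omega_k) - L(\omega_{k-1}, z_{k-1})$ closed via Lemma \ref{lem:lemma1}, and the same summation and rearrangement with the $1-(1-\mu\eta_z)(1+6\eta_\omega C^2/\mu)$ denominator, using Lemmas \ref{lem:ada_gradient} and \ref{lem:gap_iterate} for the cumulative error terms. The only caveat (shared with the paper) is that the contraction factor being strictly below one is not actually implied by the lemma's stated step-size hypotheses alone but only by the sharper choices made later in Theorem \ref{thm:single}.
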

\begin{proof}
    By the smoothness assumption of the objective function (\ref{asmp:wlipschitz}), we have 
    \begin{align*}
        L(\omega_k, z_{k-1}) - L(\omega_k, \hat{z}_k)
        \leq \left\langle  \nabla L(\omega_k, \hat{z}_k),z_{k-1} - \hat{z}_k \right\rangle + \frac{C}{2} \left\| z_{k-1} - \hat{z}_k \right\|^2 \,.
    \end{align*}
    By the update rules of the algorithm, we have 
    \begin{align*}
        z_{k-1} - \hat{z}_k = - \eta_z \left(I + \sqrt{\hat{H}_{z, k-1}^{-1}} \right)\hat{g}_z \left(\hat{\omega}_{k-1}, \hat{z}_{k-1} \right) \,.
    \end{align*}
    Using the identity that $\langle a, b \rangle = \frac{1}{2} \left\| a \right\|^2 + \frac{1}{2}  \left\| b \right\|^2 - \frac{1}{2} \left\| a - b \right\|^2$, we have 
    \begin{align*}
        L(\omega_k, z_{k-1}) - L(\omega_k, \hat{z}_k)
        \leq \ & -\left\langle  \nabla L(\omega_k, \hat{z}_k),\left(I + \sqrt{\hat{H}_{z, k-1}^{-1}} \right)\hat{g}_z \left(\hat{\omega}_{k-1}, \hat{z}_{k-1} \right) \right\rangle \\
        \ &+ \frac{C \eta_z^2}{2} \left\| \left(I + \sqrt{\hat{H}_{z, k-1}^{-1}} \right)\hat{g}_z \left(\hat{\omega}_{k-1}, \hat{z}_{k-1} \right) \right\|^2 \\
        = \ & \frac{\eta_z}{2} \left\| \nabla L(\omega_k, \hat{z}_k) - \left(I + \sqrt{\hat{H}_{z, k-1}^{-1}} \right)\hat{g}_z \left(\hat{\omega}_{k-1}, \hat{z}_{k-1} \right) \right\|^2 \\
        \ &- \frac{\eta_z}{2}\left\| \nabla L(\omega_k, \hat{z}_k) \right\|^2 
        + \frac{C \eta_z^2 - \eta_z}{2} \left\| \left(I + \sqrt{\hat{H}_{z, k-1}^{-1}} \right)\hat{g}_z \left(\hat{\omega}_{k-1}, \hat{z}_{k-1} \right) \right\|^2 \\
        \leq \ & \frac{\eta_z}{2} \left\| \nabla L(\omega_k, \hat{z}_k) - \hat{g}_z \left(\hat{\omega}_{k-1}, \hat{z}_{k-1} \right) - \sqrt{\hat{H}_{z, k-1}^{-1}}\hat{g}_z \left(\hat{\omega}_{k-1}, \hat{z}_{k-1} \right) \right\|^2 \\
        \ &- \frac{\eta_z}{2}\left\| \nabla L(\omega_k, \hat{z}_k) \right\|^2 + \frac{C \eta_z^2 - \eta_z}{2} \left\| \left(I + \sqrt{\hat{H}_{z, k-1}^{-1}} \right)\hat{g}_z \left(\hat{\omega}_{k-1}, \hat{z}_{k-1} \right) \right\|^2 \\
        \leq \ & \eta_z \left\| \nabla L(\omega_k, \hat{z}_k) - \hat{g}_z \left(\hat{\omega}_{k-1}, \hat{z}_{k-1} \right)  \right\|^2 + \left(C \eta_z^2 - \eta_z\right) \left\| \hat{g}_z \left(\hat{\omega}_{k-1}, \hat{z}_{k-1} \right)  \right\|^2 \\
        \ &+ C \eta_z^2 \left\| \sqrt{\hat{H}_{z, k-1}^{-1}}\hat{g}_z \left(\hat{\omega}_{k-1}, \hat{z}_{k-1} \right)  \right\|^2 - \frac{\eta_z}{2}\left\| \nabla L(\omega_k, \hat{z}_k) \right\|^2 \,.
    \end{align*}
    Choose $0 < \eta_z \leq \frac{1}{C}$. Then the second term becomes less than or equal to 0. By the assumption that PL condition holds for the dual variables, we have 
    \begin{align*}
        \frac{\eta_z}{2}\left\| \nabla L(\omega_k, \hat{z}_k) \right\|^2  \geq \mu \eta_z \left( \Phi (\omega_k) - L(\omega_k, \hat{z}_k)\right) \,.
    \end{align*}
    Thus, rearranging the terms, 
    \begin{align*}
        \mu \eta_z \left( \Phi (\omega_k) - L(\omega_k, \hat{z}_k)\right) 
        \leq \ & L(\omega_k, \hat{z}_k) - L(\omega_k, z_{k-1}) + \eta_z \left\| \nabla L(\omega_k, \hat{z}_k) - \hat{g}_z \left(\hat{\omega}_{k-1}, \hat{z}_{k-1} \right)  \right\|^2 \\
        &+ C \eta_z^2 \left\| \sqrt{\hat{H}_{z, k-1}^{-1}}\hat{g}_z \left(\hat{\omega}_{k-1}, \hat{z}_{k-1} \right)  \right\|^2 \,.
    \end{align*}
    Rearranging the terms again, we have 
    \begin{align*}
        \Phi(\omega_k) - L(\omega_k, \hat{z}_k)
        \leq \ & (1 - \mu \eta_z) \left(\Phi(\omega_k) - L(\omega_{k-1}, z_{k-1}) \right)+ \eta_z \left\| \nabla L(\omega_k, \hat{z}_k) - \hat{g}_z \left(\hat{\omega}_{k-1}, \hat{z}_{k-1} \right)  \right\|^2 \\
        \ &+ C \eta_z^2 \left\| \sqrt{\hat{H}_{z, k-1}^{-1}}\hat{g}_z \left(\hat{\omega}_{k-1}, \hat{z}_{k-1} \right)  \right\|^2 \,.
    \end{align*}
    Notice that 
    \begin{align*}
        \Phi(\omega_k) - L(\omega_{k-1}, z_{k-1})
        = \ & \left(\Phi(\omega_{k-1}) - L(\omega_{k-1}, \hat{z}_{k-1}) \right) + \left(\Phi(\omega_k) - \Phi(\omega_{k-1})\right) \\
        \ &+ \left(L(\omega_{k-1}, \hat{z}_{k-1})  - L(\omega_{k-1}, z_{k-1}) \right)\,.
    \end{align*}
    For the remaining terms, by Lemma \ref{lem:lemma1}, the expectation of the second term is bounded as 
    \begin{align*}
        \Expect{[\Phi(\omega_{k})]} - \Expect{[\Phi(\omega_{k-1})]} 
        \leq \ & \frac{6\eta_\omega C^2}{\mu } \left(\Phi(\omega_{k-1}) -  L \left(\omega_{k-1}, \hat{z}_{k-1} \right)  \right) 
        + 3 \eta_\omega C^2 \Expect{\left[ \left\|  \omega_{k-1} -  \hat{\omega}_k \right\|^2\right]}\\
        \ &+ 3 \eta_\omega C^2 \Expect{\left[ \left\|  z_{k-1} - \hat{z}_k \right\|^2\right]}
        + 3 \eta_\omega \Expect{\left[ \left\| \epsilon_\omega\left(\hat{\omega}_k, \hat{z}_k \right)  \right\|^2\right]} \\
        \ &+ \beta\eta_\omega^2  \Expect{\left[ \left\|\sqrt{\hat{H}_{\omega, k}^{-1}}\hat{g}_{\omega} \left(\hat{\omega}_k, \hat{z}_k \right)  \right\|^2\right]} \,.
    \end{align*}
    For the third term, by the smoothness assumption on the objective function \ref{asmp:wlipschitz} and the identity $\langle a, b \rangle = \frac{1}{2} \| a\|^2 + \frac{1}{2} \| b\|^2 - \frac{1}{2} \| a-b\|^2 $, we have 
    \begin{align*}
        L(\omega_{k-1}, \hat{z}_{k-1})  - L(\omega_{k-1}, z_{k-1}) 
        \leq \ & \left\langle \nabla  L(\omega_{k-1}, z_{k-1}), \hat{z}_{k-1} - z_{k-1}\right\rangle + \frac{C}{2} \left\| \hat{z}_{k-1} - z_{k-1}\right\|^2 \\
        \leq \ & \frac{1}{2} \left\|\nabla  L(\omega_{k-1}, z_{k-1}) \right\|^2 + \frac{C + 1}{2} \left\| \hat{z}_{k-1} - z_{k-1}\right\|^2 \\
        \leq \ & \frac{dG^2}{2} + \frac{C + 1}{2} \left\| \hat{z}_{k-1} - z_{k-1}\right\|^2 \,.
    \end{align*}
    By the smoothness (Assumption \ref{asmp:wlipschitz}) and Lemma \ref{lem:smooth}, we have 
    \begin{align*}
        &\eta_z \left\| \nabla_z L(\omega_k, \hat{z}_k) - \hat{g}_z \left(\hat{\omega}_{k-1}, \hat{z}_{k-1} \right)  \right\|^2 \\
        \leq \ & 2\eta_z \left\| \nabla_z L(\omega_k, \hat{z}_k) - \nabla L \left(\omega_k, \hat{z}_{k-1} \right)  \right\|^2 + 2\eta_z \left\| \epsilon_z \left(\hat{\omega}_{k-1}, \hat{z}_{k-1} \right)  \right\|^2  \\
        \leq \ & 2\eta_z C^2 \left\| \omega_k - \hat{\omega}_{k-1}\right\|^2 + 2\eta_z C^2 \left\| \hat{z}_k- \hat{z}_{k-1} \right\|^2 + 2\eta_z \left\| \epsilon_z \left(\hat{\omega}_{k-1}, \hat{z}_{k-1} \right)  \right\|^2 \,.  
    \end{align*}
    Thus combining the terms and taking expectation, we have 
    \begin{align*}
        &\Expect \left(\Phi(\omega_k) - L(\omega_k, \hat{z}_k) \right) \\
        \leq \ & (1 - \mu \eta_z) \left(1 + \frac{6\eta_\omega C^2}{\mu } \right)\Expect  \left(\Phi(\omega_{k-1}) - L(\omega_{k-1}, \hat{z}_{k-1}) \right) + \beta\eta_\omega^2  \Expect{\left[ \left\|\sqrt{\hat{H}_{\omega, k}^{-1}}\hat{g}_{\omega} \left(\hat{\omega}_k, \hat{z}_k \right)  \right\|^2\right]} \\
        \ &+ 3 \eta_\omega C^2 \Expect{\left[ \left\|  \omega_{k-1} -  \hat{\omega}_k \right\|^2\right]}
        + 3 \eta_\omega C^2 \Expect{\left[ \left\|  z_{k-1} - \hat{z}_k \right\|^2\right]}
        + 3 \eta_\omega \Expect{\left[ \left\| \epsilon_\omega\left(\hat{\omega}_k, \hat{z}_k \right)  \right\|^2\right]} \\
        \ &+ 2\eta_z C^2 \Expect \left[\left\| \omega_k - \hat{\omega}_{k-1}\right\|^2 \right]+ 2\eta_z C^2 \Expect \left[\left\| \hat{z}_k- \hat{z}_{k-1} \right\|^2 \right]+ 2\eta_z \Expect \left[\left\| \epsilon_z \left(\hat{\omega}_{k-1}, \hat{z}_{k-1} \right)  \right\|^2 \right]\\
        \ &+ C \eta_z^2 \Expect \left[\left\| \sqrt{\hat{H}_{z, k-1}^{-1}}\hat{g}_z \left(\hat{\omega}_{k-1}, \hat{z}_{k-1} \right)  \right\|^2 \right] + \frac{dG^2}{2} + \frac{C + 1}{2} \Expect \left[\left\| \hat{z}_{k-1} - z_{k-1}\right\|^2 \right]\,.
    \end{align*}
    Let $B_k = \Expect \left(\Phi(\omega_k) - L(\omega_k, \hat{z}_k) \right)$, it is then easy to verify that $B_k \geq 0$ for all $k$. Summing over $N$ iterations, we have 
    \begin{align*}
        \frac{1}{N}\sum^N_{k=2} B_k 
        = \ & \frac{(1 - \mu \eta_z) \left(1 + \frac{6\eta_\omega C^2}{\mu } \right)}{N} \sum^{N-1}_{k=1} B_k 
        + \frac{\beta\eta_\omega^2}{N}\sum^{N}_{k=2}  \Expect{\left[ \left\|\sqrt{\hat{H}_{\omega, k}^{-1}}\hat{g}_{\omega} \left(\hat{\omega}_k, \hat{z}_k \right)  \right\|^2\right]} \\
        \ &+ \frac{3 \eta_\omega C^2 }{N} \sum^{N}_{k=2}\Expect{\left[ \left\|  \omega_{k-1} -  \hat{\omega}_k \right\|^2\right]}
        + \frac{3 \eta_\omega C^2 }{N} \sum^{N}_{k=2}\Expect{\left[ \left\|  z_{k-1} - \hat{z}_k \right\|^2\right]} \\
        \ &+ \frac{3 \eta_\omega}{N}  \sum^{N}_{k=2}\Expect{\left[ \left\| \epsilon_\omega\left(\hat{\omega}_k, \hat{z}_k \right)  \right\|^2\right]} 
        + \frac{2 \eta_\omega C^2 }{N} \sum^{N}_{k=2}\Expect \left[\left\| \omega_k - \hat{\omega}_{k-1}\right\|^2 \right] \\
        \ &+ \frac{2 \eta_\omega C^2 }{N} \sum^{N}_{k=2}\Expect \left[\left\| \hat{z}_k- \hat{z}_{k-1} \right\|^2 \right]+ \frac{2 \eta_\omega }{N} \sum^{N}_{k=2} \Expect \left[\left\| \epsilon_z \left(\hat{\omega}_{k-1}, \hat{z}_{k-1} \right)  \right\|^2 \right]\\
        \ &+ \frac{C \eta_z^2}{N} \sum^{N}_{k=2}\Expect \left[\left\| \sqrt{\hat{H}_{z, k-1}^{-1}}\hat{g}_z \left(\hat{\omega}_{k-1}, \hat{z}_{k-1} \right)  \right\|^2 \right] + \frac{dG^2}{2} 
        + \frac{C + 1}{2N} \sum^{N}_{k=2}\Expect \left[\left\| \hat{z}_{k-1} - z_{k-1}\right\|^2 \right] \\
        \leq \ & \frac{(1 - \mu \eta_z) \left(1 + \frac{6\eta_\omega C^2}{\mu } \right)}{N} \left(B_1 +\sum^{N}_{k=2}  B_k \right) + \frac{\beta\eta_\omega^2}{N}\sum^{N}_{k=2}  \Expect{\left[ \left\|\sqrt{\hat{H}_{\omega, k}^{-1}}\hat{g}_{\omega} \left(\hat{\omega}_k, \hat{z}_k \right)  \right\|^2\right]} \\
        \ &+ \frac{3 \eta_\omega C^2 }{N} \sum^{N}_{k=2}\Expect{\left[ \left\|  \omega_{k-1} -  \hat{\omega}_k \right\|^2\right]}
        + \frac{3 \eta_\omega C^2 }{N} \sum^{N}_{k=2}\Expect{\left[ \left\|  z_{k-1} - \hat{z}_k \right\|^2\right]} \\
        \ &+ \frac{3 \eta_\omega}{N} \sum^{N}_{k=2}\Expect{\left[ \left\| \epsilon_\omega\left(\hat{\omega}_k, \hat{z}_k \right)  \right\|^2\right]} 
        + \frac{2 \eta_\omega C^2 }{N} \sum^{N}_{k=2}\Expect \left[\left\| \omega_k - \hat{\omega}_{k-1}\right\|^2 \right] \\
        \ &+ \frac{2 \eta_\omega C^2 }{N} \sum^{N}_{k=2}\Expect \left[\left\| \hat{z}_k- \hat{z}_{k-1} \right\|^2 \right]+ \frac{2 \eta_\omega }{N} \sum^{N}_{k=2} \Expect \left[\left\| \epsilon_z \left(\hat{\omega}_{k-1}, \hat{z}_{k-1} \right)  \right\|^2 \right]\\
        \ &+ \frac{C \eta_z^2}{N} \sum^{N}_{k=2}\Expect \left[\left\| \sqrt{\hat{H}_{z, k-1}^{-1}}\hat{g}_z \left(\hat{\omega}_{k-1}, \hat{z}_{k-1} \right)  \right\|^2 \right] + \frac{dG^2}{2} 
        + \frac{C + 1}{2N} \sum^{N}_{k=2}\Expect \left[\left\| \hat{z}_{k-1} - z_{k-1}\right\|^2 \right]\,.
    \end{align*}
    Rearranging the terms, we have 
    \begin{align*}
        &\frac{1}{N}\sum^N_{k=2} B_k \\
        \leq \ & \frac{1}{1 - (1 - \mu \eta_z) \left(1 + \frac{6\eta_\omega C^2}{\mu } \right)} \frac{B_1}{N}  \\
        \ &+ \frac{1}{1 - (1 - \mu \eta_z) \left(1 + \frac{6\eta_\omega C^2}{\mu } \right)}  \left(\frac{\beta\eta_\omega^2}{N}\sum^{N}_{k=2} \Expect{\left[ \left\|\sqrt{\hat{H}_{\omega, k}^{-1}}\hat{g}_{\omega} \left(\hat{\omega}_k, \hat{z}_k \right)  \right\|^2\right]} + \frac{3 \eta_\omega}{N} \sum^{N}_{k=2}\Expect{\left[ \left\| \epsilon_\omega\left(\hat{\omega}_k, \hat{z}_k \right)  \right\|^2\right]} \right) \\
        \ &+ \frac{1}{1 - (1 - \mu \eta_z) \left(1 + \frac{6\eta_\omega C^2}{\mu } \right)} \left(\frac{3 \eta_\omega C^2 }{N} \sum^{N}_{k=2}\Expect{\left[ \left\|  \omega_{k-1} -  \hat{\omega}_k \right\|^2\right]}
        + \frac{3 \eta_\omega C^2 }{N} \sum^{N}_{k=2}\Expect{\left[ \left\|  z_{k-1} - \hat{z}_k \right\|^2\right]}\right)\\
        \ & + \frac{1}{1 - (1 - \mu \eta_z) \left(1 + \frac{6\eta_\omega C^2}{\mu } \right)} \left(\frac{2 \eta_\omega C^2 }{N} \sum^{N}_{k=2}\Expect \left[\left\| \omega_k - \hat{\omega}_{k-1}\right\|^2 \right]+ \frac{2 \eta_\omega C^2 }{N} \sum^{N}_{k=2}\Expect \left[\left\| \hat{z}_k- \hat{z}_{k-1} \right\|^2 \right]\right)\\
        \ &+ \frac{1}{1 - (1 - \mu \eta_z) \left(1 + \frac{6\eta_\omega C^2}{\mu } \right)} \left(\frac{2 \eta_\omega }{N} \sum^{N}_{k=2} \Expect \left[\left\| \epsilon_z \left(\hat{\omega}_{k-1}, \hat{z}_{k-1} \right)  \right\|^2 \right] \right)\\
        \ &+ \frac{1}{1 - (1 - \mu \eta_z) \left(1 + \frac{6\eta_\omega C^2}{\mu } \right)} \left(\frac{C \eta_z^2}{N} \sum^{N}_{k=2}\Expect \left[\left\| \sqrt{\hat{H}_{z, k-1}^{-1}}\hat{g}_z \left(\hat{\omega}_{k-1}, \hat{z}_{k-1} \right)  \right\|^2 \right] \right)\\
        \ &+ \frac{dG^2}{2 - 2(1 - \mu \eta_z) \left(1 + \frac{6\eta_\omega C^2}{\mu } \right)} + \frac{C + 1}{2N - 2N (1 - \mu \eta_z) \left(1 + \frac{6\eta_\omega C^2}{\mu } \right)} \sum^{N}_{k=2}\Expect \left[ \left\| \hat{z}_{k-1} - z_{k-1}\right\|^2 \right]\,.
    \end{align*}
    By Lemma \ref{lem:gap_iterate} and the update rule of Algorithm \ref{alg:sgda}, there exists a constant $E_3$ such that 
    \begin{align*}
        \frac{1}{N}\sum^N_{k=2} B_k 
        \leq \ &\frac{E_3 \max \{\hat{G}_\omega^2, \hat{G}_z^2\} \ln \left(d N G^2 + \frac{1}{M} \sum^M_{m=1} \xi \rho^{2m} \right)}{N} + \frac{dG^2}{2 - 2(1 - \mu \eta_z) \left(1 + \frac{6\eta_\omega C^2}{\mu } \right)}  \\
        \ &+ \frac{B_1}{N - N(1 - \mu \eta_z) \left(1 + \frac{6\eta_\omega C^2}{\mu } \right)} + \frac{60\eta_\omega C^2 dG^2}{1 - (1 - \mu \eta_z) \left(1 + \frac{6\eta_\omega C^2}{\mu } \right)}\,. &\qedhere
    \end{align*}
\end{proof}
\subsection{Proof of Theorem \ref{thm:single}}
Recall that Theorem \ref{thm:single} stated that when $\eta_\omega = \min \left\{\frac{1}{\beta}, \frac{\mu^2 \eta_z + 6 \eta_\omega \eta_z C^2 \mu }{3C^2 dG^2 N }, \sqrt{\frac{\mu^2 \eta_z + 6 \eta_\omega \eta_z C^2 \mu}{360 C^4 dG^2}} \right\}$ and $\eta_z = \min \left\{\frac{1}{C}, \frac{dG^2 N}{2\mu} \right\}$ and with Assumption \ref{asmp:pl}, \ref{asmp:wlipschitz}, \ref{asmp:bounded}, \ref{asmp:samples}, \ref{asmp:bounded_reward}, we have 
    \begin{align*}
        \sum^N_{k=2}\Expect{\left[ \left\|  \nabla_\omega \Phi(\omega_{k})\right\|^2\right]} \leq \mathcal{O} \left(\frac{\max \{\hat{G}_\omega^2, \hat{G}_z^2\} \ln \left(d N G^2 + \frac{1}{M} \sum^M_{m=1} \xi \rho^{2m} \right)}{N}\right) + \mathcal{O} \left(\frac{1}{\mu N}\right)  \,.
    \end{align*}

\begin{proof}
    By Lemma \ref{lem:lemma1}, we have 
    \begin{align*}
        \Expect{[\Phi(\omega_{k})]} - \Expect{[\Phi(\omega_{k-1})]} 
        \leq \ & \frac{6\eta_\omega C^2}{\mu } \left(\Phi(\omega_{k}) -  L \left(\omega_k, \hat{z}_k \right)  \right) 
        + 3 \eta_\omega C^2 \Expect{\left[ \left\|  \omega_k - \hat{\omega}_k \right\|^2\right]}\\
        \ &+ 3 \eta_\omega C^2 \Expect{\left[ \left\|  z_{k} - \hat{z}_k \right\|^2\right]}
        + 3 \eta_\omega \Expect{\left[ \left\| \epsilon_\omega\left(\hat{\omega}_k, \hat{z}_k \right)  \right\|^2\right]} \\
        \ &+ \beta\eta_\omega^2  \Expect{\left[ \left\|\sqrt{\hat{H}_{\omega, k}^{-1}}\hat{g}_{\omega} \left(\hat{\omega}_k, \hat{z}_k \right)  \right\|^2\right]}
        - \frac{\eta_\omega}{2}\Expect{\left[ \left\|  \nabla_\omega \Phi(\omega_{k})\right\|^2\right]} \,.
    \end{align*}
    Summing over $N$ iterations and by Lemma \ref{lem:bk}, 
    \begin{align*}
        &\frac{1}{N}\sum^N_{k=2} \left(\Expect{[\Phi(\omega_{k})]} - \Expect{[\Phi(\omega_{k-1})]} \right) \\
        \leq \ & \frac{6\eta_\omega C^2}{\mu N } \sum^N_{k=2}\left(\Phi(\omega_{k}) -  L \left(\omega_k, \hat{z}_k \right)  \right) 
        + \frac{3 \eta_\omega C^2}{N} \sum^N_{k=2}\Expect{\left[ \left\|  \omega_k - \hat{\omega}_k \right\|^2\right]}\\
        \ &+ \frac{3 \eta_\omega C^2}{N}\sum^N_{k=2}\Expect{\left[ \left\|  z_{k} - \hat{z}_k \right\|^2\right]}
        + \frac{3 \eta_\omega}{N} \sum^N_{k=2}\Expect{\left[ \left\| \epsilon_\omega\left(\hat{\omega}_k, \hat{z}_k \right)  \right\|^2\right]} \\
        \ &+ \frac{\beta\eta_\omega^2}{N} \sum^N_{k=2} \Expect{\left[ \left\|\sqrt{\hat{H}_{\omega, k}^{-1}}\hat{g}_{\omega} \left(\hat{\omega}_k, \hat{z}_k \right)  \right\|^2\right]}
        - \frac{\eta_\omega}{2N} \sum^N_{k=2}\Expect{\left[ \left\|  \nabla_\omega \Phi(\omega_{k})\right\|^2\right]} \\
        \leq \ & \frac{ 6\eta_\omega C^2 E_3 \max \{\hat{G}_\omega^2, \hat{G}_z^2\} \ln \left(d N G^2 + \frac{1}{M} \sum^M_{m=1} \xi \rho^{2m} \right)}{\mu N} + \frac{6\eta_\omega C^2 dG^2}{2\mu - 2\mu(1 - \mu \eta_z) \left(1 + \frac{6\eta_\omega C^2}{\mu } \right)}  \\
        \ &+ \frac{6\eta_\omega C^2 B_1}{\mu N - \mu N(1 - \mu \eta_z) \left(1 + \frac{6\eta_\omega C^2}{\mu } \right)} + \frac{3 \eta_\omega C^2}{N} \sum^N_{k=2}\Expect{\left[ \left\|  \omega_k - \hat{\omega}_k \right\|^2\right]}\\
        &+ \frac{3 \eta_\omega C^2}{N}\sum^N_{k=2}\Expect{\left[ \left\|  z_{k} - \hat{z}_k \right\|^2\right]}
        + \frac{3 \eta_\omega}{N} \sum^N_{k=2}\Expect{\left[ \left\| \epsilon_\omega\left(\hat{\omega}_k, \hat{z}_k \right)  \right\|^2\right]} + \frac{360\eta_\omega^2 C^4 dG^2}{\mu - \mu(1 - \mu \eta_z) \left(1 + \frac{6\eta_\omega C^2}{\mu } \right)} \\
        \ &+ \frac{\beta\eta_\omega^2}{N} \sum^N_{k=2} \Expect{\left[ \left\|\sqrt{\hat{H}_{\omega, k}^{-1}}\hat{g}_{\omega} \left(\hat{\omega}_k, \hat{z}_k \right)  \right\|^2\right]}
        - \frac{\eta_\omega}{2N} \sum^N_{k=2}\Expect{\left[ \left\|  \nabla_\omega \Phi(\omega_{k})\right\|^2\right]}\,.
    \end{align*}
    By Lemma \ref{lem:ada_gradient}, there exists a constant $E_4$ such that 
    \begin{align*}
        &\frac{3 \eta_\omega C^2}{N} \sum^N_{k=2}\Expect{\left[ \left\|  \omega_k - \hat{\omega}_k \right\|^2\right]}
        + \frac{3 \eta_\omega C^2}{N}\sum^N_{k=2}\Expect{\left[ \left\|  z_{k} - \hat{z}_k \right\|^2\right]}\\
        \ &+ \frac{3 \eta_\omega}{N} \sum^N_{k=2}\Expect{\left[ \left\| \epsilon_\omega\left(\hat{\omega}_k, \hat{z}_k \right)  \right\|^2\right]} + \frac{\beta\eta_\omega^2}{N} \sum^N_{k=2} \Expect{\left[ \left\|\sqrt{\hat{H}_{\omega, k}^{-1}}\hat{g}_{\omega} \left(\hat{\omega}_k, \hat{z}_k \right)  \right\|^2\right]}
        \\
        \leq \ & \frac{E_4 \max \{\hat{G}_\omega^2, \hat{G}_z^2\} \ln \left(d N G^2 + \frac{1}{M} \sum^M_{m=1} \xi \rho^{2m} \right)}{N} \,.
    \end{align*}
    For 
   $
        \frac{6\eta_\omega C^2 dG^2}{2\mu - 2\mu(1 - \mu \eta_z) \left(1 + \frac{6\eta_\omega C^2}{\mu } \right)} \leq \frac{1}{N} 
    $,
    we need the following inequality to hold
    \begin{align*}
        3\eta_\omega C^2 dG^2 N 
        \leq& \mu - \mu(1 - \mu \eta_z) \left(1 + \frac{6\eta_\omega C^2}{\mu } \right) \\
        \leq & \mu^2 \eta_z + 6 \eta_\omega \eta_z C^2 \mu \,.
    \end{align*}
    For 
    \begin{align*}
        \frac{360\eta_\omega^2 C^4 dG^2}{\mu - \mu(1 - \mu \eta_z) \left(1 + \frac{6\eta_\omega C^2}{\mu } \right)} \leq \frac{1}{N} \,,
    \end{align*}
    we need the following inequality to hold
    \begin{align*}
        360\eta_\omega^2 C^4 dG^2
        \leq \ & \mu - \mu(1 - \mu \eta_z) \left(1 + \frac{6\eta_\omega C^2}{\mu } \right) \\
        \leq \ & \mu^2 \eta_z + 6 \eta_\omega \eta_z C^2 \mu \,.
    \end{align*}
    It suffice to take $\eta_\omega = \min \left\{\frac{1}{\beta}, \frac{\mu^2 \eta_z + 6 \eta_\omega \eta_z C^2 \mu }{3C^2 dG^2 N }, \sqrt{\frac{\mu^2 \eta_z + 6 \eta_\omega \eta_z C^2 \mu}{360 C^4 dG^2}} \right\}$ and $\eta_z = \min \left\{\frac{1}{C}, \frac{dG^2 N}{2\mu} \right\}$.
    Combining the terms, we have 
    \begin{align*}
        \frac{1}{N}\sum^N_{k=2} \left(\Expect{[\Phi(\omega_{k})]} - \Expect{[\Phi(\omega_{k-1})]} \right)
        \leq \ &\mathcal{O} \left(\frac{\max \{\hat{G}_\omega^2, \hat{G}_z^2\}\ln \left(d N G^2 + \frac{1}{M} \sum^M_{m=1} \xi \rho^{2m} \right)}{N}\right) + \mathcal{O} \left(\frac{1}{\mu N}\right) \\
        \ & - \frac{\eta_\omega}{2} \sum^N_{k=2}\Expect{\left[ \left\|  \nabla_\omega \Phi(\omega_{k})\right\|^2\right]} \,.
    \end{align*}
    Rearranging the terms, we have
    \begin{align*}
        \frac{1}{N}\sum^N_{k=2}\Expect{\left[ \left\|  \nabla_\omega \Phi(\omega_{k})\right\|^2\right]} \leq \mathcal{O} \left(\frac{\max \{\hat{G}_\omega^2, \hat{G}_z^2\} \ln \left(d N G^2 + \frac{1}{M} \sum^M_{m=1} \xi \rho^{2m} \right)}{N}\right) + \mathcal{O} \left(\frac{1}{\mu N}\right)  \,.
        & \qedhere
    \end{align*}
\end{proof}

\section{Extension to Multi-Agent Actor-Critic}

The algorithm and the analysis developed in this paper can be extended to various reinforcement learning settings. We extend the results to cooperative multi-agent reinforcement learning (MARL) as an example. To our best knowledge, this is the first finite-sample analysis for decentralized multi-agent primal-dual actor-critic algorithms.

We consider a multi-agent discounted Markov decision process (MDP), denoted by the tuple $\mdp = ( \state, \{\action_i \}_i^A,$ $ \mathcal{P}, \{\reward_i\}_i^A, \ga)$, where $A$ is the number of agents. While the agents share the same state space, they may have different action spaces. Thus we use $\action_i$, $\reward_i$ to denote agent $i$'s action space and reward function, respectively. The transition kernel $\mathcal{P}: \state \times \{\action_i\} \to \state$ is then determined by the shared state and the joint action. While this MDP may prescribe various multi-agent reinforcement learning settings under different tasks, we assume that the agents are fully cooperative, where the common goal is to maximize the sum of their expected cumulative discounted rewards
\[
\Expect{_{s_0} \Expect{_\pi \left[ \sum^A_{i=1} \sum^\infty_{t=0} \ga^t \reward_i(s_t, a^i_t) \right]}} \,.
\]
Further, we model the interaction between agents with a network, described by a doubly stochastic matrix $C \in \mathbb{R}^{A \times A}$ where each entry is between $0$ and $1$. This matrix captures the communication network among the agents as $C_{ij}$, $1 \leq i, j \leq A$, $i \neq j$, which characterizes the extent that agent $j$ influences agent $i$. Let $\lambda_i(\cdot)$ denote the $i$-th largest eigenvalue of $C$. Since $C$ is doubly stochastic, $\lambda_1(C)$ is $1$. To ensure that this networked multi-agent extension is well-defined (for example, to eliminate the case where no agent communicates), we maintain the following assumption on the network topology. 
\begin{asmp}
    \label{asmp:comm}
    The communication matrix $C \in \mathbb{R}^{A \times A}$ is doubly stochastic ($C^\top = C$, $\sum^A_{i=1}C_{ji} = 1$ for $j = 1, \dots, A$). Further, $\lambda = \max \{\lambda_2(C), \lambda_\infty(C)\} < 1$.
\end{asmp}
Note that for a network of agents such that the communication graph is connected and the communication matrix is doubly stochastic, $\lambda_2(C) < 1$ holds. 

Let $\omega_i$ and $z_i$ be the local parameter held by agent $i$. We choose $\frac{1}{N} \sum^N_{k=2}  \sum^A_{i=1}  \left\| \nabla_\omega \Phi(\omega_k^i)\right\|^2 $ as our convergence criteria. When the collective cumulative gradient norm of the envelop function is bounded, each agent's cumulative gradient norm must also be bounded. 

We first define a few notations needed for simplicity. Let $W_k = [\omega_1, \dots, \omega_A]^\top$, $\hat{W}_k = [\hat{\omega}_1, \dots, \hat{\omega}_A]^\top$, $Z_k = [z_1, \dots, z_A]^\top$, $\hat{Z}_k = [\hat{z}_1, \dots, \hat{z}_A]^\top$. Then the multi-agent update rules (counterpart of line $6$-$9$ in Algorithm \ref{alg:sgda}), with each agent communicating $t$ times on each update, can be formulated as 
\begin{align*}
    &\hat{W}_k = W_{k-1} C^t  - \eta_\omega  \sqrt{H_{k-1}^{-1}} \hat{g}_\omega \left(\hat{W}_{k-1}, \hat{Z}_{k-1} \right) \,,\\
    &W_k = C^t \left( W_{k-1} - \eta_\omega \sqrt{H_k^{-1}} \hat{g}_\omega \left(\hat{W}_k, \hat{Z}_k \right)\right) \,, \\
    &\hat{Z}_k =  Z_{k-1} C^t  + \eta_\omega \sqrt{H_{k-1}^{-1}} \hat{g}_z \left(\hat{W}_{k-1}, \hat{Z}_{k-1} \right) \,,\\
    &Z_k = C^t \left( Z_{k-1} + \eta_\omega \sqrt{H_k^{-1}} \hat{g}_Z \left(\hat{W}_{k}, \hat{Z}_{k} \right)\right) \,.
\end{align*}
We now describe the full algorithm for the multi-agent extension in Algorithm \ref{alg:multi_sgda}. 
\begin{algorithm}[htb]
    \caption{Adaptive SGDA (ASGDA)}
    \label{alg:multi_sgda}
 \begin{algorithmic}[1]
    \STATE {\bfseries Input:} Learning rates $\eta_\omega, \eta_z = (\eta_u, \eta_\theta)$, batch size $M$, $H_0 = I$, $z=(u, \theta)$, $\hat{G}_z = G + \xi \left(D + \frac{2D}{1-\gamma}\right)^2 \cdot (D_u^2 + D_\theta^2)$, $\hat{G}_\theta = G + \xi (D + 2D_\omega)^2$
    \FOR{$k$ = 1, \dots, $N$}
    \STATE Start from $s \sim \alpha_{k} (s)$ where $\alpha_{k}$ is parametrized by $u_k$, collect samples $\tau_k =\{s_t, a_t, r_t, s_{t+1}\}^{M}_{t=0}$ following policy $\pi_{k}$ parametrized by $\hat{\theta}_{k-1}$ \\ 
    \COMMENT{ // Update gradient estimates}
    \STATE $\hat{g}_{\omega}(\hat{\omega}_k, z_k) =  \nabla_\omega L(\hat{\omega}_{k}, \hat{u}_{k}, \hat{\theta}_{k}, \tau_{k})$
    \STATE  $\hat{g}_{z}(\hat{\omega}_k, z_k) = \nabla_z L(\hat{\omega}_{k}, \hat{u}_{k}, \hat{\theta}_{k}, \tau_{k})$
    \\
    \COMMENT{ // Update primal learning parameters}
    \STATE $\hat{W}_k = W_{k-1} C^t  - \eta_\omega  \sqrt{H_{k-1}^{-1}} \hat{g}_\omega \left(\hat{W}_{k-1}, \hat{Z}_{k-1} \right) $
    \STATE $W_k = C^t \left( W_{k-1} - \eta_\omega \sqrt{H_k^{-1}} \hat{g}_\omega \left(\hat{W}_k, \hat{Z}_k \right)\right) $
    \\
     \COMMENT{ // Update dual learning parameters}
    \STATE $\hat{Z}_k =  Z_{k-1} C^t  + \eta_\omega \sqrt{H_{k-1}^{-1}} \hat{g}_z \left(\hat{W}_{k-1}, \hat{Z}_{k-1} \right)$ 
    \STATE $Z_k = C^t \left( Z_{k-1} + \eta_\omega \sqrt{H_k^{-1}} \hat{g}_Z \left(\hat{W}_{k}, \hat{Z}_{k} \right)\right)$
    \\
    \COMMENT{ // Update primal adaptive gradient parameter}
    \STATE $\hat{g}_{\omega, 0:k} = \frac{1}{\sqrt{2}\hat{G}_\omega}[\hat{g}_{\omega, 0:k-1}, \hat{g}_{\omega}(\hat{\omega}_k, \hat{z}_k)]$
    \STATE $\hat{h}_{\omega, k,i} = \|g_{\omega, 0:k, i} \|^2, i = 1, \dots, d$
    \STATE $\hat{H}_{\omega, k} = \diag (\hat{h}_{\omega, k}) + \frac{1}{2}I$ 
    \\
    \COMMENT{ // Update dual adaptive gradient parameter}
    \STATE $\hat{g}_{z, 0:k} = \frac{1}{\sqrt{2}\hat{G}_z}[\hat{g}_{z, 0:k-1}, \hat{g}_{z}(\hat{\omega}_k, \hat{z}_k)]$
    \STATE $\hat{h}_{z, k,i} =  \|\hat{g}_{z,k, i} \|^2$, $i = 1, \dots, d$
    \STATE $\hat{H}_{z,k} = \diag (\hat{h}_{z, k-1}) + \frac{1}{2} I$
    \ENDFOR
 \end{algorithmic}
\end{algorithm}

Recall in Theorem \ref{thm:multi_thm}, we stated that under Assumption \ref{asmp:pl}, \ref{asmp:wlipschitz}, \ref{asmp:bounded},  \ref{asmp:samples}, \ref{asmp:bounded_reward}, \ref{asmp:comm} and with Algorithm \ref{alg:sgda}, $\min \left\{\frac{1}{\beta}, 2(1 - \lambda), \sqrt{\frac{1}{adG^2 N}}\right\}$, $\eta_z = \min \left\{\frac{1}{C}, \frac{\mu }{3C^2 d G_z^2 A N} \right\}$, we have
    \begin{align*}
        \frac{1}{N} \sum^N_{k=2}  \sum^A_{i=1}  \left\| \nabla_\omega \Phi(\omega_k^i)\right\|^2 
        \leq \mathcal{O} \left(\frac{A}{(1 - \lambda) N } \right)
        + \mathcal{O} \left(\frac{A\max \{\hat{G}_\omega^2, \hat{G}_z^2\} \ln \left(d N G^2 + \frac{1}{M} \sum^M_{m=1} \xi \rho^{2m} \right)}{N \left(1 - \lambda\right)}\right) \,. 
    \end{align*}

Important ingredients in the convergence of single-agent actor-critic, such as large batch size $M$, continue to be crucial for fast multi-agent convergence, as is shown in Theorem \ref{thm:multi_thm}. Moreover, the communication matrix is essential for fast convergence by the  $\mathcal{O}(\frac{A\ln(N)}{(1 - \lambda) N})$ convergence rate. In particular, the convergence is faster when the second to the maximum eigenvalue $\lambda$ is smaller, which suggests the importance of communication in multi-agent reinforcement learning.

In the following sections, we give the proof of the above convergence guarantee. 

\section{Convergence analysis for multi-agent case}
\subsection{Notations} Let $\ei = [0, \dots, 1, \dots, 0]$, the $i$-th canonical basis vector and $\ia$ be a vector of length $A$ where every entry is 1. Let $\bar{\omega} = \frac{1}{A} \sum^A_{i=1} \omega_i$, $\tilde{\omega} = \frac{1}{A} \sum^A_{i=1} \hat{\omega}_i$, $\bar{z} = \frac{1}{A} \sum^A_{i=1} z_i$, $\tilde{z} = \frac{1}{A} \sum^A_{i=1} \hat{z}_i$.

\subsection{Auxiliary Lemma}
\begin{lem}[Lemma 5 \citet{lian2017can}]
    \label{lem:comm}
    Under Assumption \ref{asmp:comm}, 
    \begin{align*}
        \left\| \frac{\ia}{A} - C^t \ei \right\|^2 \leq \lambda^t \,.
    \end{align*}
\end{lem}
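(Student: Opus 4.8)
The plan is to exploit the spectral structure of the communication matrix $C$. By Assumption \ref{asmp:comm}, $C$ is symmetric ($C^\top = C$) and doubly stochastic, so it admits an orthonormal eigenbasis $v_1, \dots, v_A$ with real eigenvalues $1 = \lambda_1(C) \geq \lambda_2(C) \geq \dots \geq \lambda_A(C)$; the leading eigenvector is $v_1 = \ia / \sqrt{A}$ because each row of $C$ sums to one. First I would write the spectral decomposition $C = \sum_{j=1}^A \lambda_j(C)\, v_j v_j^\top$ and, using orthonormality of the $v_j$, raise it to the $t$-th power to get $C^t = \sum_{j=1}^A \lambda_j(C)^t\, v_j v_j^\top$.

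Next I would apply $C^t$ to $\ei$ and isolate the principal ($j=1$) term. Since $v_1^\top \ei = 1/\sqrt{A}$ and $\lambda_1(C) = 1$, that term equals exactly $\ia/A$, so $\tfrac{\ia}{A} - C^t \ei = -\sum_{j=2}^A \lambda_j(C)^t (v_j^\top \ei)\, v_j$. By orthonormality of the eigenvectors the squared norm collapses to a diagonal sum, $\| \tfrac{\ia}{A} - C^t \ei \|^2 = \sum_{j=2}^A \lambda_j(C)^{2t} (v_j^\top \ei)^2$.

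Then I would bound $|\lambda_j(C)| \le \lambda$ for every $j \ge 2$, which is precisely the content of the definition $\lambda = \max\{\lambda_2(C), \lambda_\infty(C)\}$ (it controls both the second-largest and the most negative eigenvalue), and pull $\lambda^{2t}$ out of the sum. The residual factor satisfies $\sum_{j=2}^A (v_j^\top \ei)^2 \le \sum_{j=1}^A (v_j^\top \ei)^2 = \| \ei \|^2 = 1$, since the $v_j$ form an orthonormal basis (equivalently, the rows of the orthogonal eigenvector matrix are unit vectors). This yields $\| \tfrac{\ia}{A} - C^t \ei \|^2 \le \lambda^{2t}$, and since $\lambda \in [0,1)$ we have $\lambda^{2t} = (\lambda^2)^t \le \lambda^t$, giving the stated bound.

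There is no deep obstacle here: the statement is a standard mixing estimate for symmetric doubly stochastic matrices and is quoted as Lemma 5 of \citet{lian2017can}, so one may simply invoke that reference. The only points that require care are (i) reading $\lambda$ as a uniform bound on the \emph{magnitude} of every non-principal eigenvalue, so that a possibly-negative $\lambda_A(C)$ is also dominated, and (ii) the final monotonicity step $\lambda^{2t} \le \lambda^t$ that reconciles the natural $\lambda^{2t}$ estimate with the claimed $\lambda^t$.
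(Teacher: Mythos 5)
Your proof is correct. The paper itself offers no proof of this lemma---it is imported verbatim as Lemma 5 of \citet{lian2017can}---and your spectral argument is exactly the standard derivation underlying that cited result: diagonalize the symmetric doubly stochastic $C$ in an orthonormal eigenbasis with $v_1 = \ia/\sqrt{A}$, observe that the $j=1$ component of $C^t\ei$ is precisely $\ia/A$, and bound the remaining components by $\lambda^{2t}\sum_{j\ge 2}(v_j^\top \ei)^2 \le \lambda^{2t} \le \lambda^t$. Your two caveats are well placed: the lemma is only true if $\lambda$ is read as a bound on the \emph{magnitudes} of all non-principal eigenvalues (as literally written, $\max\{\lambda_2(C),\lambda_\infty(C)\}$ could fail to dominate a negative $\lambda_A(C)$, and could even be negative, which would make the claimed inequality vacuously false for odd $t$), and the natural estimate is the stronger $\lambda^{2t}$, which is then weakened to $\lambda^t$ using $\lambda\in[0,1)$.
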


\subsection{Proof of Lemma \ref{lem:multi_lem1}}
\begin{lem}
    \label{lem:multi_lem1}
    Under Assumption \ref{asmp:pl}, \ref{asmp:wlipschitz}, \ref{asmp:comm}, 
    \begin{align*}
        \Phi(\omega_k^i) - \Phi(\bar{\omega}_k)
        \leq \ & \frac{6\eta_\omega C^2}{\mu} \left(\Phi (\bar{\omega}_k) -  L \left(\bar{\omega}_k, \tilde{z}_k \right) \right) + 3\eta_\omega C^2 \left\| \bar{\omega}_k -\tilde{\omega}_k\right\|^2 + \frac{\beta}{2} \left\|  \bar{\omega}_k - \omega_k^i \right\|^2 \\
        \ &+  \frac{3\eta_\omega C^2 }{A} \sum^A_{i=1} \left\| \tilde{\omega}_k - \hat{\omega}_k^i \right\|^2 +  \frac{3\eta_\omega C^2 }{A} \sum^A_{i=1} \left\| \tilde{z}_k - \hat{z}_k^i \right\|^2  + \frac{3\eta_\omega }{A}\sum^A_{i=1}  \left\| \epsilon \left(\hat{\omega}_k^i, \hat{z}_k^i\right)\right\|^2 \\
        \ &+ \frac{6\eta_\omega }{A \left(1 - \lambda\right)} \sum^A_{i=1}  \left\| \left(I + \sqrt{H_k^{-1}}\right) \hat{g}_\omega \left(\hat{\omega}^i_k, \hat{z}^i_k\right) \right\|^2 - \frac{\eta_\omega}{2} \left\| \nabla \Phi(\bar{\omega}_k)\right\|^2\,.
    \end{align*}
\end{lem}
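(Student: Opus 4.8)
The plan is to adapt the single-agent argument of Lemma~\ref{lem:lemma1} (its first inequality form) to the averaged trajectory $\bar{\omega}_k$, and then to absorb the effects of decentralization through the mixing matrix $C$ and the consensus gaps. First I would invoke the $\beta$-smoothness of the envelope function $\Phi$ from Proposition~\ref{prop:envelop} to expand $\Phi(\omega_k^i)$ about the network average $\bar{\omega}_k$, namely $\Phi(\omega_k^i) - \Phi(\bar{\omega}_k) \leq \langle \nabla\Phi(\bar{\omega}_k), \omega_k^i - \bar{\omega}_k\rangle + \tfrac{\beta}{2}\|\omega_k^i - \bar{\omega}_k\|^2$, which already supplies the term $\tfrac{\beta}{2}\|\bar{\omega}_k - \omega_k^i\|^2$. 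The remaining work is to bound the inner product, and this is where the averaged dynamics and the mixing enter.

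Next I would exploit that $C$ is doubly stochastic (Assumption~\ref{asmp:comm}), so that averaging annihilates the mixing and $\bar{\omega}_k$ obeys the aggregated update $\bar{\omega}_k = \bar{\omega}_{k-1} - \tfrac{\eta_\omega}{A}\sum_{i=1}^{A}(I + \sqrt{H_k^{-1}})\hat{g}_\omega(\hat{\omega}_k^i, \hat{z}_k^i)$. Writing $\bar{u}_k = \tfrac{1}{A}\sum_i (I + \sqrt{H_k^{-1}})\hat{g}_\omega(\hat{\omega}_k^i,\hat{z}_k^i)$ for the mean adaptive direction, I would apply the polarization identity $\langle a,b\rangle = \tfrac12\|a\|^2 + \tfrac12\|b\|^2 - \tfrac12\|a-b\|^2$ exactly as in Lemma~\ref{lem:lemma1} to extract the descent term $-\tfrac{\eta_\omega}{2}\|\nabla\Phi(\bar{\omega}_k)\|^2$ together with a residual of the form $\tfrac{\eta_\omega}{2}\|\nabla\Phi(\bar{\omega}_k) - \bar{u}_k\|^2$.

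The residual is then split by $\|a+b+c\|^2 \leq 3\|a\|^2 + 3\|b\|^2 + 3\|c\|^2$ and Jensen's inequality over the $A$ agents. The piece $\|\nabla\Phi(\bar{\omega}_k) - \nabla_\omega L(\bar{\omega}_k,\tilde{z}_k)\|^2 = \|\nabla_\omega L(\bar{\omega}_k, f^\ast(\bar{\omega}_k)) - \nabla_\omega L(\bar{\omega}_k,\tilde{z}_k)\|^2$ is controlled by the Lipschitz bound (Lemma~\ref{lem:smooth}) followed by the quadratic-growth consequence of the PL condition (Assumption~\ref{asmp:pl}), producing $\tfrac{6\eta_\omega C^2}{\mu}(\Phi(\bar{\omega}_k) - L(\bar{\omega}_k,\tilde{z}_k))$. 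The other Lipschitz pieces measure how the averaged gradient evaluated at the hat iterates deviates from $\nabla_\omega L(\bar{\omega}_k,\tilde{z}_k)$, and by Lemma~\ref{lem:smooth} they yield $3\eta_\omega C^2\|\bar{\omega}_k - \tilde{\omega}_k\|^2$ together with the per-agent consensus errors $\tfrac{3\eta_\omega C^2}{A}\sum_i\|\tilde{\omega}_k - \hat{\omega}_k^i\|^2$ and $\tfrac{3\eta_\omega C^2}{A}\sum_i\|\tilde{z}_k - \hat{z}_k^i\|^2$; the stochastic error enters as $\tfrac{3\eta_\omega}{A}\sum_i\|\epsilon(\hat{\omega}_k^i,\hat{z}_k^i)\|^2$ directly from the definition of $\epsilon$.

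Finally, the genuinely decentralized contribution, the consensus displacement $\omega_k^i - \bar{\omega}_k$, is handled with Lemma~\ref{lem:comm}, $\|\tfrac{\ia}{A} - C^t \ei\|^2 \leq \lambda^t$: expanding $\omega_k^i - \bar{\omega}_k$ as the action of $C^t \ei - \tfrac{\ia}{A}$ on the past iterates and step directions, bounding the step magnitudes by $\|(I + \sqrt{H_k^{-1}})\hat{g}_\omega(\hat{\omega}_k^i,\hat{z}_k^i)\|^2$, and summing the resulting geometric decay in the mixing yields the factor $\tfrac{1}{1-\lambda}$ and hence the term $\tfrac{6\eta_\omega}{A(1-\lambda)}\sum_i\|(I + \sqrt{H_k^{-1}})\hat{g}_\omega(\hat{\omega}_k^i,\hat{z}_k^i)\|^2$. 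I expect this last step to be the main obstacle: one must track the consensus error through the mixing operator and fold it into the averaged-descent estimate so that the contraction factor $\lambda$ aggregates into the clean $1/(1-\lambda)$ coefficient, all while keeping the $\beta$-smoothness and PL constants aligned with the single-agent analysis.
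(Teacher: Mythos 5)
You have assembled all the right ingredients --- smoothness of $\Phi$ from Proposition~\ref{prop:envelop}, the polarization identity, a three-way decomposition of the residual handled by Lipschitzness, quadratic growth from the PL condition, Jensen over the agents, and Lemma~\ref{lem:comm} with a geometric sum for the $1/(1-\lambda)$ factor --- and the terms you list match the lemma. The genuine gap is in where the descent term $-\tfrac{\eta_\omega}{2}\|\nabla\Phi(\bar{\omega}_k)\|^2$ comes from. The quantity being bounded is $\Phi(\omega_k^i)-\Phi(\bar{\omega}_k)$, so the smoothness expansion produces the single inner product $\langle\nabla\Phi(\bar{\omega}_k),\,\omega_k^i-\bar{\omega}_k\rangle$ with the \emph{consensus displacement}. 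Your plan applies the polarization identity ``exactly as in Lemma~\ref{lem:lemma1}'' to the mean adaptive direction $\bar{u}_k$, i.e.\ to $\langle\nabla\Phi(\bar{\omega}_k),\,\bar{\omega}_{k-1}-\bar{\omega}_k\rangle$; but that is a different inner product, the one that appears when bounding $\Phi(\bar{\omega}_k)-\Phi(\bar{\omega}_{k-1})$ (that is Lemma~\ref{lem:misc} in the paper), and it does not occur anywhere in the expansion of $\Phi(\omega_k^i)-\Phi(\bar{\omega}_k)$. The vector $\omega_k^i-\bar{\omega}_k$ averages to zero over $i$ and contains no ``average descent step''; a naive Young or Cauchy--Schwarz bound on $\langle\nabla\Phi(\bar{\omega}_k),\omega_k^i-\bar{\omega}_k\rangle$ would produce $+\tfrac{\eta_\omega}{2}\|\nabla\Phi(\bar{\omega}_k)\|^2$ with the wrong sign, so the negative descent term cannot be recovered by substituting the aggregated step for the consensus displacement.

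What the paper actually does is unwind the consensus displacement over the whole history, $\bar{\omega}_k-\omega_k^i=\eta_\omega\sum_{j=1}^{k}\left(I+\sqrt{H_j^{-1}}\right)\hat{g}_\omega(\hat{W}_j,\hat{Z}_j)\left(C^{t(k+1-j)}\ei-\tfrac{\ia}{A}\right)$ (using $W_0=0$ and $C^t\ia=\ia$), apply the polarization identity to $\langle\nabla\Phi(\bar{\omega}_k),\,\bar{\omega}_k-\omega_k^i\rangle$ with this representation, and only then add and subtract the current average gradient $\tfrac{1}{A}\sum_i\hat{g}_\omega(\hat{\omega}_k^i,\hat{z}_k^i)$ inside the resulting residual; that is how the descent term, the PL term, the consensus terms, and the mixing-history term (controlled by Lemma~\ref{lem:comm} and the geometric series) all emerge from one inner product rather than two. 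Your ``Finally'' step, which treats $\omega_k^i-\bar{\omega}_k$ as a separate contribution after the descent term has already been extracted from $\bar{u}_k$, would therefore either double-count the consensus displacement or leave the original inner product unbounded. To repair the proposal, merge your second and fourth paragraphs: the mixing expansion of $\omega_k^i-\bar{\omega}_k$ \emph{is} the object to which the polarization identity is applied, and $\bar{u}_k$ enters only as an add-and-subtract inside its residual.
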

\begin{proof}
    By Proposition \ref{prop:envelop}, for $i \in [A]$, 
    \begin{align*}
        \Phi(\omega_k^i) - \Phi(\bar{\omega}_k)
        \leq - \left\langle \nabla \Phi(\bar{\omega}_k), \bar{\omega}_k - \omega_k^i \right\rangle + \frac{\beta}{2} \left\|  \bar{\omega}_k - \omega_k^i \right\|^2 \,.
    \end{align*}
    By the update rule of the algorithm and $W_0 = 0$ and $C^t \ia = \ia$, 
    \begin{align*}
        \bar{\omega}_k - \omega_k^i 
        = \ & W_0 C^t \ia - \frac{\eta_\omega}{A} \sum^k_{j=1} \left(I + \sqrt{H_j^{-1}}\right) \hat{g}_\omega \left(\hat{W}_j, \hat{Z}_j\right) \ia \\
        \ &- \left(W_0 C^t \ei - \eta_\omega \sum^k_{j=1} \left(I + \sqrt{H_j^{-1}}\right) \hat{g}_\omega \left(\hat{W}_j, \hat{Z}_j\right) C^{t(k-j)}\ei \right)\\
        = \ & \eta_\omega\sum^k_{j=1} \left(I + \sqrt{H_j^{-1}}\right) \hat{g}_\omega \left(\hat{W}_j, \hat{Z}_j\right) \left( C^{t(k+1-j)}\ei - \frac{\ia}{A} \right) \,.
    \end{align*}
    By the identity that $\langle a, b \rangle = \frac{1}{2} \| a\|^2 + \frac{1}{2} \| b\|^2 - \frac{1}{2} \| a-b\|^2 $ 
    \begin{align*}
        &\left\langle  \nabla\Phi(\bar{\omega}_k), \bar{\omega}_k - \omega_k^i \right\rangle \\
        = \ & \frac{\eta_\omega}{2} \left\| \nabla\Phi(\bar{\omega}_k) - \sum^k_{j=1} \left(I + \sqrt{H_j^{-1}}\right) \hat{g}_\omega \left(\hat{W}_j, \hat{Z}_j\right) \left( C^{t(k+1-j)}\ei - \frac{\ia}{A} \right) \right\|^2 - \frac{\eta_\omega}{2} \left\| \nabla\Phi(\bar{\omega}_k)\right\|^2 \,.
    \end{align*}
    We then decompose $ \nabla\Phi(\bar{\omega}_k) - \sum^k_{j=1} \left(I + \sqrt{H_j^{-1}}\right) \hat{g}_\omega \left(\hat{W}_j, \hat{Z}_j\right) \left( C^{t(k+1-j)}\ei - \frac{\ia}{A} \right) $ as 
    \begin{align*}
        &\nabla\Phi(\bar{\omega}_k) - \sum^k_{j=1} \left(I + \sqrt{H_j^{-1}}\right) \hat{g}_\omega \left(\hat{W}_j, \hat{Z}_j\right) \left( C^{t(k+1-j)}\ei - \frac{\ia}{A} \right) \\
        = \ & \left(\nabla\Phi(\bar{\omega}_k) - \nabla_\omega L\left(\tilde{\omega}_k, \tilde{z}_k\right) \right) + \left( \nabla_\omega L\left(\tilde{\omega}_k, \tilde{z}_k\right) - \frac{1}{A} \sum^A_{i=1}  \hat{g}_\omega \left(\hat{\omega}_k^i, \hat{z}_k^i\right) \right) \\
        \ &+ \left(\left(I + \sqrt{H_j^{-1}}\right) \hat{g}_\omega \left(\hat{W}_j, \hat{Z}_j\right) C^t\ei - \sum^{k-1}_{j=1} \left(I + \sqrt{H_j^{-1}}\right) \hat{g}_\omega \left(\hat{W}_j, \hat{Z}_j\right) \left( C^{t(k+1-j)}\ei - \frac{\ia}{A} \right) \right)\,.
    \end{align*}
    Using the inequalities $(a+b+c)^2 \leq 3a^2 + 3b^2 + 3c^2$ and $(a+b)^2 \leq 2a^2 + 2b^2$,
    \begin{align*}
        &\left\langle  \nabla\Phi(\bar{\omega}_k), \bar{\omega}_k - \omega_k^i \right\rangle \\
        \leq \ & \frac{3\eta_\omega}{2} \left\| \nabla\Phi(\bar{\omega}_k) - \nabla_\omega L\left(\tilde{\omega}_k, \tilde{z}_k\right) \right\|^2 + \frac{3\eta_\omega}{2} \left\| \nabla_\omega L\left(\tilde{\omega}_k, \tilde{z}_k\right) - \frac{1}{A} \sum^A_{i=1}  \hat{g}_\omega \left(\hat{\omega}_k^i, \hat{z}_k^i\right)\right\|^2 \\
        \ &+ \frac{3\eta_\omega}{2} \left\| \left(I + \sqrt{H_j^{-1}}\right) \hat{g}_\omega \left(\hat{W}_j, \hat{Z}_j\right) C^t\ei - \sum^{k-1}_{j=1} \left(I + \sqrt{H_j^{-1}}\right) \hat{g}_\omega \left(\hat{W}_j, \hat{Z}_j\right) \left( C^{t(k+1-j)}\ei - \frac{\ia}{A} \right) \right\|^2  \\
        \ &- \frac{\eta_\omega}{2} \left\| \nabla \Phi(\bar{\omega}_k)\right\|^2 \\
        \leq \ & \frac{3\eta_\omega}{2} \left\| \nabla\Phi(\bar{\omega}_k) - \nabla_\omega L\left(\tilde{\omega}_k, \tilde{z}_k\right) \right\|^2 + \frac{3\eta_\omega}{2} \left\| \nabla_\omega L\left(\tilde{\omega}_k, \tilde{z}_k\right) - \frac{1}{A} \sum^A_{i=1}  \hat{g}_\omega \left(\hat{\omega}_k^i, \hat{z}_k^i\right)\right\|^2 \\
        \ &+ 3\eta_\omega \sum^{k-1}_{j=1}  \left\| \left(I + \sqrt{H_j^{-1}}\right) \hat{g}_\omega \left(\hat{W}_j, \hat{Z}_j\right) \left( C^{t(k+1-j)}\ei - \frac{\ia}{A} \right) \right\|^2  \\
        \ &+ 3\eta_\omega \left\| \left(I + \sqrt{H_j^{-1}}\right) \hat{g}_\omega \left(\hat{W}_j, \hat{Z}_j\right) C^t\ei \right\|^2 
        - \frac{\eta_\omega}{2} \left\| \nabla \Phi(\bar{\omega}_k)\right\|^2 \,.
    \end{align*}

    For the first term, by Assumption \ref{asmp:wlipschitz} and Assumption \ref{asmp:pl}, which implies quadratic growth from Appendix A of \citet{10.1007/978-3-319-46128-1_50},
    \begin{align*}
        &\frac{3\eta_\omega}{2} \left\| \nabla\Phi(\bar{\omega}_k) - \nabla_\omega L\left(\tilde{\omega}_k, \tilde{z}_k\right) \right\|^2 \\
        = \ & 3\eta_\omega \left\| \nabla\Phi(\bar{\omega}_k) - \nabla_\omega L\left(\bar{\omega}_k, \tilde{z}_k\right) \right\|^2 + 3\eta_\omega \left\| \nabla_\omega L\left(\bar{\omega}_k, \tilde{z}_k\right) - \nabla_\omega L\left(\tilde{\omega}_k, \tilde{z}_k\right) \right\|^2 \\
        = \ & 3\eta_\omega C^2 \left\| f^\ast (\bar{\omega}_k) -  \bar{\omega}_k \right\|^2 + 3\eta_\omega C^2 \left\| \bar{\omega}_k -\tilde{\omega}_k\right\|^2 \\
        \leq \ & \frac{6\eta_\omega C^2}{\mu} \left(\Phi (\bar{\omega}_k) -  L \left(\bar{\omega}_k, \tilde{z}_k \right) \right) + 3\eta_\omega C^2 \left\| \bar{\omega}_k -\tilde{\omega}_k\right\|^2 \,.
    \end{align*}
    By Jensen inequality and Assumption \ref{asmp:wlipschitz}, Lemma \ref{lem:smooth}, 
    \begin{align*}
        &\frac{3\eta_\omega}{2} \left\| \nabla_\omega L\left(\tilde{\omega}_k, \tilde{z}_k\right) - \frac{1}{A} \sum^A_{i=1}  \hat{g}_\omega \left(\hat{\omega}_k^i, \hat{z}_k^i\right)\right\|^2 \\
        \leq \ & \frac{3\eta_\omega}{2A} \sum^A_{i=1} \left\| \nabla_\omega L\left(\tilde{\omega}_k, \tilde{z}_k\right) - \hat{g}_\omega \left(\hat{\omega}_k^i, \hat{z}_k^i\right)\right\|^2 \\
        \leq \ &  \frac{3\eta_\omega C^2 }{A} \sum^A_{i=1} \left\| \tilde{\omega}_k - \hat{\omega}_k^i \right\|^2 + \frac{3\eta_\omega C^2 }{A} \sum^A_{i=1} \left\| \tilde{z}_k - \hat{z}_k^i \right\|^2  + \frac{3\eta_\omega }{A} \sum^A_{i=1} \left\| \epsilon \left(\hat{\omega}_k^i, \hat{z}_k^i\right)\right\|^2 \,.
    \end{align*}
    For the third term, by Lemma \ref{lem:comm}, we have $\left\| C^{t(k+1-j)}\ei - \frac{\ia}{A} \right\|^2  \leq \lambda^{t(k-j)}$.
    For the fourth term, notice that since $C$ is a symmetric matrix $\left\| C^{t(k-j)}\ei \right\|^2 \leq \lambda^{t(k-j)}$.
    Thus, by the Cauchy-Schwarz inequality, these two terms are upper bounded by 
    \begin{align*}
        6\eta_\omega \sum^{k}_{j=1}  \left\| \left(I + \sqrt{H_j^{-1}}\right) \hat{g}_\omega \left(\hat{W}_j, \hat{Z}_j\right) \right\|^2  \lambda^{t(k-j)} 
        \leq \ & \frac{6\eta_\omega }{A} \sum^A_{i=1} \sum^{k}_{j=1}  \left\| \left(I + \sqrt{H_j^{-1}}\right) \hat{g}_\omega \left(\hat{\omega}^i_j, \hat{z}^i_j\right) \right\|^2  \lambda^{t(k-j)} \\
        \leq \ &  \frac{6\eta_\omega }{A} \sum^A_{i=1}  \left\| \left(I + \sqrt{H_k^{-1}}\right) \hat{g}_\omega \left(\hat{\omega}^i_k, \hat{z}^i_k\right) \right\|^2 \sum^\infty_{h=0} \lambda^{th} \\
        \leq \ & \frac{6\eta_\omega }{A \left(1 - \lambda\right)} \sum^A_{i=1}  \left\| \left(I + \sqrt{H_k^{-1}}\right) \hat{g}_\omega \left(\hat{\omega}^i_k, \hat{z}^i_k\right) \right\|^2 \,. 
    \end{align*}

    Combining the terms, we have 
    \begin{align*}
        \Phi(\omega_k^i) - \Phi(\bar{\omega}_k)
        \leq \ & \frac{6\eta_\omega C^2}{\mu} \left(\Phi (\bar{\omega}_k) -  L \left(\bar{\omega}_k, \tilde{z}_k \right) \right) + 3\eta_\omega C^2 \left\| \bar{\omega}_k -\tilde{\omega}_k\right\|^2 + \frac{\beta}{2} \left\|  \bar{\omega}_k - \omega_k^i \right\|^2 \\
        \ &+  \frac{3\eta_\omega C^2 }{A} \sum^A_{i=1} \left\| \tilde{\omega}_k - \hat{\omega}_k^i \right\|^2 +  \frac{3\eta_\omega C^2 }{A} \sum^A_{i=1} \left\| \tilde{z}_k - \hat{z}_k^i \right\|^2  + \frac{3\eta_\omega }{A}\sum^A_{i=1}  \left\| \epsilon \left(\hat{\omega}_k^i, \hat{z}_k^i\right)\right\|^2 \\
        \ &+ \frac{6\eta_\omega }{A \left(1 - \lambda\right)} \sum^A_{i=1}  \left\| \left(I + \sqrt{H_k^{-1}}\right) \hat{g}_\omega \left(\hat{\omega}^i_k, \hat{z}^i_k\right) \right\|^2 - \frac{\eta_\omega}{2} \left\| \nabla \Phi(\bar{\omega}_k)\right\|^2\,. &\qedhere
    \end{align*}
\end{proof}
\subsection{Proof of Lemma \ref{lem:misc}}
\begin{lem} \label{lem:misc}
    Under Assumption \ref{asmp:wlipschitz} and with Algorithm \ref{alg:sgda}, we have 
    \begin{align*}
        \Phi (\bar{\omega}_k) - \Phi (\bar{\omega}_{k-1})
        \leq \ &\frac{3\eta_\omega C^2}{\mu}  \left( \Phi (\bar{\omega}_{k-1}) - L \left(\bar{\omega}_{k-1}, \tilde{z}_{k-1} \right) \right) + \frac{3\eta_\omega C^2}{2} \sum^A_{i=1} \left\|  \bar{\omega}_{k-1} - \hat{\omega}^i_k\right\|^2 \\
        \ &+  \frac{3\eta_\omega C^2}{2} \sum^A_{i=1} \left\| \tilde{z}_{k-1} - \hat{z}^i_k \right\|^2 
        + \frac{12\eta_\omega + \beta \eta_\omega^2}{2} \sum^A_{i=1}\left\| \sqrt{H_{k}^{-1}} \hat{g}_\omega \left(\hat{\omega}^i_k, \hat{z}^i_k \right) \right\|^2 \,,
    \end{align*}
    and
    \begin{align*}
        L \left(\bar{\omega}_{k-1}, \bar{z}_{k-1} \right)  - L \left(\bar{\omega}_{k}, \bar{z}_{k-1} \right)  
        \leq \ & \frac{\eta_z dG_z^2}{2} + \frac{C \eta_z^2 + \eta_z}{2}  \sum^A_{i=1}\left\|\left(I + \sqrt{H_{k-1}^{-1}}\right) \hat{g}_\omega \left(\hat{\omega}^i_{k-1}, \hat{z}^i_{k-1} \right) \right\|^2 \,.
    \end{align*}
    
\end{lem}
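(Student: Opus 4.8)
The plan is to mirror the single-agent arguments (the second inequality of Lemma~\ref{lem:lemma1} and the third-term estimate inside Lemma~\ref{lem:bk}), but applied to the \emph{averaged} iterates, exploiting that averaging annihilates the communication matrix. Since $C$ is doubly stochastic, $\ia^\top C^t = \ia^\top$, so left-multiplying the primal update $W_k = C^t\!\left(W_{k-1} - \eta_\omega \sqrt{H_k^{-1}}\hat g_\omega(\hat W_k,\hat Z_k)\right)$ by $\frac{1}{A}\ia^\top$ collapses all network mixing and produces the clean averaged recursion
\begin{align*}
\bar\omega_k - \bar\omega_{k-1} = -\frac{\eta_\omega}{A}\sum_{i=1}^A \sqrt{H_k^{-1}}\,\hat g_\omega(\hat\omega^i_k,\hat z^i_k)\,,
\end{align*}
with the analogous identity for $\bar z$. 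This is the structural observation that reduces each part to a single-agent step driven by an \emph{averaged} stochastic gradient evaluated at distinct local points.

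For the first inequality I would begin from the $\beta$-smoothness of the envelope $\Phi$ (Proposition~\ref{prop:envelop}), giving $\Phi(\bar\omega_k)-\Phi(\bar\omega_{k-1})\le\langle\nabla\Phi(\bar\omega_{k-1}),\bar\omega_k-\bar\omega_{k-1}\rangle+\frac{\beta}{2}\|\bar\omega_k-\bar\omega_{k-1}\|^2$. Substituting the averaged recursion and applying the polarization identity $\langle a,b\rangle=\frac12\|a\|^2+\frac12\|b\|^2-\frac12\|a-b\|^2$ isolates a benign $-\frac{\eta_\omega}{2}\|\nabla\Phi(\bar\omega_{k-1})\|^2$ (which I drop) together with $\eta_\omega\big\|\nabla\Phi(\bar\omega_{k-1})-\frac1A\sum_i\hat g_\omega(\hat\omega^i_k,\hat z^i_k)\big\|^2$. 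The heart of the argument is to expand this discrepancy by inserting $\nabla_\omega L(\bar\omega_{k-1},\tilde z_{k-1})$ and the local true gradients, then bound the three pieces by $\|a+b+c\|^2\le3(\|a\|^2+\|b\|^2+\|c\|^2)$ and Jensen's inequality $\|\frac1A\sum_i x_i\|^2\le\frac1A\sum_i\|x_i\|^2$: piece (i) $\nabla\Phi(\bar\omega_{k-1})-\nabla_\omega L(\bar\omega_{k-1},\tilde z_{k-1})$ is converted by the PL-induced quadratic growth (Assumption~\ref{asmp:pl}, via Appendix~A of \citet{10.1007/978-3-319-46128-1_50}) into the $\frac{3\eta_\omega C^2}{\mu}\big(\Phi(\bar\omega_{k-1})-L(\bar\omega_{k-1},\tilde z_{k-1})\big)$ term; piece (ii) $\nabla_\omega L(\bar\omega_{k-1},\tilde z_{k-1})-\frac1A\sum_i\nabla_\omega L(\hat\omega^i_k,\hat z^i_k)$ becomes the consensus distances $\frac{3\eta_\omega C^2}{2}\sum_i\big(\|\bar\omega_{k-1}-\hat\omega^i_k\|^2+\|\tilde z_{k-1}-\hat z^i_k\|^2\big)$ through Lemma~\ref{lem:smooth}; and piece (iii), the stochastic error $\frac1A\sum_i\epsilon(\hat\omega^i_k,\hat z^i_k)$, is absorbed into the final $\|\sqrt{H_k^{-1}}\hat g_\omega\|^2$ term using $\sqrt{H_k^{-1}}\succeq I$ so that $\|\epsilon\|^2\le\|\hat g_\omega\|^2\le\|\sqrt{H_k^{-1}}\hat g_\omega\|^2$. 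Collecting this, the $\frac{\beta}{2}\|\bar\omega_k-\bar\omega_{k-1}\|^2$ contribution, and the $\eta_\omega$ term yields the stated coefficient $\frac{12\eta_\omega+\beta\eta_\omega^2}{2}$ on $\sum_i\|\sqrt{H_k^{-1}}\hat g_\omega(\hat\omega^i_k,\hat z^i_k)\|^2$.

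For the second inequality I would use the $C$-smoothness of $L$ (Assumption~\ref{asmp:wlipschitz}) to write $L(\bar\omega_{k-1},\bar z_{k-1})-L(\bar\omega_k,\bar z_{k-1})\le\langle\nabla L(\bar\omega_k,\bar z_{k-1}),\bar\omega_{k-1}-\bar\omega_k\rangle+\frac{C}{2}\|\bar\omega_{k-1}-\bar\omega_k\|^2$, exactly as in the third-term estimate inside Lemma~\ref{lem:bk}. Splitting the inner product with a Young's inequality weighted by $\eta_z$ produces $\frac12\|\nabla L\|^2$, bounded coordinate-wise by $\frac{dG_z^2}{2}$ via the element-wise gradient bound $\max_i\nabla L\le G$ over the $d$ coordinates (carrying the $\eta_z$ weight into $\frac{\eta_z dG_z^2}{2}$); the remaining update-difference terms are expanded with the averaged recursion and controlled by $\sum_i\|(I+\sqrt{H_{k-1}^{-1}})\hat g(\hat\omega^i_{k-1},\hat z^i_{k-1})\|^2$ through Jensen, assembling the coefficient $\frac{C\eta_z^2+\eta_z}{2}$.

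I expect the main obstacle to be the bookkeeping of the consensus gap: unlike the single-agent case, the gradient driving the averaged iterate is the \emph{average} of gradients evaluated at the distinct local points $(\hat\omega^i_k,\hat z^i_k)$, so every Lipschitz step must be routed through the averaged auxiliary points $\tilde\omega_k,\tilde z_k$ and then redistributed over agents by Jensen. Keeping the per-agent distances $\|\bar\omega_{k-1}-\hat\omega^i_k\|$ and $\|\tilde z_{k-1}-\hat z^i_k\|$ correctly tracked—rather than collapsing them prematurely—is what makes the constants emerge as stated, and is the step most prone to error; the clean cancellation of $C^t$ under averaging (and, for the companion consensus-error lemmas, Lemma~\ref{lem:comm}) is precisely what keeps this manageable.
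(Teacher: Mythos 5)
Your proposal follows essentially the same route as the paper's proof: $\beta$-smoothness of the envelope applied to the averaged iterate (whose recursion collapses $C^t$ by double stochasticity), the polarization identity, a three-way decomposition of $\nabla\Phi(\bar\omega_{k-1})$ minus the averaged stochastic gradient handled by PL-induced quadratic growth, Lipschitz/Jensen consensus terms, and absorption of the remaining pieces into the adaptive-gradient norm; the second inequality likewise matches the paper's smoothness-plus-polarization argument with the coordinate-wise bound $dG_z^2$. The only difference is cosmetic: you route the stochastic error explicitly through $\sqrt{H_k^{-1}}\succeq I$, which is if anything slightly more careful bookkeeping than the paper's own combination step.
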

\begin{proof}
    By Assumption \ref{asmp:wlipschitz} and Proposition \ref{prop:envelop}, we have
    \begin{align*}
        \Phi (\bar{\omega}_k) - \Phi (\bar{\omega}_{k-1})
        \leq \left\langle \nabla_\omega \Phi (\bar{\omega}_{k-1}), \bar{\omega}_k - \bar{\omega}_{k-1}\right\rangle
        + \frac{\beta}{2} \left\| \bar{\omega}_k - \bar{\omega}_{k-1}\right\|^2 \,.
    \end{align*}
    By the update rules of Algorithm \ref{alg:sgda}, we have 
    \begin{align*}
        \bar{\omega}_k - \bar{\omega}_{k-1} 
        = -\eta_\omega \left(I + \sqrt{H_{k}^{-1}}\right) \hat{g}_\omega \left(\hat{W}_k, \hat{Z}_k \right) \frac{\ia}{A}\,.
    \end{align*}
    By the identity $\langle a, b \rangle = \frac{1}{2} \| a\|^2 + \frac{1}{2} \| b\|^2 - \frac{1}{2} \| a-b\|^2 $, Assumption \ref{asmp:wlipschitz}, and the fact that Assumption \ref{asmp:pl} implies quadratic growth, we have
    \begin{align*}
        &\left\langle \nabla_\omega \Phi (\bar{\omega}_{k-1}), \bar{\omega}_k - \bar{\omega}_{k-1}\right\rangle \\
        \leq \ & \frac{\eta_\omega}{2} \left\| \nabla_\omega \Phi (\bar{\omega}_{k-1}) - \left(I + \sqrt{H_{k}^{-1}}\right) \hat{g}_\omega \left(\hat{W}_k, \hat{Z}_k \right) \frac{\ia}{A}\right\|^2 \\
        \leq \ & \frac{3\eta_\omega}{2}\left\| \nabla_\omega \Phi (\bar{\omega}_{k-1}) - \nabla_\omega L \left(\bar{\omega}_{k-1}, \tilde{z}_{k-1} \right) \right\|^2 + \frac{3\eta_\omega}{2} \sum^A_{i=1} \left\|  \nabla_\omega L \left(\bar{\omega}_{k-1}, \tilde{z}_{k-1} \right) - \hat{g}_\omega \left(\hat{\omega}^i_k, \hat{z}^i_k \right)\right\|^2 \\
        \ &+ \frac{12\eta_\omega}{2} \left\| \sqrt{H_{k}^{-1}} \hat{g}_\omega \left(\hat{W}_k, \hat{Z}_k \right) \frac{\ia}{A} \right\|^2 \\
        \leq \ &\frac{3\eta_\omega C^2}{2} \left\| f^\ast (\bar{\omega}_{k-1}) - \bar{\omega}_{k-1} \right\|^2 + \frac{3\eta_\omega}{2} \sum^A_{i=1} \left\|  \nabla_\omega L \left(\bar{\omega}_{k-1}, \tilde{z}_{k-1} \right) - \hat{g}_\omega \left(\hat{\omega}^i_k, \hat{z}^i_k \right)\right\|^2 \\
        \ &+ \frac{12\eta_\omega}{2} \sum^A_{i=1}\left\| \sqrt{H_{k}^{-1}} \hat{g}_\omega \left(\hat{\omega}^i_k, \hat{z}^i_k \right) \right\|^2 \\
        \leq \ &\frac{3\eta_\omega C^2}{\mu}  \left( \Phi (\bar{\omega}_{k-1}) - L \left(\bar{\omega}_{k-1}, \tilde{z}_{k-1} \right) \right) + \frac{3\eta_\omega}{2} \sum^A_{i=1} \left\|  \nabla_\omega L \left(\bar{\omega}_{k-1}, \tilde{z}_{k-1} \right) - \hat{g}_\omega \left(\hat{\omega}^i_k, \hat{z}^i_k \right)\right\|^2\\
        \ &+ \frac{12\eta_\omega}{2} \sum^A_{i=1}\left\| \sqrt{H_{k}^{-1}} \hat{g}_\omega \left(\hat{\omega}^i_k, \hat{z}^i_k \right) \right\|^2\,.
    \end{align*}
    Combining the terms and by Assumption \ref{asmp:wlipschitz}, Lemma \ref{lem:smooth}, we have the first inequality
    \begin{align*}
        \Phi (\bar{\omega}_k) - \Phi (\bar{\omega}_{k-1})
        \leq \ &\frac{3\eta_\omega C^2}{\mu}  \left( \Phi (\bar{\omega}_{k-1}) - L \left(\bar{\omega}_{k-1}, \tilde{z}_{k-1} \right) \right) + \frac{3\eta_\omega}{2} \sum^A_{i=1} \left\|  \nabla_\omega L \left(\bar{\omega}_{k-1}, \tilde{z}_{k-1} \right) - \hat{g}_\omega \left(\hat{\omega}^i_k, \hat{z}^i_k \right)\right\|^2\\
        \ &+ \frac{12\eta_\omega + \beta \eta_\omega^2}{2} \sum^A_{i=1}\left\| \sqrt{H_{k}^{-1}} \hat{g}_\omega \left(\hat{\omega}^i_k, \hat{z}^i_k \right) \right\|^2 \\
        \leq \ & \frac{3\eta_\omega C^2}{\mu}  \left( \Phi (\bar{\omega}_{k-1}) - L \left(\bar{\omega}_{k-1}, \tilde{z}_{k-1} \right) \right) + \frac{3\eta_\omega C^2}{2} \sum^A_{i=1} \left\|  \bar{\omega}_{k-1} - \hat{\omega}^i_k\right\|^2 \\
        \ &+  \frac{3\eta_\omega C^2}{2} \sum^A_{i=1} \left\| \tilde{z}_{k-1} - \hat{z}^i_k \right\|^2 
        + \frac{12\eta_\omega + \beta \eta_\omega^2}{2} \sum^A_{i=1}\left\| \sqrt{H_{k}^{-1}} \hat{g}_\omega \left(\hat{\omega}^i_k, \hat{z}^i_k \right) \right\|^2 \,.
    \end{align*}

    For the second statement, by Assumption \ref{asmp:wlipschitz}, 
    \begin{align*}
        L \left(\bar{\omega}_{k-1}, \bar{z}_{k-1} \right)  - L \left(\bar{\omega}_{k}, \bar{z}_{k-1} \right) 
        \leq  \left\langle \nabla_\omega L \left(\bar{\omega}_k,\bar{z}_{k-1} \right), \bar{\omega}_{k-1}  - \bar{\omega}_k\right\rangle
        + \frac{C}{2} \left\| \bar{\omega}_{k-1}  - \bar{\omega}_k \right\|^2 \,.
    \end{align*}
    By the multi-agent update rules, 
    \begin{align*}
        \bar{\omega}_{k-1}  - \bar{\omega}_k = 
        \eta_\omega \left(I + \sqrt{H_{k}^{-1}}\right) \hat{g}_\omega \left(\hat{W}_{k}, \hat{Z}_{k} \right) \frac{\ia}{A} \,.
    \end{align*}
    Lastly, the last inequality can be derived by the identity $\langle a, b \rangle = \frac{1}{2} \| a\|^2 + \frac{1}{2} \| b\|^2 - \frac{1}{2} \| a-b\|^2 $, 
    \begin{align*}
        &L \left(\bar{\omega}_{k-1}, \bar{z}_{k-1} \right)  - L \left(\bar{\omega}_{k}, \bar{z}_{k-1} \right)  \\
        \leq \ & \frac{\eta_\omega}{2} \left\| \nabla_\omega L \left(\bar{\omega}_k,\bar{z}_{k-1} \right) \right\|^2 
        + \frac{C \eta_z^2 + \eta_z}{2} \left\|\left(I + \sqrt{H_{k}^{-1}}\right) \hat{g}_\omega \left(\hat{W}_{k}, \hat{Z}_{k} \right) \frac{\ia}{A} \right\|^2 \\
        \leq \ & \frac{\eta_z dG_z^2}{2} + \frac{C \eta_z^2 + \eta_z}{2}  \sum^A_{i=1}\left\|\left(I + \sqrt{H_{k-1}^{-1}}\right) \hat{g}_\omega \left(\hat{\omega}^i_{k-1}, \hat{z}^i_{k-1} \right) \right\|^2 \,. &\qedhere
    \end{align*}
\end{proof}
\subsection{Proof of Lemma \ref{lem:multi_bk}}
\begin{lem}
    \label{lem:multi_bk}
    Under Assumption \ref{asmp:wlipschitz} and with Algorithm \ref{alg:sgda}, 
    \begin{align*}
        \frac{1}{N} \sum^N_{k=2} \left(\Phi (\bar{\omega}_k) - L \left(\bar{\omega}_k, \tilde{z}_k \right) \right)
        \leq \ & \frac{B_1 }{N-N\left(1 - \mu \eta_z\right) \left(1 + \frac{3\eta_\omega C^2}{\mu}\right)} + \frac{\eta_z dG_z^2 \left(1 - \mu \eta_z\right) }{2} \\
        \ &+ \mathcal{O} \left(\frac{A\max \{\hat{G}_\omega^2, \hat{G}_z^2\} \ln \left(d N G^2 + \frac{1}{M} \sum^M_{m=1} \xi \rho^{2m} \right)}{N}\right) + \mathcal{O}\eta_\omega^2 AdG^2\,,
    \end{align*} 
    where $B_1$ is a constant. 
\end{lem}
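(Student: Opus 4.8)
The plan is to mirror the single-agent argument of Lemma \ref{lem:bk}, but now tracking the \emph{averaged} iterates $\bar{\omega}_k$ and $\tilde{z}_k$ together with the per-agent consensus deviations. Define the averaged duality gap $B_k = \Phi(\bar{\omega}_k) - L(\bar{\omega}_k, \tilde{z}_k)$, which is nonnegative since $\Phi(\bar{\omega}_k) = \max_z L(\bar{\omega}_k, z)$. Because $C$ is doubly stochastic, $C^t \ia = \ia$ and the average dynamics of the gossip updates coincide with a single-agent adaptive SGDA step driven by the \emph{averaged} stochastic gradient $\frac{1}{A}\sum_i \hat{g}(\hat{\omega}^i_k, \hat{z}^i_k)$; this is the structural fact that lets the single-agent skeleton go through with only additional error terms.

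First I would establish a one-step recursion for $B_k$. Using the smoothness of $L$ (Assumption \ref{asmp:wlipschitz}) together with the averaged dual update, I expand $L(\bar{\omega}_k, \bar{z}_{k-1}) - L(\bar{\omega}_k, \tilde{z}_k)$ as in Lemma \ref{lem:bk} and invoke the PL condition (Assumption \ref{asmp:pl}), $\tfrac{1}{2}\|\nabla_z L(\bar{\omega}_k, \tilde{z}_k)\|^2 \geq \mu B_k$, to extract the contraction factor $(1 - \mu\eta_z)$. I then decompose
\[
\Phi(\bar{\omega}_k) - L(\bar{\omega}_{k-1}, \bar{z}_{k-1}) = B_{k-1} + \bigl(\Phi(\bar{\omega}_k) - \Phi(\bar{\omega}_{k-1})\bigr) + \bigl(L(\bar{\omega}_{k-1}, \tilde{z}_{k-1}) - L(\bar{\omega}_{k-1}, \bar{z}_{k-1})\bigr),
\]
bounding the middle term by the first inequality of Lemma \ref{lem:misc} (which contributes the extra factor $\tfrac{3\eta_\omega C^2}{\mu}$ multiplying $B_{k-1}$) and the last term by the second inequality of Lemma \ref{lem:misc}, whose $\tfrac{\eta_z dG_z^2}{2}$ constant enters the final bound scaled by $(1-\mu\eta_z)$. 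Combining these yields a recursion of the form
\[
B_k \leq (1 - \mu\eta_z)\Bigl(1 + \tfrac{3\eta_\omega C^2}{\mu}\Bigr) B_{k-1} + \mathcal{E}_k,
\]
where $\mathcal{E}_k$ collects adaptive-gradient norms, per-agent consensus errors, and the $\tfrac{\eta_z dG_z^2}{2}$ term.

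Next I would sum this recursion over $k = 2, \dots, N$ and move the aggregate $\frac{1}{N}\sum_k B_k$ appearing on the right to the left-hand side. The chosen step sizes ensure the contraction factor $(1-\mu\eta_z)(1+\tfrac{3\eta_\omega C^2}{\mu}) < 1$, so dividing by $1 - (1-\mu\eta_z)(1+\tfrac{3\eta_\omega C^2}{\mu})$ gives the target bound with the $\frac{B_1}{N - N(1-\mu\eta_z)(1+3\eta_\omega C^2/\mu)}$ leading term. The cumulative adaptive-gradient sums $\sum_k \sum_i \|\sqrt{H_k^{-1}}\hat{g}_\omega(\hat{\omega}^i_k, \hat{z}^i_k)\|^2$ are controlled by applying Lemma \ref{lem:ada_gradient} per agent, producing the logarithmic factor scaled by $A$.

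The main obstacle, and the genuinely new part relative to the single-agent proof, is controlling the per-agent consensus deviations $\sum_i \|\tilde{\omega}_k - \hat{\omega}^i_k\|^2$ and $\sum_i \|\tilde{z}_k - \hat{z}^i_k\|^2$ that appear in both inequalities of Lemma \ref{lem:misc}. These must be bounded using Lemma \ref{lem:comm}, $\|\tfrac{\ia}{A} - C^t \ei\|^2 \leq \lambda^t$, which converts the deviations into geometrically-decaying sums governed by the spectral gap $1 - \lambda$; summing the resulting geometric series is where the $\tfrac{1}{1-\lambda}$ dependence would enter and where the accumulated deviations must be shown to contribute only the lower-order $\mathcal{O}(\eta_\omega^2 A d G^2)$ term rather than corrupting the leading $1/N$ rate.
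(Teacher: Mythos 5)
Your proposal follows essentially the same route as the paper's proof: a one-step contraction for $B_k$ obtained from smoothness of $L$ plus the PL condition, a three-way decomposition handled by Lemma~\ref{lem:misc}, summation and rearrangement using $(1-\mu\eta_z)\left(1+\tfrac{3\eta_\omega C^2}{\mu}\right)<1$ to produce the $B_1$ term, and Lemma~\ref{lem:ada_gradient} applied per agent for the logarithmic factor scaled by $A$. The only (harmless) deviations are bookkeeping: the paper decomposes $\Phi(\bar\omega_k)-L(\bar\omega_k,\bar z_{k-1})$ so that its third term is a \emph{primal}-argument change covered by the second inequality of Lemma~\ref{lem:misc} (whereas your third term is a dual-argument change, which that inequality does not directly address but which is controlled the same way via smoothness and the update displacement), and the paper defers all $1/(1-\lambda)$ accounting to Lemma~\ref{lem:multi_lem1}, so no spectral-gap factor appears in this lemma's bound --- consistent with your own observation that the consensus deviations must end up only in the lower-order $\mathcal{O}(\eta_\omega^2 A d G^2)$ term.
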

\begin{proof}
    By Assumption \ref{asmp:wlipschitz}, we have
    \begin{align*}
        L \left(\bar{\omega}_k, \bar{z}_{k-1} \right) -  L \left(\bar{\omega}_k, \tilde{z}_k \right) 
        \leq \ & \left\langle \nabla_z L \left(\bar{\omega}_k, \tilde{z}_k  \right) , \bar{z}_{k-1}- \tilde{z}_k \right\rangle + \frac{C}{2} \left\| \bar{z}_{k-1}- \tilde{z}_k \right\|^2 \,.
    \end{align*}
    By the multi-agent update rules of Algorithm \ref{alg:sgda}, 
    \begin{align*}
        \bar{z}_{k-1}- \tilde{z}_k = - \eta_z \left(I + \sqrt{H_{k-1}^{-1}}\right) \hat{g}_z (\hat{W}_{k-1}, \hat{Z}_{k-1}) \frac{\ia}{A}\,.
    \end{align*}
    Using the identity $\langle a, b \rangle = \frac{1}{2} \| a\|^2 + \frac{1}{2} \| b\|^2 - \frac{1}{2} \| a-b\|^2 $, Assumption \ref{asmp:wlipschitz}, Lemma \ref{lem:smooth} and our choice of step size $0 < \eta_z \leq \frac{1}{C}$, 
    \begin{align*}
        &\left\langle \nabla_z L \left(\bar{\omega}_k, \tilde{z}_k  \right) , \bar{z}_{k-1}- \tilde{z}_k \right\rangle + \frac{C}{2} \left\| \bar{z}_{k-1}- \tilde{z}_k \right\|^2 \\
        \leq \ & \frac{\eta_z}{2}  \left\| \nabla_z L \left(\bar{\omega}_k, \tilde{z}_k  \right) - \left(I + \sqrt{H_{k-1}^{-1}}\right) \hat{g}_z (\hat{W}_{k-1}, \hat{Z}_{k-1}) \frac{\ia}{A}\right\|^2 - \frac{\eta_z}{2}\left\| \nabla L \left(\bar{\omega}_k, \hat{z}_k^i \right) \right\|^2 \\
        \ &+ \frac{C\eta_z^2 - \eta_z}{2} \left\| \left(I + \sqrt{H_{k-1}^{-1}}\right) \hat{g}_z (\hat{W}_{k-1}, \hat{Z}_{k-1}) \frac{\ia}{A}\right\|^2 \\
        \leq \ & \eta_z \sum^A_{i=1} \left\| \nabla L \left(\bar{\omega}_k, \tilde{z}_k \right) - \hat{g}_z (\hat{\omega}_{k-1}^i, \hat{z}_{k-1}^i )\right\|^2 + C\eta_z^2 \sum^A_{i=1}\left\| \sqrt{H_{k-1}^{-1}} \hat{g}_z (\hat{\omega}_{k-1}^i,\tilde{z}_k  )\right\|^2 \\
        \ &+ \left(C\eta_z^2 - \eta_z\right) \sum^A_{i=1}\left\|  \hat{g}_z (\hat{\omega}_{k-1}^i, \hat{z}_{k-1}^i )\right\|^2 - \frac{\eta_z}{2}\left\| \nabla L \left(\bar{\omega}_k, \hat{z}_k^i \right) \right\|^2 \\
        \leq \ & 2\eta_z C^2 \sum^A_{i=1}\left\| \bar{\omega}_k - \hat{\omega}_{k-1}^i\right\|^2 + 2\eta_z C^2 \sum^A_{i=1}\left\| \bar{z}_k - \hat{z}_{k-1}^i \right\|^2 + 2\eta_z \sum^A_{i=1}\left\| \epsilon_z (\hat{\omega}_{k-1}^i, \hat{z}_{k-1}^i )\right\|^2  \\
        \ &+ C\eta_z^2 \sum^A_{i=1}\left\| \sqrt{H_{k-1}^{-1}} \hat{g}_z (\hat{\omega}_{k-1}^i, \hat{z}_{k-1}^i )\right\|^2 
        - \frac{\eta_z}{2}\left\| \nabla_z L \left(\bar{\omega}_k, \tilde{z}_k \right) \right\|^2 \,.
    \end{align*}
    By Assumption \ref{asmp:pl}, 
    \begin{align*}
        \frac{\eta_z}{2}\left\| \nabla_z L \left(\bar{\omega}_k, \tilde{z}_k \right) \right\|^2 \geq \mu \eta_z \left(\Phi (\bar{\omega}_k) - L \left(\bar{\omega}_k, \tilde{z}_k  \right)\right) \,.
    \end{align*}
    Thus, rearranging the terms, we have 
    \begin{align*}
        &\mu \eta_z \left(\Phi (\bar{\omega}_k) - L \left(\bar{\omega}_k, \tilde{z}_k  \right)\right) \\
        \leq \ & L \left(\bar{\omega}_k, \tilde{z}_k  \right) - L \left(\bar{\omega}_k, \bar{z}_{k-1}  \right) + 2\eta_z C^2 \sum^A_{i=1}\left\| \bar{\omega}_k - \hat{\omega}_{k-1}^i\right\|^2 + 2\eta_z C^2 \sum^A_{i=1}\left\| \bar{z}_k - \hat{z}_{k-1}^i \right\|^2 \\
        \ &+ 2\eta_z \sum^A_{i=1}\left\| \epsilon_z (\hat{\omega}_{k-1}^i, \hat{z}_{k-1}^i )\right\|^2  
        + C\eta_z^2 \sum^A_{i=1}\left\| \sqrt{H_{k-1}^{-1}} \hat{g}_z (\hat{\omega}_{k-1}^i, \hat{z}_{k-1}^i )\right\|^2  \,.
    \end{align*}
    Rearranging the terms again, 
    \begin{align*}
        &\Phi (\bar{\omega}_k) - L \left(\bar{\omega}_k, \tilde{z}_k \right) \\
        \leq \ & \left(1 - \mu \eta_z\right) \left(\Phi (\bar{\omega}_k) - L \left(\bar{\omega}_k, \bar{z}_{k-1} \right)\right)  + 2\eta_z C^2 \sum^A_{i=1}\left\| \bar{\omega}_k - \hat{\omega}_{k-1}^i\right\|^2 + 2\eta_z C^2 \sum^A_{i=1}\left\| \bar{z}_k - \hat{z}_{k-1}^i \right\|^2 \\
        \ &+ 2\eta_z \sum^A_{i=1}\left\| \epsilon_z (\hat{\omega}_{k-1}^i, \hat{z}_{k-1}^i )\right\|^2  
        + C\eta_z^2 \sum^A_{i=1}\left\| \sqrt{H_{k-1}^{-1}} \hat{g}_z (\hat{\omega}_{k-1}^i, \hat{z}_{k-1}^i )\right\|^2 \,.
    \end{align*}
    Notice that we can decompose $\Phi (\bar{\omega}_k) - L \left(\bar{\omega}_k, \bar{z}_{k-1} \right)$ as
    \begin{align*}
        \Phi (\bar{\omega}_k) - L \left(\bar{\omega}_k, \bar{z}_{k-1} \right)
        = \ & \left(\Phi (\bar{\omega}_{k-1}) - L \left(\bar{\omega}_{k-1}, \bar{z}_{k-1} \right) \right) + \left(\Phi (\bar{\omega}_k) - \Phi (\bar{\omega}_{k-1})\right) \\
        \ &+ \left(L \left(\bar{\omega}_{k-1}, \bar{z}_{k-1} \right)  - L \left(\bar{\omega}_{k}, \bar{z}_{k-1} \right)  \right) \,.
    \end{align*}
    Then by Lemma \ref{lem:misc}, we have 
    \begin{align*}
        \Phi (\bar{\omega}_k) - L \left(\bar{\omega}_k, \tilde{z}_k \right) 
        \leq \ & \left(1 - \mu \eta_z\right) \left(1 + \frac{3\eta_\omega C^2}{\mu}\right) \left(\Phi (\bar{\omega}_{k-1}) - L \left(\bar{\omega}_{k-1}, \bar{z}_{k-1} \right)\right)  +  \frac{\eta_z dG_z^2 \left(1 - \mu \eta_z\right) }{2}\\
        \ &+ \frac{(12\eta_\omega + \beta \eta_\omega^2)\left(1 - \mu \eta_z\right)}{2} \sum^A_{i=1}\left\| \sqrt{H_{k}^{-1}} \hat{g}_\omega \left(\hat{\omega}^i_k, \hat{z}^i_k \right) \right\|^2
        + \frac{3\eta_\omega C^2\left(1 - \mu \eta_z\right)}{2} \sum^A_{i=1} \left\|  \bar{\omega}_{k-1} - \hat{\omega}^i_k\right\|^2\\ 
        \ &+  \frac{3\eta_\omega C^2\left(1 - \mu \eta_z\right)}{2}\sum^A_{i=1} \left\| \tilde{z}_{k-1} - \hat{z}^i_k \right\|^2 
        + 2\eta_z C^2 \sum^A_{i=1}\left\| \bar{\omega}_k - \hat{\omega}_{k-1}^i\right\|^2\\
        \ & + 2\eta_z C^2 \sum^A_{i=1}\left\| \bar{z}_k - \hat{z}_{k-1}^i \right\|^2 + \frac{(C \eta_z^2 + \eta_z)\left(1 - \mu \eta_z\right) }{2}  \sum^A_{i=1}\left\|\left(I + \sqrt{H_{k-1}^{-1}}\right) \hat{g}_\omega \left(\hat{\omega}^i_{k-1}, \hat{z}^i_{k-1} \right) \right\|^2 \\
        \ &+ 2\eta_z \sum^A_{i=1}\left\| \epsilon_z (\hat{\omega}_{k-1}^i, \hat{z}_{k-1}^i )\right\|^2  
        + C\eta_z^2 \sum^A_{i=1}\left\| \sqrt{H_{k-1}^{-1}} \hat{g}_z (\hat{\omega}_{k-1}^i, \hat{z}_{k-1}^i )\right\|^2  \,.
    \end{align*}
    Let $J_k$ denotes the terms from second term and $B_k = \Phi (\bar{\omega}_k) - L \left(\bar{\omega}_k, \tilde{z}_k \right)$. Notice that by the definition of envelop function $\Phi$, $B_k \geq 0 $ for any $k$. Summing over $N$ iterations, we have 
    \begin{align*}
        \frac{1}{N} \sum^N_{k=2} B_k 
        \leq \ & \left(1 - \mu \eta_z\right) \left(1 + \frac{3\eta_\omega C^2}{\mu}\right)  \frac{1}{N} \sum^N_{k=1} B_k  + \frac{\eta_z dG_z^2 \left(1 - \mu \eta_z\right) }{2} + \frac{1}{N} \sum^N_{k=2} J_k \\
        \leq \ & \left(1 - \mu \eta_z\right) \left(1 + \frac{3\eta_\omega C^2}{\mu}\right) \left( B_1 +  \frac{1}{N} \sum^N_{k=2} B_k  \right)+ \frac{\eta_z dG_z^2 \left(1 - \mu \eta_z\right) }{2} + \frac{1}{N} \sum^N_{k=2} J_k \,.
    \end{align*} 
    Rearranging the terms, 
    \begin{align*}
        \frac{1}{N} \sum^N_{k=2} B_k 
        \leq \ & \frac{B_1 }{N - N\left(1 - \mu \eta_z\right) \left(1 + \frac{3\eta_\omega C^2}{\mu}\right)} + \frac{\eta_z dG_z^2 \left(1 - \mu \eta_z\right) }{2} + \frac{1}{N} \sum^N_{k=2} J_k \,.
    \end{align*} 

    By the update rules of Algorithm \ref{alg:sgda} and Lemma \ref{lem:ada_gradient}, we have 
    \[\frac{1}{N} \sum^N_{k=2} J_k = \mathcal{O} \left(\frac{A\max \{\hat{G}_\omega^2, \hat{G}_z^2\} \ln \left(d N G^2 + \frac{1}{M} \sum^M_{m=1} \xi \rho^{2m} \right)}{N}\right) + \mathcal{O}\eta_\omega^2 AdG^2\,.\qedhere\]
\end{proof}
\subsection{Proof of Theorem \ref{thm:multi_thm}}
Recall that in Theorem \ref{thm:multi_thm}, we claimed that under Assumption \ref{asmp:pl}, \ref{asmp:wlipschitz}, \ref{asmp:bounded}, \ref{asmp:samples}, \ref{asmp:bounded_reward}, \ref{asmp:comm} and with Algorithm \ref{alg:sgda}, $\min \left\{\frac{1}{\beta}, 2(1 - \lambda), \sqrt{\frac{1}{adG^2 N}}\right\}$, $\eta_z = \min \left\{\frac{1}{C}, \frac{\mu }{3C^2 d G_z^2 A N} \right\}$,
    \begin{align*}
        \frac{1}{N} \sum^N_{k=2}  \sum^A_{i=1}  \left\| \nabla_\omega \Phi(\omega_k^i)\right\|^2 
        \leq \mathcal{O} \left(\frac{A}{(1 - \lambda) N } \right)
        + \mathcal{O} \left(\frac{A\max \{\hat{G}_\omega^2, \hat{G}_z^2\} \ln \left(d N G^2 + \frac{1}{M} \sum^M_{m=1} \xi \rho^{2m} \right)}{N \left(1 - \lambda\right)}\right) \,. 
    \end{align*}

\begin{proof}
    By Lemma \ref{lem:multi_lem1} and summing over $N$ iterations, for $i \in [A]$, there exists a constant $E_1 > 0$ such that 
    \begin{align*}
        &\frac{1}{N}\sum^N_{k=2} \Phi(\omega_k^i) - \Phi(\bar{\omega}_k) \\
        \leq \ & \frac{6\eta_\omega C^2}{\mu N} \sum^N_{k=2}\left(\Phi (\bar{\omega}_k) -  L \left(\bar{\omega}_k, \tilde{z}_k \right) \right) + \frac{3\eta_\omega C^2}{N} \sum^N_{k=2}\left\| \bar{\omega}_k -\tilde{\omega}_k\right\|^2 
        + \frac{\beta}{2N} \sum^N_{k=2}\left\|  \bar{\omega}_k - \omega_k^i \right\|^2 \\
        \ &+  \frac{3\eta_\omega C^2 }{AN} \sum^N_{k=2} \sum^A_{i=1} \left\| \tilde{\omega}_k - \hat{\omega}_k^i \right\|^2 +  \frac{3\eta_\omega C^2 }{AN} \sum^N_{k=2} \sum^A_{i=1} \left\| \tilde{z}_k - \hat{z}_k^i \right\|^2  
        + \frac{3\eta_\omega }{AN} \sum^N_{k=2} \sum^A_{i=1}  \left\| \epsilon \left(\hat{\omega}_k^i, \hat{z}_k^i\right)\right\|^2 \\
        \ &+ \frac{6\eta_\omega }{A N\left(1 - \lambda\right)}\sum^N_{k=2} \sum^A_{i=1}  \left\| \left(I + \sqrt{H_k^{-1}}\right) \hat{g}_\omega \left(\hat{\omega}^i_k, \hat{z}^i_k\right) \right\|^2 
        - \frac{\eta_\omega}{2N} \sum^N_{k=2}\left\| \nabla \Phi(\bar{\omega}_k)\right\|^2\\
        \leq \ &  \frac{6\eta_\omega C^2}{\mu N} \sum^N_{k=2}\left(\Phi (\bar{\omega}_k) -  L \left(\bar{\omega}_k, \tilde{z}_k \right) \right) + \frac{E_1 \cdot A\max \{\hat{G}_\omega^2, \hat{G}_z^2\} \ln \left(d N G^2 + \frac{1}{M} \sum^M_{m=1} \xi \rho^{2m} \right)}{N} \,,
    \end{align*}
    where the last inequality is by Lemma \ref{lem:ada_gradient} and the update rules of Algorithm \ref{alg:sgda}. 

    By Lemma \ref{lem:multi_bk}, for some constant $C_m$, we have 
    \begin{align*}
        \frac{1}{N}\sum^N_{k=2} \Phi(\omega_k^i) - \Phi(\bar{\omega}_k)
        \leq \ & \frac{6\eta_\omega C^2 B_1 }{\mu N - \mu N\left(1 - \mu \eta_z\right) \left(1 + \frac{3\eta_\omega C^2}{\mu}\right)} + \frac{6\eta_\omega C^2 \eta_z dG_z^2 \left(1 - \mu \eta_z\right) }{2 \mu} \\
        \ &- \frac{\eta_\omega}{2N} \sum^N_{k=2}\left\| \nabla \Phi(\bar{\omega}_k)\right\|^2 + \mathcal{O} \left(\frac{A\max \{\hat{G}_\omega^2, \hat{G}_z^2\} \ln \left(d N G^2 + \frac{1}{M} \sum^M_{m=1} \xi \rho^{2m} \right)}{N}\right) \\
        \ &+ C_m \eta_\omega^2 AdG^2\,.
    \end{align*}
    Define $P_k = \sum^A_{i=1} \left( \left\| \nabla_\omega \Phi(\omega_k^i)\right\|^2 - \left\| \nabla_\omega \Phi(\bar{\omega}_k)\right\|^2\right) + \frac{1}{\eta_\omega \left(1 - \lambda\right)} \sum^A_{i=1} \left(  \Phi(\omega_k^i) - \Phi(\bar{\omega}_k)\right)$. 
    Then summing over $P_k +  \left\| \nabla_\omega \Phi(\bar{\omega}_k)\right\|^2$, we have
    \begin{align*}
        \frac{1}{N} \sum^N_{k=2}  \sum^A_{i=1}  \left\| \nabla_\omega \Phi(\omega_k^i)\right\|^2 
        \leq \ & \frac{6C^2 B_1 A}{\left(\mu N - \mu N\left(1 - \mu \eta_z\right) \left(1 + \frac{3\eta_\omega C^2}{\mu}\right)\right)\left(1 - \lambda\right)} 
        + \frac{3C^2 \eta_z dG_z^2 A\left(1 - \mu \eta_z\right) }{\mu \left(1 - \lambda\right)} \\
        \ &- \frac{\eta_\omega A}{2N \left(1 - \lambda\right)} \sum^N_{k=2}\left\| \nabla \Phi(\bar{\omega}_k)\right\|^2 - \frac{A}{N} \sum^N_{k=2}\left\| \nabla_\omega \Phi(\bar{\omega}_k)\right\|^2 \\
        \ &+ \mathcal{O} \left(\frac{A\max \{\hat{G}_\omega^2, \hat{G}_z^2\} \ln \left(d N G^2 + \frac{1}{M} \sum^M_{m=1} \xi \rho^{2m} \right)}{N \left(1 - \lambda\right)}\right) + C_m \eta_\omega^2 AdG^2  \,.
    \end{align*}
    Since $0 < \lambda <1$, when $\eta_\omega = \min \left\{\frac{1}{\beta}, 2(1 - \lambda), \sqrt{\frac{1}{adG^2 N}}\right\}$, we have $\frac{\eta_\omega A}{2N \left(1 - \lambda\right)} \sum^N_{k=2}\left\| \nabla \Phi(\bar{\omega}_k)\right\|^2 \geq\frac{A}{N} \sum^N_{k=2}\left\| \nabla_\omega \Phi(\bar{\omega}_k)\right\|^2 $. 
    When $\eta_z = \min \left\{\frac{1}{C}, \frac{\mu }{3C^2 d G_z^2 A N} \right\}$, we have$\frac{3C^2 \eta_z dG_z^2 A\left(1 - \mu \eta_z\right) }{\mu \left(1 - \lambda\right)} \leq \frac{1}{N \left(1 - \lambda\right)}$.

    Combining the terms, we have the final result as
    \begin{align*}
        \frac{1}{N} \sum^N_{k=2}  \sum^A_{i=1}  \left\| \nabla_\omega \Phi(\omega_k^i)\right\|^2 
        \leq \mathcal{O} \left(\frac{A}{(1 - \lambda) N } \right)
        + \mathcal{O} \left(\frac{A\max \{\hat{G}_\omega^2, \hat{G}_z^2\} \ln \left(d N G^2 + \frac{1}{M} \sum^M_{m=1} \xi \rho^{2m} \right)}{N \left(1 - \lambda\right)}\right) \,. &\qedhere
    \end{align*}
\end{proof}

\section{More experiment details}
We study the performance of ASDGA with other commonly used adaptive gradient-based optimizers such as Adam and RMSProp. These adaptive gradient-based optimizers are commonly used in reinforcement learning algorithms and are known to be easier to tune compared to naive gradient descent. As is shown in Figure \ref{fig:overall}, our optimizer performs comparably well for most tasks and outperforms them some tasks such as Inverted Double Pendulum-v2. This highlights that our method is not only theoretically efficient but is also practically effective.  
\begin{figure*}[h]\centering 
\begin{subfigure}[t]{0.25\textwidth}
     \centering
     \includegraphics[width=\textwidth]{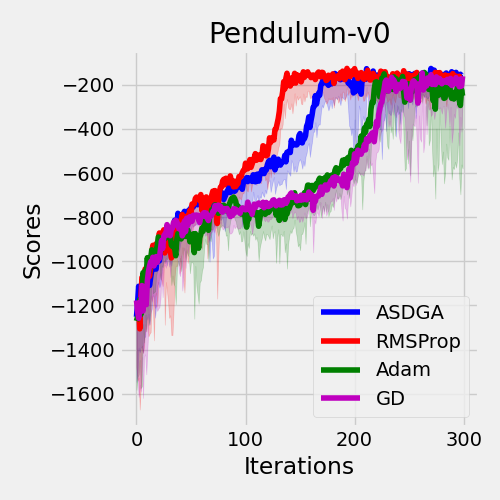}
     \caption{}
     \label{fig:pendulum}
\end{subfigure}
\begin{subfigure}[t]{0.25\textwidth}
     \centering
     \includegraphics[width=\textwidth]{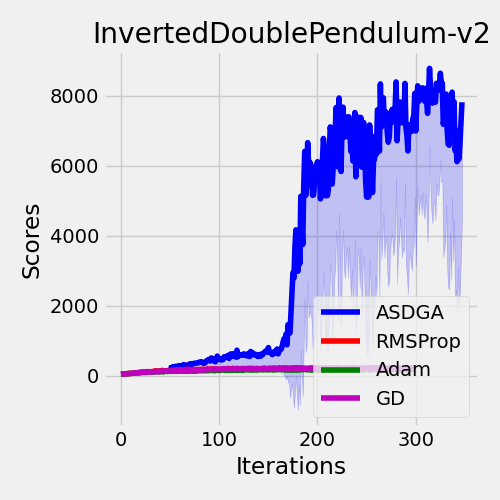}
     \caption{}
     \label{fig:double_pendulum}
\end{subfigure}
\begin{subfigure}[t]{0.25\textwidth}
     \centering
     \includegraphics[width=\textwidth]{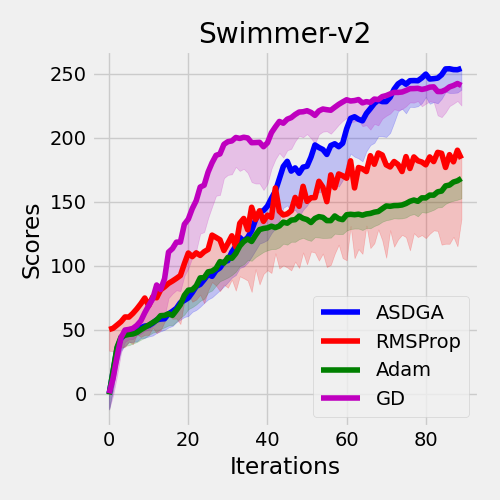}
     \caption{}
     \label{fig:swimmer}
\end{subfigure}
\begin{subfigure}[t]{0.25\textwidth}
     \centering
     \includegraphics[width=\textwidth]{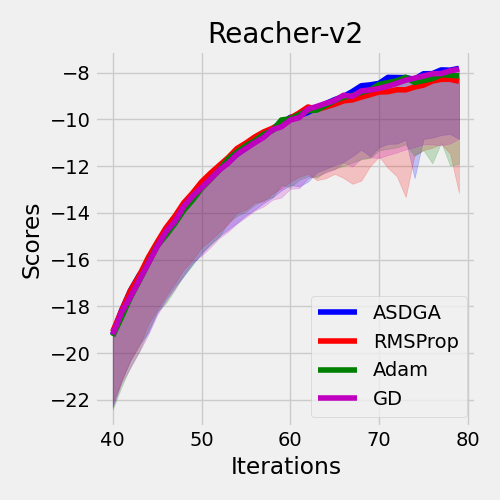}
     \caption{}
     \label{fig:reacher}
\end{subfigure}
\begin{subfigure}[t]{0.25\textwidth}
     \centering
     \includegraphics[width=\textwidth]{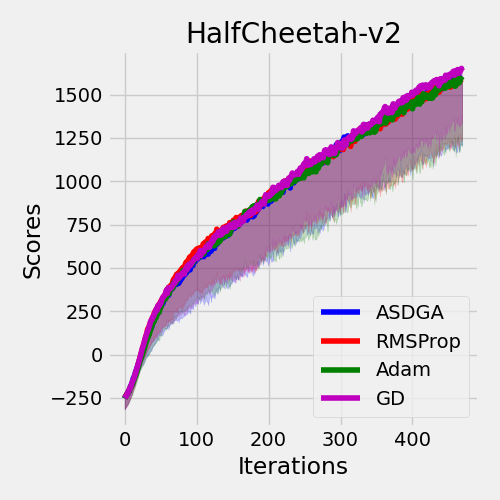}
     \caption{}
     \label{fig:halfcheetah}
\end{subfigure}
\begin{subfigure}[t]{0.25\textwidth}
     \centering
     \includegraphics[width=\textwidth]{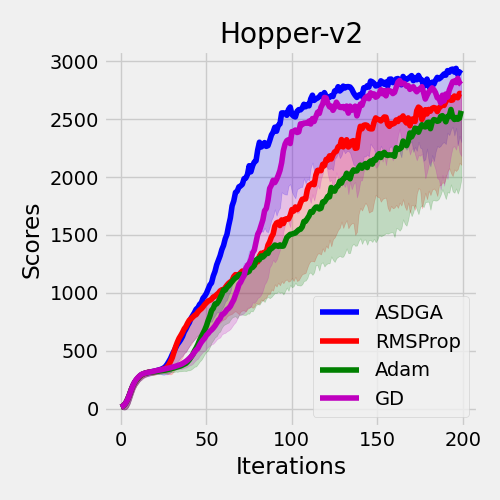}
     \caption{}
     \label{fig:hopper}
\end{subfigure}
\caption{Learning curves for different optimizers on six OpenAI Gym continuous control tasks.}\label{fig:overall}
\end{figure*}

\end{document}